\documentclass{article}

\usepackage{microtype}
\usepackage{graphicx}
\usepackage{subcaption}
\usepackage{booktabs} 

\usepackage{hyperref}
\usepackage{xurl}

\usepackage[preprint]{icml2026}

\usepackage{amsmath}
\usepackage{amssymb}
\usepackage{amsfonts}
\usepackage{mathtools}
\usepackage{amsthm}
\usepackage{dsfont}
\usepackage{algorithm}
\usepackage{algorithmic}

\newcommand{\one}{\mathds{1}}

\newcommand{\norm}[1]{\left\Vert#1\right\Vert}

\newcommand{\bbR}{\mathbb{R}}

\newcommand{\R}{\mathbb{R}}
\newcommand{\E}{\mathbb{E}}

\usepackage[capitalize,noabbrev]{cleveref}

\theoremstyle{plain}
\newtheorem{theorem}{Theorem}[section]

\newtheorem{lemma}[theorem]{Lemma}

\theoremstyle{definition}
\newtheorem{definition}[theorem]{Definition}
\newtheorem{assumption}[theorem]{Assumption}
\theoremstyle{remark}
\newtheorem{remark}[theorem]{Remark}

\usepackage[textwidth=1.8cm, textsize=tiny]{todonotes}
\icmltitlerunning{Demographic Parity Tails for Regression}

\begin{document}

\twocolumn[
  \icmltitle{Demographic Parity Tails for Regression
  }




  \icmlsetsymbol{equal}{*}

  \begin{icmlauthorlist}
    \icmlauthor{Sinh Le Nhat}{A1}
    \icmlauthor{Christophe Denis}{A2}
    \icmlauthor{Mohamed Hebiri}{A1}
  \end{icmlauthorlist}

    \icmlaffiliation{A1}{Universit\'e Gustave Eiffel, Paris, France}  
    \icmlaffiliation{A2}{Universit\'e  Paris 1 Panth\'eon-Sorbonne, France}

  \icmlcorrespondingauthor{Sinh Le Nhat}{nhat-sinh.le2@univ-eiffel.fr}

  \icmlkeywords{Fairness, Regression, Optimal transport, Partial fairness}

  \vskip 0.3in
]



\printAffiliationsAndNotice{}  

\begin{abstract}
Demographic parity (DP) is a widely studied fairness criterion in regression, enforcing independence between the predictions and sensitive attributes.
However, constraining the entire distribution can degrade predictive accuracy and may be unnecessary for many applications, where fairness concerns are localized to specific regions of the distribution. 
To overcome this issue, we propose a new framework for regression under DP that focuses on the tails of target distribution across sensitive groups. Our methodology builds on optimal transport theory. By enforcing fairness constraints only over targeted regions of the distribution, our approach enables more nuanced and context-sensitive interventions. Leveraging recent advances, we develop an interpretable and flexible algorithm that leverages the geometric structure of optimal transport. We provide theoretical guarantees, including risk bounds and fairness properties, and validate the method through experiments in regression settings.




\end{abstract}

\section{Introduction}
\label{sec:intro}

Fairness is now a major concern in today’s society. This is mainly due to the fact that unfair treatment often involves sensitive forms of discrimination against minorities, which must be mitigated. Algorithms have been identified as major contributors to such unfair treatment, since they can inherit or even amplify biases present in the data used to train them~\cite{calders2009building,zemel2013learning,barocas-hardt-narayanan}.
Consequently, fairness has emerged as a central topic in the machine learning community, as illustrated by the growing literature in the field~\cite{lum2016statistical,zafar2017fairness,barocas-hardt-narayanan, jiang2019wasserstein,chiappa2020general,feldman2015certifying,gordaliza2019obtaining,hardt2016equality,dwork2012}.
In the supervised setting, extensive studies have demonstrated that unfairness can be mitigated through pre-, in-, or post-processing strategies, and several theoretically grounded approaches have emerged~\citep{agarwal2019fairreg,Chzhen_Denis_Hebiri_Oneto_Pontil19,chiappa2020general,chzhen2020minimax, pmlr-v206-gaucher23a, gouic2020price,denis2024multiclassdp}. An important framework for fairness is regression under \emph{Demographic Parity (DP)}, a popular notion of fairness that requires predictions to be independent of a sensitive attribute, such as race, gender, or income level~\cite{hardt2016equality,chiappa2020general}. Such a requirement acts on the entire distribution of the prediction random variable—or, equivalently, leads to a modified prediction function—and often has the drawback of degrading overall prediction accuracy.

In the present contribution, we instead focus on local interventions. In particular, our goal is to ensure fair prediction or treatment for all individuals beyond a specific range of prediction values, regardless of the sensitive group to which they belong.
For instance, in clinical risk prediction, a hospital may rely on a model to identify high-risk patients requiring intensive monitoring.
In this context, fairness must hold among individuals assigned high predicted risk (\emph{e.g.,} above a critical risk threshold), ensuring comparable treatment across demographic groups.
Similarly, in resource allocation problems such as the prioritization of applicants for subsidized housing or financial aid, fairness may be required primarily among top-ranked individuals who are eligible to receive the resource.
As a second type of application, one might wish to control the proportion of individuals exceeding this threshold. In both contexts, this leads to imposing fairness above a threshold defined as a particular quantile of the prediction distribution, ensuring that no sensitive group is disproportionately represented among high-risk patients or selected beneficiaries.


The framework we propose can be seen as \emph{DP-tails fair prediction}. One of its main benefits is that it reduces the loss in prediction accuracy that typically arises when enforcing global fairness. This issue is commonly addressed through the framework of approximate fairness~\cite{dwork2012,agarwal2019fairreg}, which could complement our study. However, our motivation is different: we aim to incorporate fairness in a localized manner, and we believe that our framework provides a more precise understanding of the DP constraint in the regression setting.

\paragraph{Contributions.} 
The contributions of this work are threefold: \textbf{i)} We introduce the problem of  Demographic Parity \emph{tails}, where the goal is to enforce DP only above a given threshold in the space of prediction values. We consider two settings: one in which the proportion of predictions exceeding the threshold is fixed, and another in which this proportion is optimized; \textbf{ii)} Using tools from Optimal Transport (OT), we derive an explicit expression of the optimal solution to the regression problem under DP-\emph{tails} and propose a data-driven procedure based on this solution. Our algorithm acts as a post-processing step applied to any base regression function; \textbf{iii)} We provide theoretical guarantees on both the risk of our algorithm and its level of unfairness under an appropriate fairness notion. We additionally conduct experiments on synthetic and real datasets to demonstrate its effectiveness.

\paragraph{Related work.}
The work most closely related to ours is~\cite{chzhen2020fairwb,gouic2020price}, which studies regression under DP and leverages optimal transport arguments to construct both the fair target distribution and the corresponding prediction function. In the same spirit, our approach relies on reformulating the regression problem with a DP-\emph{tails} constraint as an optimal transport problem.
However, since we act only on a subset of the prediction space and on partial segments of the distributions, the connection to OT is more involved in our setting and requires more delicate manipulations of quantiles. Moreover, our formulation allows us to handle different objectives, such as enforcing fairness over a range of prediction values or over a range of quantiles of the prediction random variable. Notably, by localizing the constraint, we avoid some of the negative effects of global DP enforcement, such as deterioration in prediction accuracy.

Local enforcement of DP has recently been considered in~\cite{he2025enforcing}. There, the authors introduce a notion of fairness, called \emph{partial-DP}, defined over specific quantile intervals of the prediction distribution. Their solution is based on an in-processing approach and involves discretization of the prediction space. Our work differs in several aspects. First, our method is a post-processing approach, making it suitable when an accurate but unfair base predictor has already been trained, or when unlabeled data are readily available, since our post-processing step only requires unlabeled samples. Second, our method does not rely on discretization, as it leverages optimal transport techniques. Finally, our algorithm is supported by theoretical guarantees on both prediction risk and fairness level. Let us nonetheless mention that our notion of fairness and theirs are complementary, and coincide only in a very narrow regime of our problem.


More broadly, our work falls within the literature on regression under DP constraints, which is the best theoretically understood framework for fairness; see, \emph{e.g.,}~\cite{agarwal2019fairreg,Chzhen_Denis_Hebiri_Oneto_Pontil19,chzhen2020minimax}. We conclude by noting that fairness constraints can be considered in either an aware or unaware setting. In this work, we focus on the aware setting, meaning that the sensitive attribute is available at prediction time.

\paragraph{Notation.}
Given positive sequences $(a_n)$ and $(b_n)$, we write $a_n \lesssim b_n$ if there exists $c>0$ such that $a_n \le c\, b_n$ for all $n$.
For a finite set $\mathcal{S}$, $|\mathcal{S}|$ denotes its cardinality.
The symbol $\mathbf{supp}(\nu)$ denotes the support of a nonnegative measure~$\nu$.
For a univariate probability measure $\mu$, $F_\mu$ denotes its cumulative distribution function (CDF) and $Q_\mu:[0,1]\to\mathbb{R}$ its quantile function, defined for $t\in(0,1]$ by
$ 
Q_\mu(t)=\inf\{y\in\mathbb{R}:F_\mu(y)\ge t\}$ and $ Q_\mu(0)=Q_\mu(0+).$ Finally we set $O(\cdot)$ for the big-$O$ notation.


\section{Statistical setting and optimal rule}
\label{sec:statSetting}
In this section, we introduce the main notation and describe our strategy for provide the optimal rule that minimizes the quadratic risk under our new partial notion of fairness

\paragraph{Regression under DP-tails.}
We study the regression model given by
\begin{equation*}
Y=f^*(X, S)+\varepsilon \enspace ,
\end{equation*}
where $(X,S)$ consists of a feature vector $X$ together with a sensitive attribute $S$ and $\varepsilon$ is a noise random variable with finite second moment and such that $\mathbb{E}\left[\varepsilon|X,S\right] = 0$. The tuple $(X,S,Y)$ follows the joint distribution $\mathbb{P}$ on $\mathbb{R}^d \times \mathcal{S} \times \mathbb{R}$, where $\mathcal{S}$ is a finite set. Moreover, the function $f^* : \mathbb{R}^d \times \mathcal{S} \to \mathbb{R}$ represents the regression rule minimizing the mean squared risk defined as
\begin{equation}
\label{eq:risk}
R(f) = \mathbb{E}\left[(Y-f(X,S))^2\right]    \enspace .
\end{equation}
Given a value of the sensitive attribute $s\in \mathcal{S}$ and any regressor (predictor)  $f: \mathbb{R}^d \times \mathcal{S} \to \mathbb{R}$, let $\nu_{f|s}$ denote the conditional law of $f(X,S)$ given $S=s$ and we will abbreviate the notation by writing $F_{f|s}$ and $Q_{f|s}$ respectively for the corresponding conditional CDF and quantile function. 
To build on our theoretical analysis, we require the following regularity assumption that in particular ensures that the our optimal solution under DP-tails fairness is well defined.
\begin{assumption}[Density]
\label{ass:density}
We assume that  $\nu_{f^*|s}$ has finite second moment and admits a density for each $s\in\mathcal{S}$.
\end{assumption} 

We now turn to the concept of DP-tails fairness, which aims to reduce the effect of the sensitive attribute $S$ by enforcing fairness only on a specified portion of the distribution. In particular, we focus on the tail of the distribution and impose that the fraction of mass beyond some threshold $\alpha\in \mathbb{R}$ is equally distributed across groups.
\begin{definition}[\emph{Demographic Parity Tails}]
\label{def:PartialDPGeneral}
    For each $\alpha\in \mathbb{R}  $ and $p\in [0,1]$, a regressor $g:\mathcal{X}\times \mathcal{S}\to\mathbb{R}$ is said \textrm{$(\alpha,p)-$ DP-tails fair} if for all $s,s'\in \mathcal{S}$ 
\begin{align*}
     \left.\mathbb{P}(g(X,S)\leq t\right|S=s)=\mathbb{P}( g(X,S)\leq t|S=s')~,
    \end{align*} 
    for all $t\geq\alpha$ and if moreover, for all $s\in\mathcal{S}$ 
    $$
    F_{g|s}(\alpha)=p ~.$$ 
In the above expressions, $\alpha$ is called \textit{fairness threshold} and $p$ is called \textit{unfairness proportion}. We denote by $\mathcal{F}^p_\alpha$ the class of all $(\alpha,p)-$DP-tails fair regressors.
\end{definition} 
From the definition, we observe that the values of the CDFs $F_{g|s}$  are identical on the interval $[\alpha, +\infty)$ for every $s \in \mathcal{S}$. In the limit case where $\alpha\to-\infty$ and $p=0$, this constraint coincides to classical DP constraint that asks equality of the laws $\nu_{g|s}$ for all $s\in\mathcal{S}$. The second condition is motivated by two aspects. First, it allows to calibrate the proportion of fair predictions (\emph{i.e.,} $1-p$) which is suitable for some applications. In addition, from the technical perspective, this condition allows to explicit the form of an optimal rule.

Our central problem is to find an optimal $(\alpha,p)-$DP-tails fair predictor that minimizes the following problem 
\begin{equation*}
   \inf _{g \in \mathcal{F}^p_\alpha}~ \mathbb{E}|Y-g(X, S)|^2 \enspace. 
 \end{equation*} 
Obviously, since the noise $\varepsilon$ is such that  $\mathbb{E}\left[\varepsilon|X,S\right] = 0$, the above problem is equivalent to solving

\begin{equation*}
    (\mathcal{P}):~\inf _{g \in \mathcal{F}^p_\alpha}\mathbb{E}\left[(f^*(X,S)-g(X,S))^2\right]\enspace.
\end{equation*} 
This step is trivial but is fundamental in our methodology. In particular, it highlights the central role of Bayes predictor $f^*$ in order to build our optimal fair predictor. Moreover, this is the first step to establish the connection between the regression under DP-tails  and solving an OT problem. 

To simplify the notation, for any regressor/predictor $g$,   we introduce the shorthand
\begin{align*}
    \mathcal{E}(g):=\mathbb{E}\left[(f^*(X,S)-g(X,S))^2\right]\enspace.
\end{align*}


\paragraph{Characterization of the optimal DP-tails fair predictor.}
In this section, we introduce our strategy to build the optimal solution for problem $(\mathcal{P})$.
Our analysis relies on the concept of optimal transport between probability measures. For completeness, we briefly recall the definition of the  $2-$Wasserstein distance.
\begin{definition}[\emph{$2-$Wasserstein distance}]
    Let $\mu$ and $\nu$ be two  probability measures on $\mathbb{R}$ which have finite second moment and let $\Pi(\mu, \nu)$ denote the set of joint probability measures on $\mathbb{R} \times \mathbb{R}$ with marginals given by $\mu$ and $\nu$. The $2-$Wasserstein distance between $\mu$ and $\nu$ is defined as
\begin{eqnarray*}
 \mathcal{W}_2^2(\mu, \nu)  := \min _{\pi \in \Pi(\mu, \nu)} \int|x-y|^2 d \pi(x, y) \enspace .
\end{eqnarray*}
\end{definition}
Importantly, the characterization of the optimal solution for $(\mathcal{P})$ 
can be decoupled into two parts. Indeed, recall Definition~\ref{def:PartialDPGeneral} and let  $(\alpha,p)\in \mathbb{R} \times  [0,1]$. One can then study the expression of the solution by considering the part on $(-\infty,\alpha]$ where we do not enforce any constraint, and the other on $(\alpha,+\infty)$ where we have to handle the fairness constraint. 
This motivates the introduction of $\mathcal{M}^p_\alpha$, the set of all family of distributions $\left(\nu^++\nu^-_s\right)_{s\in\mathcal{S}}$ such that 
\begin{enumerate}
   \item[(i)] $\nu^+$ is a positive measure on $\mathbb{R}$ such that $\textbf{supp}(\nu^+)\subset (\alpha,+\infty)$,
   \item[(ii)]  for all $s\in \mathcal{S}$,
   $\nu^-_s$ is a positive measure on $\mathbb{R}$ such that $\textbf{supp}(\nu^-_s)\subset (-\infty,\alpha]$,
   \item[(iii)] For all $s\in \mathcal{S}$, $\nu^+(\mathbb{R})+\nu_s^-(\mathbb{R})=1$ and $\nu_s^-(\mathbb{R})=p.$
 \end{enumerate} 
 The last condition on the set $\mathcal{M}^p_\alpha$ is designed to ensure that $F_{g|s}(\alpha)=p$ for all $s\in \mathcal{S}$, as required by Definition~\ref{def:PartialDPGeneral}.
We are now ready to establish a connection between the optimal DP-tails fair predictors and the Wasserstein barycenter problem.
\begin{theorem}\label{BaryCenter}
Under Assumption~\ref{ass:density}, we have
\begin{align*}
     \inf_{g\in \mathcal{F}^p_\alpha}\mathcal{E}(g) 
    = \inf_{\left(\nu^++\nu^-_s\right)_{s\in\mathcal{S}}\in \mathcal{M}_\alpha^p}\sum_{s\in \mathcal{S}}p_s\mathcal{W}_2^2(\nu_{f^*|s},\nu^++\nu^-_s),
\end{align*} 
where $p_s=\mathbb{P}(S=s)$ for each $s\in\mathcal{S}$ are the groups proportion.  
\end{theorem}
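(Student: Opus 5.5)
We prove the equality by establishing two inequalities, following the classical argument of \cite{chzhen2020fairwb,gouic2020price} for full DP. The key observation is that $g\in\mathcal{F}^p_\alpha$ if and only if the family of conditional laws $(\nu_{g|s})_{s\in\mathcal{S}}$ belongs to $\mathcal{M}^p_\alpha$. Indeed, given $g\in\mathcal{F}^p_\alpha$, set $\nu^-_s:=\nu_{g|s}\big|_{(-\infty,\alpha]}$ and $\nu^+:=\nu_{g|s}\big|_{(\alpha,+\infty)}$. The latter does not depend on $s$, because the CDFs $F_{g|s}$ coincide on $[\alpha,+\infty)$, so the restricted measures agree on all intervals $(\alpha,t]$ and hence everywhere. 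The condition $F_{g|s}(\alpha)=p$ gives $\nu^-_s(\mathbb{R})=p$. Conversely, if $\nu_{g|s}=\nu^++\nu^-_s$ with $(\nu^++\nu^-_s)_{s}\in\mathcal{M}^p_\alpha$, then $F_{g|s}(t)=p+\nu^+((\alpha,t])$ for $t\ge\alpha$. This expression is independent of $s$ and equals $p$ at $t=\alpha$, so $g\in\mathcal{F}^p_\alpha$.

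\textbf{Lower bound.} Fix $g\in\mathcal{F}^p_\alpha$ and write $\mathcal{E}(g)=\sum_s p_s\,\mathbb{E}[(f^*(X,s)-g(X,s))^2\mid S=s]$. Conditionally on $S=s$, the pair $(f^*(X,s),g(X,s))$ is a coupling of $\nu_{f^*|s}$ and $\nu_{g|s}$, so each conditional term is at least $\mathcal{W}_2^2(\nu_{f^*|s},\nu_{g|s})$. We may assume $\mathcal{E}(g)<\infty$, so $\nu_{g|s}$ has a finite second moment and $\mathcal{W}_2$ is well defined. By the equivalence above, $(\nu_{g|s})_s\in\mathcal{M}^p_\alpha$. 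Hence $\mathcal{E}(g)\ge\sum_sp_s\mathcal{W}_2^2(\nu_{f^*|s},\nu_{g|s})$, which is at least the right-hand infimum.

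\textbf{Upper bound.} Take any family $(\nu^++\nu^-_s)_s\in\mathcal{M}^p_\alpha$ with finite objective; otherwise there is nothing to prove. Write $\mu_s:=\nu^++\nu^-_s$ and define
\[
g(x,s):=Q_{\mu_s}\circ F_{f^*|s}(f^*(x,s)).
\]
By Assumption~\ref{ass:density}, $\nu_{f^*|s}$ has no atoms, so $F_{f^*|s}(f^*(X,s))$ is uniform on $(0,1)$ given $S=s$. Therefore $\nu_{g|s}=\mu_s$, and by the equivalence above $g\in\mathcal{F}^p_\alpha$. Moreover, the monotone rearrangement $Q_{\mu_s}\circ F_{f^*|s}$ is the optimal transport map in one dimension. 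This gives
\[
\mathbb{E}[(f^*(X,s)-g(X,s))^2\mid S=s]=\int_0^1\bigl(Q_{f^*|s}(u)-Q_{\mu_s}(u)\bigr)^2du=\mathcal{W}_2^2(\nu_{f^*|s},\mu_s).
\]
Summing with weights $p_s$ and taking the infimum over $\mathcal{M}^p_\alpha$ yields the reverse inequality.

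\textbf{Main obstacle.} The delicate point is the measure-theoretic bookkeeping in the equivalence between $\mathcal{F}^p_\alpha$ and $\mathcal{M}^p_\alpha$. First, we must show that the restriction to $(\alpha,\infty)$ is common to all groups: equality of CDFs on $[\alpha,\infty)$ determines the measure of every interval $(\alpha,t]$, and the $\pi$–$\lambda$ theorem then extends this to all Borel sets. Second, we must handle a possible atom at $\alpha$, which the closed/open split assigns to $\nu^-_s$, consistent with $F(\alpha)=p$. The remaining steps are the well-known one-dimensional optimal transport facts: the quantile coupling is optimal, and the quantile transform pushes $\nu_{f^*|s}$ forward to $\mu_s$ when $\nu_{f^*|s}$ is atomless. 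Finite second moments of $\mu_s$ follow from finiteness of $\mathcal{W}_2$ to $\nu_{f^*|s}$, which has a finite second moment.
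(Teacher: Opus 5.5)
Your proposal is correct and follows essentially the same route as the paper. The paper splits the argument into Lemma~\ref{le:1}, which gives the coupling lower bound and realizes the transport cost by a map applied to $f^*$, and Lemma~\ref{le:2}, which identifies $\mathcal{M}^p_\alpha$ with the families $(\nu_{g|s})_{s\in\mathcal{S}}$, $g\in\mathcal{F}^p_\alpha$, using the same restriction to $(-\infty,\alpha]$ and $(\alpha,+\infty)$ and the same construction $Q_{\nu^++\nu^-_s}\circ F_{f^*|s}\circ f^*$; the only cosmetic differences are that you use this monotone rearrangement directly as the optimal map instead of an abstract optimal transport map $T_s$, and that you are more explicit about the $\pi$--$\lambda$ extension and finiteness of second moments.
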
 
In line with prior studies about fairness regression in \cite{chzhen2020fairwb}, the problem $(\mathcal{P})$ can be express as a Wasserstein barycenter problem --- though in the different fashion. In particular, this result highlights the atypical characterization of the optimal solution though the space of measures $\mathcal{M}^p_\alpha$ and requires a careful study on the space of quantiles. Indeed, the expression of the optimal predictor $g^*_{\alpha,p}$ --- provided below in Theorem~\ref{thr:optimalPredict} --- highly relies on 
the position of the threshold $\alpha$ \emph{w.r.t.} the average quantile $\sum_{s\in\mathcal{S}}p_sQ_{f^*|s}(p)$.
Then we introduce the following two key quantities define for all $s \in \mathcal{S}$, and $\xi >0$:
$$Q^*_s(t):=\left\{\begin{array}{ll}
     \min\left\{\alpha,  Q_{f^*|s}(t)\right\} & \text{if $t\in[0,p]$}, \\
       \max\left\{\alpha,\sum\limits_{s\in\mathcal{S}}p_sQ_{f^*|s}(t)\right\}& \text{if $t\in(p,1]$};
  \end{array}\right.
  $$
  $$
  Q^\xi_s(t):=\left\{\begin{array}{ll}
     \min\left\{\alpha,  Q_{f^*|s}(t)\right\} & \text{if $t\in[0,p]$}, \\
       \max\left\{\alpha+\xi,\sum\limits_{s\in\mathcal{S}}p_sQ_{f^*|s}(t)\right\} 
       & \text{if $t\in(p,1]$}.
  \end{array}\right.
  $$ 
%
%
with $p_s=\mathbb{P}(S=s)$. We can now state our main result.

\begin{theorem}[Optimal DP-tails fair prediction]
\label{thr:optimalPredict} 
Under Assumption~\ref{ass:density}, the problem $(\mathcal{P})$ attains its infimum at $g^*_{\alpha,p}$ with 
$$
g^*_{\alpha,p}(x,s)=Q^*_s\circ F_{f^*|s}\circ f^*(x,s) \enspace,
$$ 
for all $(x.s)\in\mathbb{R}^d\times \mathcal{S}$ and the Wasserstein barycenter problem attains its infimum at the family of conditional distributions of $\left(\nu_{g^*_{\alpha,p}|s}\right)_{s\in\mathcal{S}}$.
\\ 
Moreover, if $\alpha\leq \sum_{s\in\mathcal{S}}p_sQ_{f^*|s}(p)$ then $g^*_{\alpha,p}\in\mathcal{F}^p_\alpha$, that means the problem $(\mathcal{P})$ admits an optimal solution.  
Otherwise, the problem $(\mathcal{P})$ does not admit minimum.

On the other hand, let $\xi>0$ and define $$
g^{*,\xi}_{\alpha,p}(x,s)=Q^\xi_s\circ F_{f^*|s}\circ f^*(x,s)\enspace,
$$ 
for all $(x.s)\in\mathbb{R}^d\times \mathcal{S}$. Then $g^{*,\xi}_{\alpha,p} \in \mathcal{F}^p_\alpha$, and
\begin{align*}
\left|\mathcal{E}\left(g^{*,\xi}_{\alpha,p}\right) 
-\mathcal{E}\left(g^{*}_{\alpha,p}\right) \right|= O(\xi)\quad (\text{as $\xi\to 0$}).
\end{align*} 
\end{theorem}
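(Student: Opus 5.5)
The plan is to work entirely on the quantile side, using Theorem~\ref{BaryCenter} to replace $(\mathcal{P})$ by the barycenter-type problem over $\mathcal{M}^p_\alpha$. In dimension one, for $\nu_{f^*|s}$ with a density (Assumption~\ref{ass:density}), we have $\mathcal{W}_2^2(\nu_{f^*|s},\mu)=\int_0^1 (Q_{f^*|s}(t)-Q_\mu(t))^2\,dt$. The optimal coupling is the monotone map $Q_\mu\circ F_{f^*|s}$, and $F_{f^*|s}(f^*(X,S))$ is uniform given $S=s$. For $\mu_s=\nu^++\nu^-_s\in\mathcal{M}^p_\alpha$, the constraints translate into conditions on quantile functions. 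First, $Q_{\mu_s}(t)\le \alpha$ for $t\in(0,p]$, with $Q_{\mu_s}$ arbitrary nondecreasing there. Second, for $t\in(p,1]$, $Q_{\mu_s}(t)=q(t)$ is a common nondecreasing function. Third, $q(t)>\alpha$ on $(p,1]$, since $\nu^+$ is supported in the open half line. The objective therefore splits into the sum of two problems:
\[
\sum_s p_s\int_0^p (Q_{f^*|s}-Q_{\mu_s})^2\,dt \;+\; \int_p^1 \sum_s p_s (Q_{f^*|s}-q)^2\,dt .
\]

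The first piece decouples in $s$. Pointwise projection onto $(-\infty,\alpha]$ gives $\min\{\alpha,Q_{f^*|s}(t)\}$, which is nondecreasing and hence a valid quantile function, so it is optimal. For the second piece, write $\sum_s p_s(Q_{f^*|s}-q)^2=(q-\bar Q)^2+\text{const}$ with $\bar Q=\sum_s p_sQ_{f^*|s}$. We minimize $\int_p^1 (q-\bar Q)^2$ over nondecreasing $q$ subject to $q>\alpha$. Relaxing to the closed constraint $q\ge\alpha$, the pointwise projection $\max\{\alpha,\bar Q\}$ is nondecreasing, and therefore attains the infimum of the relaxed problem. The infimum over the open constraint equals this value, by approximating with $\max\{\alpha+\xi,\bar Q\}$. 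This yields $Q^*_s$. Composing with $F_{f^*|s}$ gives a predictor whose conditional law has quantile function $Q^*_s$, with risk $\sum_s p_s\int_0^1(Q_{f^*|s}-Q^*_s)^2$. This value matches the lower bound given by Theorem~\ref{BaryCenter}, which proves the infimum formula and identifies the optimal laws.

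For membership in $\mathcal{F}^p_\alpha$, we need $F_{g|s}(\alpha)=p$ exactly and equal CDFs on $[\alpha,\infty)$. The second condition holds because the upper parts coincide. For the first, $\bar Q$ is continuous, since each $\nu_{f^*|s}$ has a density, which makes its quantile function strictly increasing, though possibly with jumps. Suppose $\alpha\le\bar Q(p)$; I would use the stated inequality with care, together with strict increase of $\bar Q$. Then $\bar Q(t)>\alpha$ for $t>p$, so no upper mass sits at $\alpha$ and $F_{g|s}(\alpha)=p$ holds. If instead $\alpha>\bar Q(p)$, then $q=\alpha$ on a nonempty interval $(p,p+\delta)$. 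This puts an atom at $\alpha$ of mass $\delta$, and $F_{g|s}(\alpha)>p$ violates the constraint. For non-attainment in this case, I would argue that the relaxed minimizer is unique in $L^2$ by strict convexity. Any minimizer of $(\mathcal{P})$ would induce, through Theorem~\ref{BaryCenter}, admissible quantiles achieving the same value, and these would have to equal $Q^*_s$ almost everywhere. That contradicts $q>\alpha$ on $(p,p+\delta)$. Making this rigorous requires that any $g\in\mathcal{F}^p_\alpha$ satisfy $\mathcal{E}(g)\ge\sum_s p_s\mathcal{W}_2^2(\nu_{f^*|s},\nu_{g|s})$, with equality only for monotone couplings; I expect this to be the main delicate step.

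For the $\xi$-approximation, the quantiles $Q^\xi_s$ are nondecreasing, equal on $(p,1]$, and take values $\ge \alpha+\xi>\alpha$ there, while staying $\le\alpha$ on $[0,p]$. Hence $g^{*,\xi}_{\alpha,p}\in\mathcal{F}^p_\alpha$. Its risk is computed by the same quantile formula. The two predictors differ only on the set where $\bar Q<\alpha+\xi$ on $(p,1]$, and there $|Q^\xi_s-Q^*_s|\le\xi$. Expanding the squares gives
\[
|\mathcal{E}(g^{*,\xi})-\mathcal{E}(g^*)|\le \sum_s p_s\int_p^1 \bigl(2\xi|Q_{f^*|s}-Q^*_s|+\xi^2\bigr)\,dt .
\]
By Cauchy--Schwarz and the finite second moments, this is $O(\xi)$.
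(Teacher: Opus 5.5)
Your argument follows the paper's route: the quantile reformulation via Theorem~\ref{BaryCenter}, splitting at $p$, clipping $Q_{f^*|s}$ at $\alpha$ from above on $[0,p]$, reducing the upper part to $\bar Q=\sum_{s\in\mathcal{S}}p_sQ_{f^*|s}$ and clipping it at $\alpha$ from below, then the $\xi$-shift. Your Cauchy--Schwarz bound for the $O(\xi)$ term is a clean substitute for the paper's case analysis on $t_\alpha,t^\xi_\alpha$. Your projection argument supplies the non-attainment claim, which the paper's proof states without argument. The step you flag as delicate is not needed. Write $\mathcal{L}(Q)=\sum_{s}p_s\int_0^1(Q_{f^*|s}-Q_s)^2$. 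If some $g\in\mathcal{F}^p_\alpha$ attained the infimum, then
\[
\mathcal{L}(Q^*)=\mathcal{E}(g)\ \ge\ \sum_{s\in\mathcal{S}}p_s\mathcal{W}_2^2(\nu_{f^*|s},\nu_{g|s})=\mathcal{L}\bigl((Q_{\nu_{g|s}})_{s\in\mathcal{S}}\bigr)\ \ge\ \mathcal{L}(Q^*).
\]
The first inequality is just the definition of $\mathcal{W}_2$, since given $S=s$ the pair $(f^*(X,s),g(X,s))$ is a coupling. The last holds because $(Q_{\nu_{g|s}})_s$ is admissible. So an admissible quantile family attains the value, and uniqueness of the $L^2$ projection onto $\{q\ge\alpha\}$ forces its common upper part to equal $\max\{\alpha,\bar Q\}$ a.e.\ on $(p,1]$. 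This contradicts $q>\alpha$ wherever $\bar Q<\alpha$, provided that set has positive measure. No equality case for couplings enters.

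That proviso is where the proposal breaks. The claim ``if $\alpha>\bar Q(p)$ then $q=\alpha$ on some $(p,p+\delta)$'' needs $\bar Q(p^{+})<\alpha$, i.e.\ right-continuity of $\bar Q$ at $p$. Assumption~\ref{ass:density} does not give this: a density may vanish on an interval, so $Q_{f^*|s}$ can jump, as you note yourself. Only strict monotonicity of $\bar Q$ is available; that is all your attainment direction uses, but calling $\bar Q$ continuous is wrong. In fact the clause is false in that regime. Let $f^*(X,s)$ be uniform on $[0,1]\cup[2,3]$ for both groups, with $p=1/2$ and $\alpha=3/2$. Then $\bar Q(p)=1<\alpha<2=\bar Q(p^{+})$, $Q^*_s=Q_{f^*|s}$, and $f^*\in\mathcal{F}^{1/2}_{3/2}$ with $\mathcal{E}(f^*)=0$, so the minimum is attained. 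The same happens for $p=1$, where $(p,1]$ is empty and $\min\{\alpha,f^*\}$ is always a minimizer. Hence no argument can close this step as stated. What your proof does establish is the corrected dichotomy. When $\alpha\le\bar Q(p^{+})$ the minimum is attained, since strict increase gives $\bar Q>\alpha$ on $(p,1]$. When $\alpha>\bar Q(p^{+})$ and $p<1$ it is not, by your projection argument. This coincides with the stated condition whenever $\bar Q$ is continuous at $p$, e.g.\ if each $\nu_{f^*|s}$ is supported on an interval.
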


Theorem~\ref{thr:optimalPredict} highlights the importance of the condition $\alpha\leq \sum_{s\in\mathcal{S}}p_sQ_{f^*|s}(p)$. When it is satisfied, Problem~$(\mathcal{P})$ can be solved explicitly and it is possible to construct a sequence of predictors achieving the minimum of the $L_2-$ risk. In particular, the solution looks like what we actually expect:
Figure~\ref{fig:enter-label1} illustrates the somehow simplest situation where we observe that enforcing DP-tails fairness only relies on thresholding the quantiles $Q_{f^*|s}(\cdot)$ on the segment $[0, p]$ and on averaging the quantiles on $(p,1]$.

In contrast, when $\alpha > \sum_{s\in\mathcal{S}}p_sQ_{f^*|s}(p)$, Problem~$(\mathcal{P})$ does not always admit a minimizer, however, it is still possible to construct a sequence of predictors whose $L_2-$ risk converges to the infimum. Figure~\ref{fig:e1} explains what happens in this case. Indeed, since we have the condition $F_{g|s}(\alpha)=p $ in Definition~\ref{def:PartialDPGeneral}, we need to threshold from below the average $\sum_{s\in\mathcal{S}}p_sQ_{f^*|s}(t)$ for $t>p$ and to enforce a small correction $\xi$ so that we do not change the value of the quantile of the optimal solution at point $p$.

\begin{figure}[h]
        \centering
        \includegraphics[width=0.5\linewidth]{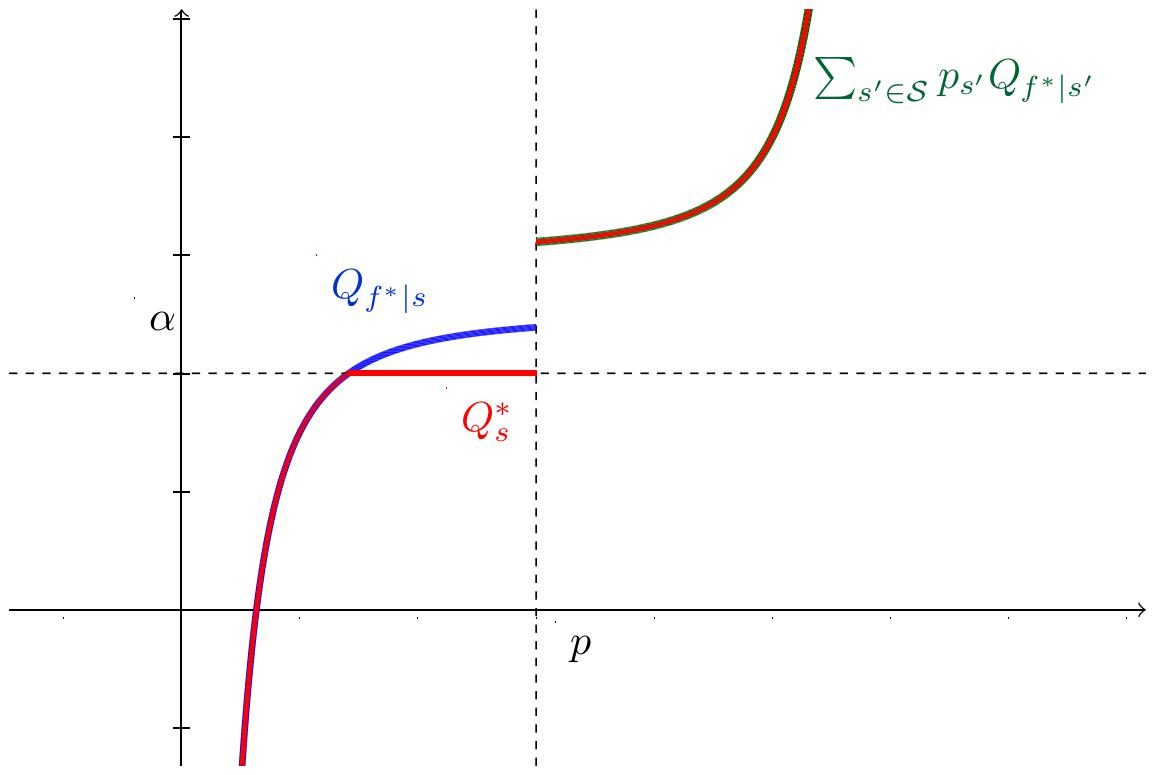}
        \caption{Illustration for $t\mapsto Q^*_s(t)$ for some $s\in \mathcal{S}$ in the case where $\alpha\leq \sum\limits_{s\in\mathcal{S}}p_sQ_{f^*|s}(p)$ (red curve).}
        \label{fig:enter-label1}
    \end{figure}

        \begin{figure}[h]
            \centering
            \includegraphics[width=0.5\linewidth]{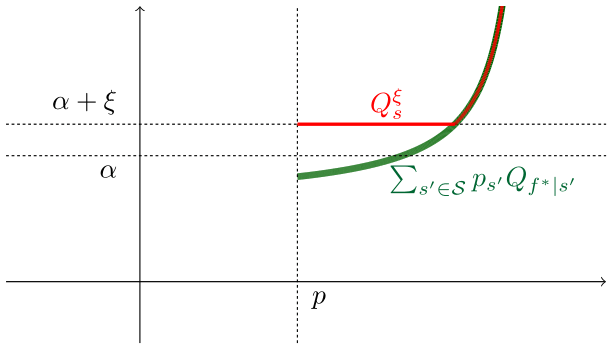}
            \caption{Illustration for $t\mapsto Q^{\xi}_s(t)$ on $(p,1]$ in the case where $\alpha >  \sum\limits_{s\in\mathcal{S}}p_sQ_{f^*|s}(p)$ (red curve).}
            \label{fig:e1}
        \end{figure}
These explanations are in line with the formal proofs of Theorems~\ref{BaryCenter} and~\ref{thr:optimalPredict} provided in Appendix~\ref{app:OptimalRules}. 
\begin{remark}
\label{erisk bound}
  In our previous discussion, we introduced two oracles, namely $g^*_{\alpha,p}$ and $g^{*,\xi}_{\alpha,p}$. The former, $g^*_{\alpha,p}$ has the drawback of not being properly defined when $\alpha > \sum_{s\in\mathcal{S}}p_sQ_{f^*|s}(p)$. The latter, $g^{*,\xi}_{\alpha,p}$, is always well defined but is not optimal when $\alpha \leq \sum_{s\in\mathcal{S}}p_sQ_{f^*|s}(p)$. Importantly,
  we can show that for any $\xi>0$ we have 
  \begin{align}
  \label{eq:eqExcessRiskXi}
    \mathbb{E}\left|g^*_{\alpha,p}(X,S)-g^{*,\xi}_{\alpha,p}(X,S)\right|^2  \leq \xi^2 \enspace.
  \end{align} 
\end{remark}
The above remark is at the core of the definition of the proposed data-driven algorithm described in Section~\ref{sec:datadriven}.
Indeed, the predictor $g^{*,\xi}_{\alpha,p}$ satisfies the DP-tails constraints, that is, $g^{*,\xi}_{\alpha,p} \in \mathcal{F}_{\alpha}^p$, and, as shown by Equation~\eqref{eq:eqExcessRiskXi}, is a good proxy for the optimal prediction in terms of risk.
\begin{remark}\label{re:inf}
The infimum value of problem $(\mathcal{P})$ can be expressed as
\[
\mathcal{E}(g^*_{\alpha,p})
= \sum_{s\in\mathcal{S}} p_s \int_0^1 \bigl(Q_{f^*|s}(x)-Q^*_s(x)\bigr)^2 \, dx.
\]
We refer to the proof of Theorem~\ref{thr:optimalPredict} in Appendix~\ref{app:OptimalRules} for details. 
This representation is useful for determining the optimal unfairness proportion~$p$ in Section~\ref{sec:optimal}.
\end{remark}

\section{Estimation method and theoretical analysis}
\label{sec:Estimation}
The previous section pointed out that $g^{*,\xi}_{\alpha,p}$, provided in Theorem~\ref{thr:optimalPredict}, is an almost optimal solution for the regression under DP-tails constraint. It has in particular the advantage of being explicitly defined and therefore a data-driven procedure can efficiently be deduced through the plug-in principle. In this section, we first define formally our algorithm and then establish its theoretical properties.

\subsection{Data-driven procedure}
\label{sec:datadriven}


Recall the expression of $g^{*,\xi}_{\alpha,p}$ from Theorem~\ref{thr:optimalPredict}. The main purpose is then to build estimators for the regression function $f^*$, as well as to the CDFs $F_{f^*|s}$ and quantile functions $Q^\xi_s$.
We begin by constructing a based estimator $\hat{f}$ of $f^*$ using a collected labeled dataset 
$\mathcal{D}_n=\left\{\left(X_i, S_i, Y_i\right)\right\}_{i=1}^n \stackrel{\text { i.i.d }}{\sim} \mathbb{P}$
of size $n$. Note that, due to tie effects, we consider a jittered version of $\hat{f}$ rather than $\hat{f}$ itself (to enforce Assumption~\ref{ass:density} at the estimation step).
Using this estimate, and in order to construct empirical quantile functions and CDF, we require an unlabeled data set $\mathcal{D}_N=\bigcup_{s\in\mathcal{S}}\mathcal{D}^s$, where
\begin{equation*}
   \mathcal{D}^s= \{(X_i^s,s): i=1,...,N_s \} \stackrel{\text { i.i.d }}{\sim} \mathbb{P}_{X,S|S=s} \enspace ,
\end{equation*}
such that $\sum_{s \in \mathcal{S}} N_s =N$. We define a fixed partition of the index set $\left\{1,..., N_s\right\}$ into two equally sized disjoint subsets $\mathcal{I}_0^s$ and $\mathcal{I}_1^s$, each of cardinality $N_s / 2$, such that $\mathcal{I}_0^s \cup \mathcal{I}_1^s=\left\{1,..., N_s\right\}$. 
For each $j \in\{0,1\}$, we define the corresponding subsample $\mathcal{D}_j^s=\left\{(X_i^s,s) \in \mathcal{D}^s: i \in \mathcal{I}_j^s\right\}$. 
Then, for each $s\in\mathcal{S}$, we use the data in $\mathcal{D}^s_1$ to get the sample $\left\{\hat{f}\left(X_i^s, s\right)+\varepsilon_{i s}\right\}_{i \in \mathcal{I}_0^s}$ and estimate the cumulative CDF of the random variable $\hat{f}(X, S)+\varepsilon_s$ conditioned on $S=s$. Similarly, we the data in $\mathcal{D}^s_2$ to get the sample $\left\{\hat{f}\left(X_i^s, s\right)+\varepsilon_{i s}\right\}_{i \in \mathcal{I}_2^s}$ and estimate the quantile function of $\hat{f}(X, S)+\varepsilon_s$ conditioned on $S=s$.
There the jittering noise, $\varepsilon_{i s} \stackrel{\text{ i.i.d. }}{\sim} U([-\sigma, \sigma])$ are independent from the other data, for some $\sigma>0$ and has to be set by the user --- typically $\sigma = 10^{-6}$ in our experiments. 
\\
Finally, fix some $\xi >0$ and define an estimator $\hat{g}_{\alpha,p}^{\xi}$ of $g^{*,\xi}_{\alpha,p}$
\begin{align}
\label{estimator}
    \hat{g}_{\alpha,p}^{\xi}(x,s)= \hat{Q}_s\circ \hat{F}_{\hat{f}|s}\circ (\hat{f}(x,s)+\varepsilon_s) \enspace,
\end{align} 

where $\varepsilon_s\sim \mathcal{U}[-\sigma,\sigma]$ is independent from all remaining random variables and
\begin{align*}
     \hat{Q}^\xi_s(x)=\left\{\begin{array}{ll}
        \min\{\alpha,\hat{Q}_{\hat{f}|s}(x)\} &\text{if $x\in[0,p]$} , \\
\max\left\{\alpha + \xi, \sum\limits_{s\in\mathcal{S}}\hat{p}_s\hat{Q}_{\hat{f}|s}(x)\right\} &\text{if $x\in(p,1]$},
    \end{array}
    \right.
\end{align*}
with $\hat{p}_s= N_s/N$. In what follows we define $\mathcal{E} = \left\{ \varepsilon_{i,s}  \right\}_{i,s}\cup \left\{\varepsilon_s\right\}$.

\subsection{Theoretical analysis}

The performance of the estimator $\hat{g}^{\xi}_{\alpha,p}$ can be considered from two sides. Obviously, we need to control its prediction error. In addition, to evaluate the unfairness of $\hat{g}^{\xi}_{\alpha,p}$ we also need to build some measure of unfairness that is related to our Definition~\ref{def:PartialDPGeneral}. This is the purpose of the following paragraphs.

\paragraph{DP-tails fairness control.} 
The nature of the fairness constraint makes natural to build a notion of unfairness that measure the difference between CDFs for all $t$ after the threshold $\alpha$. In particular, we define the tail-unfairness of a prediction function $g: \mathbb{R}^d \times \mathcal{S} \rightarrow \mathbb{R}$ as 
\begin{multline}
\label{eq:Unfairness}
\mathcal{U}^{\alpha}(g) = \sup_{s,s'\in\mathcal{S}} \sup _{t \geq\alpha } \left| \mathbb{P}_{X \mid S=s}( {g}(X, S) \leq t) 
\right. 
\\ 
\left. - \mathbb{P}_{X \mid S=s'}\left( {g}(X, S) \leq t \right)\right| \enspace.
\end{multline}
We then can establish the following result.
\begin{theorem}[DP-tails fairness guarantees]
\label{thm:unfairnessControl}
    For any joint distribution $\mathbb{P}$ of $(X, S, Y)$ and any base estimator $\hat{f}$ constructed on labeled data, the estimator $\hat{g}_{\alpha,p}^{\xi}$ defined in Equation~\eqref{estimator} satisfies
\begin{equation}
\label{eq:e-pf}
    \mathbf{E} \left[ \mathcal{U}^{\alpha}(\hat{g}_{\alpha,p}^{\xi}) \right]
    \leq  \dfrac{C} {\sqrt{\min_{s\in\mathcal{S}}\{p_s\}N}}  \enspace,
\end{equation}
where $C>0$ is an absolute constant and the expectation $\mathbf{E} $ is taken over all datasets $\mathcal{D}= \mathcal{D}_n \cup \mathcal{D}_N \cup \mathcal{E}$ used in the construction of $\hat{g}_{\alpha,p}^{\xi}$. 
In addition, for all $s\in\mathcal{S}$, we have
\begin{align}\label{eq:f-pf}
  \mathbf{E}  \left|\mathbb{P}_{X|S=s}\left(\hat{g}_{\alpha,p}^{\xi}(X,S)\leq \alpha\right)-p\right| \leq \frac{C'}{\sqrt{{p_sN}}} \enspace,
\end{align} 
for some absolute constant $C'>0$.
\end{theorem}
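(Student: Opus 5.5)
Conditionally on the labeled data $\mathcal{D}_n$, the jitter $\mathcal{E}$ and the split, all randomness comes from the two unlabeled halves. For a new point with $S=s$, set $Z=\hat f(X,s)+\varepsilon_s$. Because of the uniform jitter, $Z$ has a continuous conditional law $\nu_s$. The estimator is $\hat g^\xi_{\alpha,p}(X,s)=\hat Q^\xi_s(\hat F_{\hat f|s}(Z))$, where $\hat F_{\hat f|s}$ is the empirical CDF built on one half. This half is an i.i.d.\ sample from $\nu_s$ of size $N_s/2$.

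The first step is to compare with the ideal variable $U=F_{\nu_s}(Z)$, which is exactly $\mathcal{U}[0,1]$ because $\nu_s$ is continuous. By Dvoretzky--Kiefer--Wolfowitz (Massart's constant), $\mathbf{E}\sup_z|\hat F_{\hat f|s}(z)-F_{\nu_s}(z)|\le c/\sqrt{N_s}$. The map $\hat Q^\xi_s$ is built from the other half and so is independent of $\hat F_{\hat f|s}$. It is nondecreasing, because $\min\{\alpha,\cdot\}\le\alpha<\alpha+\xi\le\max\{\alpha+\xi,\cdot\}$ and each piece is monotone. So for every $t$, the event $\{\hat Q^\xi_s(u)\le t\}$ is an interval $\{u\le \tau_s(t)\}$, possibly up to the endpoint, where $\tau_s(t)$ does not depend on the CDF sample.

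The second step uses the structure of $\hat Q^\xi_s$. For $u\le p$ it takes values $\le\alpha$, and for $u>p$ it takes values $\ge\alpha+\xi>\alpha$. Hence $\{\hat g\le\alpha\}=\{\hat F_{\hat f|s}(Z)\le p\}$, up to ties at $p$, which occur with zero probability after jittering. For $t\ge\alpha$, the set $\{\hat Q^\xi_s(u)\le t\}$ equals $\{u\le \tau(t)\}$ with $\tau(t)\ge p$. On $(p,1]$, the function $\hat Q^\xi_s$ is the same for every $s$, namely $\max\{\alpha+\xi,\sum_{s'}\hat p_{s'}\hat Q_{\hat f|s'}\}$. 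So the threshold $\tau(t)$ is common to all groups for $t\ge\alpha$. Consequently
\[
\mathbb{P}_{X|S=s}(\hat g\le t)=\mathbb{P}\bigl(\hat F_{\hat f|s}(Z)\le\tau(t)\bigr)=\mathbb{P}\bigl(F_{\nu_s}(Z)\le\tau(t)\bigr)+\Delta_s(t),
\]
with $|\Delta_s(t)|\le\sup|\hat F_{\hat f|s}-F_{\nu_s}|$. To get this bound, use $\{\hat F(Z)\le\tau\}\subset\{F(Z)\le\tau+\|\hat F-F\|_\infty\}$ together with the uniformity of $F_{\nu_s}(Z)$. The main term equals $\tau(t)$, independently of $s$. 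Therefore
\[
\mathcal{U}^\alpha(\hat g)\le 2\max_s\|\hat F_{\hat f|s}-F_{\nu_s}\|_\infty,
\]
and $\bigl|\mathbb{P}_{X|S=s}(\hat g\le\alpha)-p\bigr|\le\|\hat F_{\hat f|s}-F_{\nu_s}\|_\infty$.

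The third step takes expectations. Conditioning on $\mathcal{D}_n$, the other half and the jitter, DKW gives $\mathbf{E}\|\hat F-F\|_\infty\lesssim 1/\sqrt{N_s}$. A union or maximum bound over the finitely many groups then gives $\lesssim 1/\sqrt{\min_s N_s}$; the constant may depend on $|\mathcal{S}|$, or one can use a sub-Gaussian maximal inequality. This proves \eqref{eq:f-pf} in terms of $N_s$. The main obstacle is converting $N_s$ into $p_sN$, since $N_s$ is random (multinomial) or was treated as fixed. If the $N_s$ are random, I would condition on them and use $\mathbf{E}[N_s^{-1/2}\wedge 1]\lesssim (p_sN)^{-1/2}$. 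This follows from a Chernoff bound on $\{N_s<p_sN/2\}$, an event whose probability $e^{-cp_sN}$ is itself $\lesssim (p_sN)^{-1/2}$, with the trivial bound $1$ on the unfairness used on that event. A secondary care point is the boundary tie at $u=p$ and at $\tau(t)$, which the continuity of $\hat F$ values handles almost surely.
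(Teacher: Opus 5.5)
Your proposal is correct and follows the paper's proof essentially step for step. Both arguments compare $\hat F_{\hat f|s}(Z)$ with the exactly uniform $F_{\hat f|s}(Z)$. Both use that $\{u:\hat Q^\xi_s(u)\le t\}$ is the same for all groups when $t\ge\alpha$ and equals $[0,p]$ at $t=\alpha$. Both then apply DKW and convert $N_s$ to $p_sN$ through the binomial law, where the paper cites Lemma~4.1 of Gy\"orfi et al.\ and you use Chernoff. Your caveat that the constant depends on $|\mathcal{S}|$ is justified, since the paper's proof only bounds a fixed pair $(s,s')$. One correction: ties are not null events, because $\hat F_{\hat f|s}(Z)$ takes values in $\{0,1/m,\dots,1\}$ with $m=N_s/2$. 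You do not need that claim, though, since $\{u:\hat Q^\xi_s(u)\le\alpha\}=[0,p]$ exactly and your sandwich bound holds equally for $[0,\tau)$ and $[0,\tau]$.
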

One important characteristic of the above result is that it holds without any condition on the data distribution --- it is then a distribution-free guarantee --- neither on the quality of the estimator $\hat{f}$. The theorem says in particular that the fail unfairness decay to zero is only governed by the number of unlabeled data. This result is aligned with the bound on global unfairness obtained in~\cite{chzhen2020fairwb} and its proof can be found in Appendix~\ref{app:proofUnfair}.

\paragraph{Risk control.} Establishing a control on the risk of $\hat{g}^{\xi}_{\alpha,p}$ is more involved and requires additional assumptions, on particular on the efficiency of $\hat{f}$.
\begin{assumption}\label{as1}
  For each $s \in \mathcal{S}$ the univariate measure $\nu_{f * \mid s}$ admits a density $q_s$, which is lower bounded by $\underline{\lambda}_s>0$ and upper-bounded by $\bar{\lambda}_s \geq \underline{\lambda}_s$.  
\end{assumption}
\begin{assumption}\label{as3}
We assume that\\
$i)$ for each $s\in \mathcal{S}$, $x \mapsto f^*(x,s)$ is Lipschitz;\\
$ii)$ the feature $X$ belongs to a compact set;\\
$iii)$ conditional on $S=s$, $\mathbb{P}_{X|S=s}$ admits a density which lower and upper bounded.
\end{assumption}

\begin{assumption}\label{as4}
    There exist positive constants $c$ and $C$ independent from $n, N, N_1, \ldots, N_{|\mathcal{S}|}$, and a positive sequence $b_n: \mathbb{N} \rightarrow \mathbb{R}_{+}$ such that for all $\delta>0$ it holds that
$$
\mathbf{P}\left(\left|f^*(x, s)-\hat{f}(x, s)\right| \geq \delta\right) \leq c \exp \left(-C b_n \delta^2\right)
$$
for almost all $(x, s)$ \emph{w.r.t.} $\mathbb{P}_{X, S}$.
\end{assumption}

The above Assumption provide a rates of convergence for the predictor $\hat{f}$. 
In particular under Assumption~\ref{as3},  this assumption is satisfied by local polynomial estimators for instance~\citep{Audibert_Tsybakov07}.
Besides, note that Assumption~\ref{as3} and~\ref{as4} ensure that
\begin{equation*}
\mathbf{E}\left[\left\|\hat{f}-f^* \right\|_{\infty}\right] \leq C \log(n) b_n^{-1/2},
\end{equation*}
where $\left\| f \right\|_{\infty} := \sup_{(x,s)\in \bbR^d\times \mathcal{S} } | f(x,s)|$ is the sup-norm, for any $f:\bbR^d \times \mathcal{S} \rightarrow \bbR$.

\begin{theorem}
\label{thm:riskBound}
    Let Assumptions \ref{as1}, \ref{as3} and \ref{as4} be satisfied, and set $\sigma \lesssim \min _{s \in \mathcal{S}} N_s^{-1 / 2} \wedge b_n^{-1 / 2}$ and $\xi \lesssim \sqrt{1/N}$, then for all $\alpha \in \mathbb{R}$ and $p\in[0,1]$, the estimator $\hat{g}_{\alpha,p}^{\xi}$ defined in Eq.~\eqref{estimator} satisfies
$$
\mathbf{E}\left|g_{\alpha,p}^*(X, S)-\hat{g}_{\alpha,p}^{\xi}(X, S)\right| \leq C\left(\sqrt{\dfrac{1}{N}}+ \log(n) b_n^{-1/2} \right)
$$
where the leading constant depends only on $\underline{\lambda}_s, \bar{\lambda}_s,\eta$ from Assumptions~\ref{as1}, ~\ref{as3}, and~\ref{as4}.
\end{theorem}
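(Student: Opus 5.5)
We bound the $L_1$ distance to $g^*_{\alpha,p}$ by passing through the smoothed oracle $g^{*,\xi}_{\alpha,p}$ and then comparing the estimator with that oracle piece by piece. By the triangle inequality and Remark~\ref{erisk bound} (Jensen applied to \eqref{eq:eqExcessRiskXi}),
\[
\mathbf{E}\bigl|g^*_{\alpha,p}-\hat g^{\xi}_{\alpha,p}\bigr| \le \xi + \mathbf{E}\bigl|g^{*,\xi}_{\alpha,p}-\hat g^{\xi}_{\alpha,p}\bigr|.
\]
Since $\xi\lesssim N^{-1/2}$, only the second term needs work.

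Write $U=F_{f^*|s}(f^*(X,s))$ and $\hat U=\hat F_{\hat f|s}(\hat f(X,s)+\varepsilon_s)$. We decompose
\[
|Q^\xi_s(U)-\hat Q^\xi_s(\hat U)| \le |Q^\xi_s(U)-Q^\xi_s(\hat U)| + |Q^\xi_s(\hat U)-\hat Q^\xi_s(\hat U)|.
\]
The maps $t\mapsto \min\{\alpha,\cdot\}$ and $t\mapsto\max\{\alpha+\xi,\cdot\}$ are 1-Lipschitz, and by Assumption~\ref{as1} each $Q_{f^*|s}$ is $1/\underline\lambda_s$-Lipschitz. Hence $Q^\xi_s$ is Lipschitz on $[0,p]$ and on $(p,1]$, with a possible jump at $t=p$.

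\textbf{Second term.} It is bounded by
\[
\sup_t|Q_{f^*|s}(t)-\hat Q_{\hat f|s}(t)| + \sum_{s'}|p_{s'}-\hat p_{s'}|\,\|Q_{f^*|s'}\| + \sum_{s'} p_{s'}\sup_t|Q_{f^*|s'}-\hat Q_{\hat f|s'}|.
\]
Here $\|Q_{f^*|s'}\|$ is bounded because $X$ lies in a compact set and $f^*$ is Lipschitz. The $\hat p_{s'}$ term is $O(N^{-1/2})$ in expectation. For the quantile differences we interpose the population quantile $Q_{\hat f+\varepsilon|s}$. The empirical quantile of the jittered sample deviates from it by $O(N_s^{-1/2})$ in expectation via DKW, converted to quantiles using the density lower bound. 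Then $|Q_{\hat f+\varepsilon|s}-Q_{f^*|s}|\le \|\hat f-f^*\|_\infty+\sigma$, since quantiles are monotone under sup-norm perturbations. Taking expectations gives $O(N^{-1/2}+\log(n)b_n^{-1/2}+\sigma)$.

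One technical point is that the density lower bound is for $f^*$, not $\hat f$. So on the empirical side I would instead use a DKW bound in CDF form, $\sup_t|\hat F-F_{\hat f+\varepsilon|s}|$, and transfer it to quantiles through $Q_{f^*|s}$. Concretely, I would bound $|\hat Q(t)-Q_{f^*|s}(t)|$ by the inverse-Lipschitz constant of $F_{f^*|s}$ times the CDF error, plus $\|\hat f-f^*\|_\infty+\sigma$.

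\textbf{First term.} Similarly, $|U-\hat U|$ is at most
\[
\sup|\hat F_{\hat f|s}-F_{\hat f+\varepsilon|s}| + \sup|F_{\hat f+\varepsilon|s}-F_{f^*|s}| + \bar\lambda_s\bigl(|\hat f-f^*|+\sigma\bigr).
\]
The first piece is the DKW term, of order $N_s^{-1/2}$. The other two are $O(\bar\lambda_s(\|\hat f-f^*\|_\infty+\sigma))$ by the upper density bound. Lipschitzness of $Q^\xi_s$ away from $p$ then transfers this bound.

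\textbf{Main obstacle: the jump at $t=p$.} When $\alpha>\sum_s p_sQ_{f^*|s}(p)$, $Q^\xi_s$ jumps by up to $\alpha+\xi-\min\{\alpha,Q_{f^*|s}(p)\}$, which need not be small. On the event that $U$ and $\hat U$ fall on opposite sides of $p$, we only bound the difference by a constant $M$ (boundedness of all quantities). Since $U$ is uniform on $[0,1]$, the probability of this event is at most $\mathbf{P}(|U-p|\le|U-\hat U|)\le 2\,\mathbf{E}|U-\hat U|$. We condition on the data, which is independent of $X$ by the sample splitting, and use the Lipschitz bounds on $Q^\xi_s$ on each side elsewhere. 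The jump therefore only contributes $M\cdot O(N^{-1/2}+\log(n)b_n^{-1/2}+\sigma)$.

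Finally, collecting all terms and using $\sigma\lesssim N_s^{-1/2}\wedge b_n^{-1/2}$ together with $N_s\asymp p_sN$ gives the bound $C(N^{-1/2}+\log(n)b_n^{-1/2})$.
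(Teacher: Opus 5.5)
Your proof is correct up to one misstated inequality, discussed below. It shares the paper's outer skeleton: the triangle inequality through $g^{*,\xi}_{\alpha,p}$ with Remark~\ref{erisk bound}, DKW for the empirical CDFs, and uniformity of $U=F_{f^*|s}(f^*(X,s))$ to control crossings of $p$. The core decomposition is genuinely different, though.

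The paper inserts an intermediate $\bar Q_s$ (true quantiles, weights $\hat p_s$). It splits $|\hat Q^\xi_s(\hat U)-\bar Q_s(U)|$ into four pieces according to the indicators of $[0,p]$ and $(p,1]$ evaluated at $\hat U$ and at $U$, and treats the indicator switches as you do. It then delegates the main composite term $\mathbf{E}|\hat Q_{\hat f|s}(\hat U)-Q_{f^*|s}(U)|$ to the proof of Theorem~4.4 in~\cite{chzhen2020fairwb}.

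You instead separate \emph{one map at two arguments} from \emph{two maps at one argument}. The jump at $p$ then enters only the first piece. The second piece reduces to the uniform estimate $\sup_t|\hat Q_{\hat f|s}(t)-Q_{f^*|s}(t)|\le \underline\lambda_s^{-1}\|\hat F-F_{\hat f+\varepsilon|s}\|_\infty+\|\hat f-f^*\|_\infty+\sigma$. You prove this directly by monotone coupling and the Lipschitz property of $Q_{f^*|s}$.

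What your route buys:
\begin{itemize}
\item It is self-contained.
\item Because all your bounds are uniform in $t$ and $x$, you need no independence between the two halves of $\mathcal{D}^s$.
\item It makes explicit, through the jump size $M$, that the constant depends on $\alpha$ and on the range of $f^*$. The paper's constant has the same dependence, hidden in terms like $\max\{\alpha+\xi,\cdot\}$, despite what the statement claims.
\item Your boundedness argument via compactness and Lipschitz $f^*$ is sounder than the paper's bound $|Q_{f^*|s}|\le\underline\lambda_s^{-1}/2$. That bound tacitly requires the density lower bound to hold on $[-\underline\lambda_s^{-1}/2,\underline\lambda_s^{-1}/2]$.
\end{itemize}
The paper's route is shorter but rests on an external result.

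The misstated step is $\mathbf{P}(|U-p|\le|U-\hat U|)\le 2\,\mathbf{E}|U-\hat U|$, which is false in general. The quantity $|U-\hat U|$ depends on $X$ and may be large precisely when $U$ is close to $p$. Use instead your own $X$-free bound $|U-\hat U|\le\Gamma:=\|\hat F_{\hat f|s}-F_{\hat f+\varepsilon|s}\|_\infty+2\bar\lambda_s(\|\hat f-f^*\|_\infty+\sigma)$, which depends only on the data. By uniformity of $U$ and independence of the test point, $\mathbf{P}(|U-p|\le\Gamma\mid\mathcal{D})\le 2\Gamma$. Taking expectations gives the same order, and this is exactly what the paper does with $\Gamma_{N,n}$.
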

The obtained bound on the risk is similar in spirit to the one obtained in~\cite{chzhen2020fairwb} and is decomposed into two terms. The first one is related to the control of the deviation between a CDF and its empirical counterpart, while the second term relies on controlling the estimation error $\|\hat{f}-f^*\|_{\infty}$ of $\hat{f}$.


\section{Optimal localization for DP-tails fairness}
\label{sec:optimal}
We now turn to our second framework of interest, namely the case where we optimize our procedure with respect to the parameter $p$. This amounts to enforcing the matching of the right tails of the CDF across sensitive groups beyond a given threshold $\alpha$. We then introduce the notion of \emph{relaxed DP-tails}.
%
\begin{definition}[\emph{Relaxed DP-Tails}]
  For each $\alpha\in \mathbb{R}  $, a predictor $g:\mathcal{X}\times \mathcal{S}\to\mathbb{R}$ is said \textrm{$\alpha-$DP-tails fair} if for all $s,s'\in \mathcal{S}$ 
  \begin{align*}
  \label{level}
    \left.\mathbb{P}( g(X,S)\leq t\right|S=s)=\mathbb{P}( g(X,S)\leq t|S=s'),
  \end{align*} 
  for all $t\geq\alpha$. 
We also introduce $\mathcal{F}_\alpha$, the class of all \textrm{$\alpha-$DP-tails fair} regressors. 
  
\end{definition}
Since for every $\alpha$ the set of fair regressors is contained in $\mathcal{F}_\alpha$, the $L_2$ risk of an optimal fair regressor is always greater than or equal to that of the optimal $\alpha-$ DP-tails fair  regressor. Moreover, due to the fact that
\begin{align*}
\inf_{g\in\mathcal{F}_{\alpha}}\mathcal{E}(g) 
 = \inf_{p\in[0,1]}\inf_{g\in\mathcal{F}_{\alpha}^p}\mathcal{E}(g) =\inf_{p\in[0,1]}\mathcal{E}(g^*_{\alpha,p})\enspace , 
\end{align*} 
an $\alpha-$DP-tails fair regressor is obtained as soon as the minimizer $p^*$ of the objective function 
\begin{equation*}
    \mathcal{I}^\alpha(p):= \mathcal{E}(g^*_{\alpha,p})=\mathbb{E}\left[(f^*(X,S)-g^*_{\alpha,p}(X,S))^2\right]\enspace,
\end{equation*} 
is found. 
Once the optimal unfairness proportion $p^*$ has been determined, the optimal \textrm{$\alpha-$DP-tails fair} regressor can be expressed as
$$
g_\alpha^*=g_{\left(\alpha, p^*\right)}^* \enspace ,
$$
whenever $\alpha \leq \sum_{s \in \mathcal{S}} p_s Q_{f^* \mid s}\left(p^*\right)$; otherwise, one can only guarantee the existence of a $\xi$-optimal \textrm{$\alpha-$DP-tails fair} regressor 
$$
g^{\xi,*}_\alpha=g^{\xi,*}_{(\alpha,p^*)}\enspace ,
$$ 
for some small $\xi>0$.
 Thanks to Remark \ref{re:inf}, we can get the formula of $\mathcal{I}^\alpha(p)$
 \begin{multline*}
   \mathcal{I}^\alpha(p)=\sum_{s\in\mathcal{S}}p_s\left[\int_0^p(Q_{f^*|s}(x)-\min\{\alpha, Q_{f^*|s}(x)\})^2dx \right. \\
   \left. +
   \int_p^1\left(Q_{f^*|s}(x)-\max\left\{\alpha,\sum_{s\in\mathcal{S}}p_sQ_{f^*|s}(x)\right\}\right)^2dx\right]\enspace.  
 \end{multline*}
 In particular, under Assumption \ref{ass:density} and by dominated convergence theorem, $\mathcal{I}^\alpha$  is continuous on $[0,1]$ and then the function $p \mapsto \mathcal{I}^\alpha(p)$ admits a minimum  on $[0,1]$.

Nevertheless, due to the lack of differentiability of $\mathcal{I}^\alpha$, it is generally difficult to derive an explicit expression for a minimizer. 

In the statistical setting, we will approximate the optimal unfairness proportion by minimizing \emph{w.r.t.} $p$ an empirical version of $ \mathcal{I}^\alpha(p)$ given by
\begin{equation}
\label{eq:OptimProblemP}
    \widehat{\mathcal{I}}^\alpha(p)=\frac{1}{N}\sum_{s\in\mathcal{S}}\sum_{X^s\in\mathcal{D}_s}\left( \hat{f}(X^s,s)-\hat{g}^0_{\alpha,p}(X^s,s)\right)^2\enspace,
\end{equation}
where $\hat{g}^0_{\alpha,p}$  is given by setting $\xi=0$ in Eq~\eqref{estimator}.
Hopefully, solving $\min_{p\in [0,1]} \widehat{\mathcal{I}}^\alpha(p)$ and be performed efficiently since $ \hat{g}^0_{\alpha,p}$ is a piecewise constant function \emph{w.r.t.} $p$. Once the minimizer $\hat{p}$ of $\hat{\mathcal{I}}$ is obtained, we construct $\hat{g}_{\alpha}^{\xi}:=\hat{g}_{\alpha, \hat{p}}^{\xi}$ as an approximation to the optimal $\alpha-$DP-tails predictor. 
Section~\ref{sec:numeric} provides a numerical study for $\hat{g}_{\alpha}^{\xi}$ that shows that it performs well both on synthetic and real datasets.



\section{Numerical study}
\label{sec:numeric}
This section is dedicated to the numerical evaluation of our proposed algorithms, $\hat{g}_{\alpha,p}^{\xi}$ and $\hat{g}_\alpha^{\xi}$, that are devoted to address the tail-unfairness issue respectively with a fixed and optimized proportion $p$. We recall that $\hat{g}_\alpha^{\xi}$ is obtained from $\hat{g}_{\alpha,p}^{\xi}$ by optimizing over the parameter $p$; for this step, we use the bounded Brent’s method~\cite{Brent1973}.
In all our experiments, we set $\xi = 10^{-5}$; this value does not affect the results.
We begin with a simulation study and then consider applications to classical real-world datasets commonly used in the fairness literature.

\paragraph{Evaluation metrics.} All our experiments are evaluated using the empirical counterparts of the risk $R(g)$ ---see Eq.~\eqref{eq:risk}--- namely the mean squared error $\mathrm{MSE}:=\frac{1}{M}\sum_{(X,S,Y) \in \mathcal{D}_M} (Y-\hat f(X,S))^2$, and of the tail-unfairness level $\mathcal{U}^{\alpha}(g)$ for a threshold $\alpha\in\mathbb{R}$ ---see Eq.~\eqref{eq:Unfairness}--- of the considered estimator. Besides, for $\alpha \to -\infty$, $\mathcal{U}^{\alpha}(g)$ reduces to global unfairness, namely the Kolmogorov–Smirnov statistic $\mathrm{KS}:=\max_{s,s'}\sup_{t\in\R}|\hat{F}_{\hat f\mid S=s}(t)-\hat{F}_{\hat f\mid S=s'}(t)|$.

\paragraph{Baselines\footnote{The code (in python) and data required to reproduce all numerical experiments are provided in the supplementary material.}.}
We consider:
(i) \texttt{Unfair} the base regressor $\hat f$ that is the specific case where we do not enforce any constraint, that is, when $\alpha \rightarrow +\infty$;
(ii)~\texttt{OT}~\cite{chzhen2020fairwb}: the case where we enforce complete fairness, corresponding to the case where $\alpha \rightarrow -\infty$.

\paragraph{Estimation protocol.} We build three datasets: a labeled training set, an unlabeled set for the calibration of fairness with the right levels, and a test set to evaluate risk and unfairness:\\
i) the \emph{labeled training set} $\mathcal{D}_n$ of size $n$ is used to learn a baseline regression function $\hat{f}$;
\\
ii) the \emph{unlabeled calibration dataset} $\mathcal{D}_N$ of size of $N$ is generated in the same way but without any target variable $y$. It serves as a pool of samples for the DP-tails fairness adjustment stage, allowing the model to estimate and correct group-level disparities;
\\
iii) finally, the test dataset $\mathcal{D}_M$ of size of $M$ is also drawn from the same distribution and used exclusively to evaluate the performance and DP-tails fairness of the learned predictors.

\subsection{Simulation study}

\begin{figure}[t]
  \centering
  \subfloat[Unfair]{\includegraphics[width=0.5\linewidth]{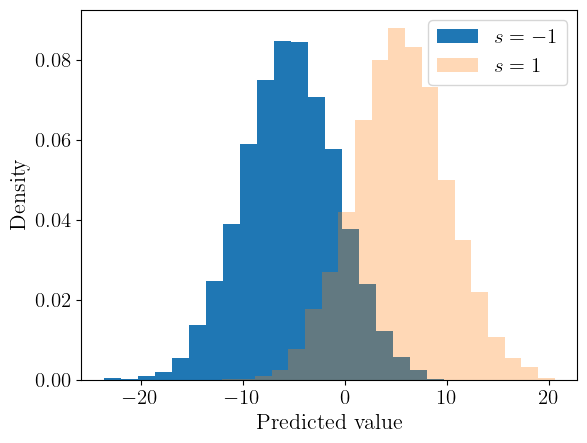}}\hfill
  \subfloat[DP-tails fair]{\includegraphics[width=0.5\linewidth]{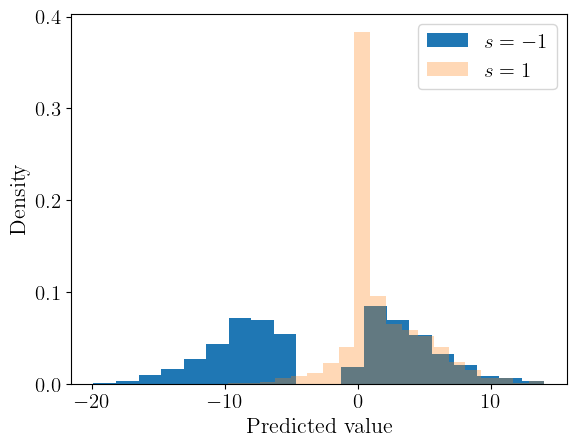}}\hfill
  \caption{Histogram of predictions on synthetic data before (left) and after (right) enforcing $(\alpha,p)$-DP-tails fairness. The constraint parameters are set (arbitrary) to $\alpha=0$, $p=0.5$}
  \label{fig_hist_test}
\end{figure}

The scenario that we consider is rather simple but perfectly illustrates the action of our constraint.
The considered model is 
$$
Y=3\left(x_1+x_2+x_3+S\right)+\varepsilon, \quad \varepsilon \sim \mathcal{N}(0,1)\enspace ,
$$
where $X=\left(x_1, x_2, x_3\right)\in\mathbb{R}^3$ is independent of $ \varepsilon$ and is drawn from a standard normal distribution in $\mathbb{R}^3$.
The sensitive attribute $S \in\{0,1\}$ is determined by the sign of the first feature, that is, $S=0$ if $x_1<0$ and $S=1$ otherwise --- the above expression means that $f^*(X,S)= 3\left(x_1+x_2+x_3+S\right)$. 
Generating the outputs in this way introduces an explicit dependence between $Y$ and the sensitive attribute $S$, creating an intentionally unfair scenario.

Except specified, in this set of experiments, we considered $N=10,000$ for the calibration and $M = 10,000$ for evaluating the metrics.

%

\paragraph{Visualization of distributions.}
Our first experiment aims at showing the impact of our constraint on the distributions $\nu_{f^*|s}$. Since our goal is to understand how fairness is enforced, we assume in this experience that $f^*$ is given and we only deal with the part of the study relying on the calibration of fairness -- this means that we use the estimator $\hat{g}^{\xi}_{\alpha,p}$ but we replace $\hat{f}$ by $f^*$ everywhere it is used. 
In Figure~\ref{fig_hist_test}, we draw the distributions of $f^*(X,s)$ for $s\in \{0,1\}$ before and after enforcing tail-fairness. 
While the left plot illustrates a situation of complete unfairness, the right plot shows that enforcing DP-tails fairness leads to two distinct behaviors. One distribution (blue) exhibits a gap in its support before the threshold $\alpha$, whereas the other (orange) displays an accumulation of mass just after $\alpha$. These observations are confirmed by Figure~\ref{fig_cdf_test}, which in particular highlights the perfect agreement between the two CDFs beyond the threshold $\alpha$, both in the case of $(\alpha,p)$-DP-tails fairness (middle plot) and $\alpha$-DP-tails fairness (right plot).

%
%

\begin{figure}[t]
  \centering
  \subfloat[Unfair]{\includegraphics[width=0.32\linewidth]{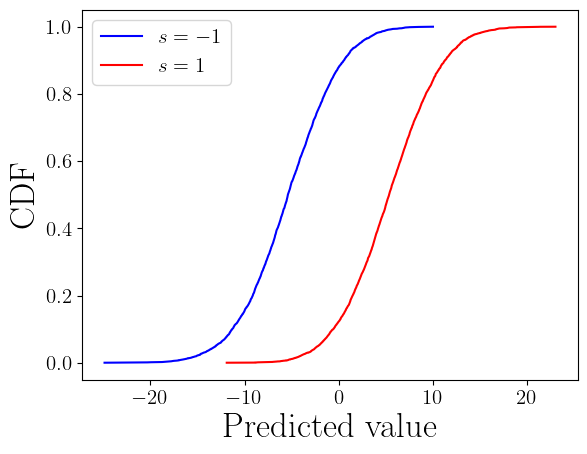}}\hfill
  \subfloat[$(\alpha,p)-$DP-tails]{\includegraphics[width=0.32\linewidth]{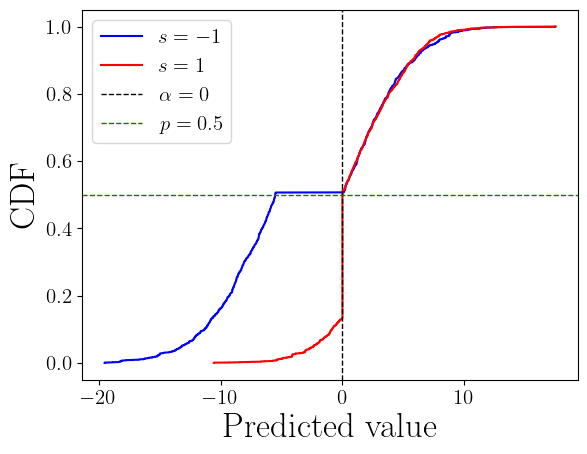}}\hfill
  \subfloat[$\alpha-$DP-tails]{\includegraphics[width=0.32\linewidth]{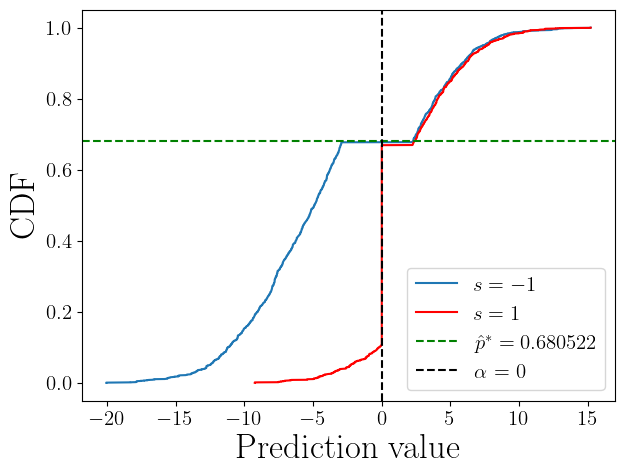}}\hfill
  \caption{Empirical CDF of the predictions on the synthetic data before and after enforcing DP-tails fairness. Plot (b): the parameters are set (arbitrary) to $\alpha=0$, $p=0.5$; plot (c): $\alpha=0$ and $p$ minimizes~\eqref{eq:OptimProblemP}.}
  \label{fig_cdf_test}
\end{figure}

Before proceeding to further investigations, we illustrate Theorem~\ref{thm:unfairnessControl} and examine the decay of DP-tails unfairness toward $0$ as the size $N$ of the unlabeled dataset increases. Figure~\ref{fig:placeholder} displays this evolution, and we observe that a moderate amount of unlabeled data is already sufficient to achieve a good calibration of tail unfairness.
This behavior holds not only for $\hat{g}_{\alpha,p}^{\xi}$ but also for $\hat{g}_{\alpha}^{\xi}$ for any value of $\alpha$. Therefore, in what follows, we mainly focus on the risk (MSE) and the global unfairness (KS) as $\alpha$ varies.

\begin{figure}[t]
    \centering
    \includegraphics[width=0.5\linewidth]{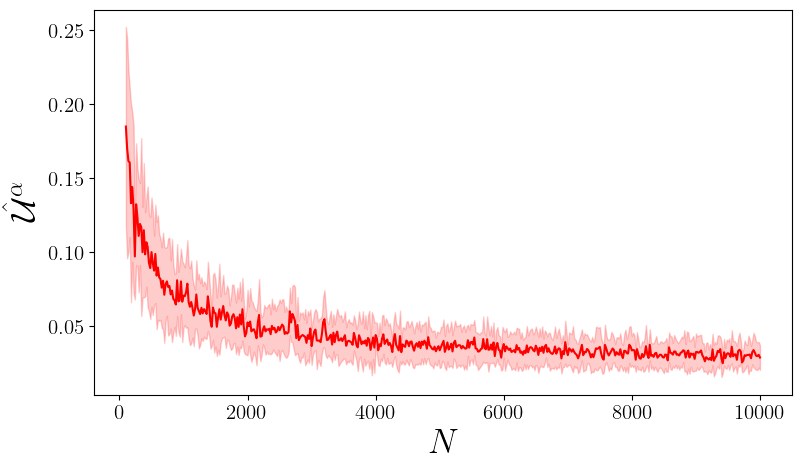}
    \caption{Evolution of the tail unfairness of $\hat{g}_{\alpha,p}^{\xi}$ with respect to the size $N$ of the unlabeled dataset on the synthetic data.}
    \label{fig:placeholder}
\end{figure}

\paragraph{Evolution with respect to $\alpha$.}
Our second study investigates the behavior of $\hat{g}^{\xi}_{\alpha}$ as $\alpha$ varies. 
Figure~\ref{fig:MSESimu} highlights the expected behavior: stricter constraints lead to a larger MSE and smaller KS for the prediction function $\hat{g}_{\alpha}^{\xi}$. This illustrates in particular the advantage of DP-tails fairness constraints, which allow performance closer to that of the base predictor, especially when fairness is required only over a small region of the label space.

\begin{figure}[t]
    \centering
    \includegraphics[width=0.5\linewidth]{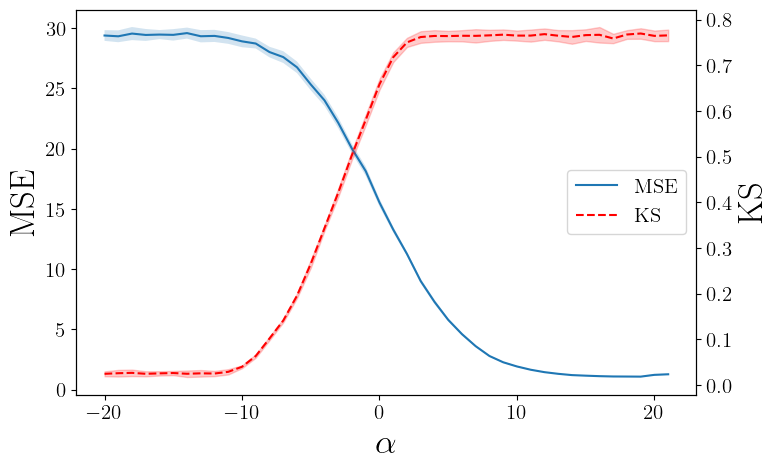}
    \caption{Evolution of the MSE and KS distance  of $\hat{g}_{\alpha}^{\xi}$ with respect to $\alpha$ on the synthetic data.}
    \label{fig:MSESimu}
\end{figure}

\subsection{Real data}

We now apply our methodology to three real datasets commonly used in the context of fairness.
\\
$\bullet$ \texttt{Law School} dataset drawn from the LSAC National Longitudinal Bar Passage Study \cite{wightman1998lsac}, comprises 22,342 samples. We consider a regression task that predicts students' GPA (scaled to $[0,1]$ ) with race as the protected attribute (white---20,641 with $s=1$ vs. non-white---1,701 with $s=0$). 
\\
$\bullet$ \texttt{Communities\&Crime (CRIME)} deals with socio-economic and demographic data of U.S \cite{redmond2002data} communities collected from the 1990 U.S. Census and law enforcement records, with 1,994 instances. The task is to predict the violent crime rate per population. We consider race-related attributes, in particular the proportion of African-American residents, as sensitive attributes, which obtains 1,032 instances for 
$s=0$ and 962 instances for 
$s=1$.
\\
$\bullet$ \texttt{California Housing} dataset contains 20,640 census block groups from the 1990 U.S. Census \cite{geron2017hands-on}. The task is to predict the median house value, which takes values in $[0.15, 5]$. We define a binary sensitive attribute based on latitude, splitting the data into northern region ($s=0$ with 10,313 groups) and southern region ($s=1$ with 10,327 groups) of California.

For each dataset, we split the sample by holding out $20\%$ as a test set. From the remaining $80\%$, we use $70\%$ to train $\hat{f}$ with a \emph{Random Forest} (with 200 trees) and the remaining $30\%$ to estimate the quantile functions and conditional CDFs (and eventually the optimal proportion $\hat{p}$). We repeat this procedure 20 times and report the considered metrics evaluated on the test set along with their standard deviations.

\begin{figure}[t]
  \centering
  \subfloat[Unfair]{\includegraphics[width=0.33\linewidth]{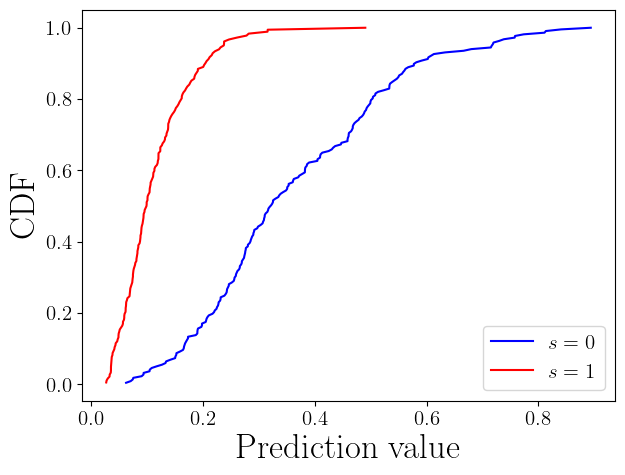}}\hfill
  \subfloat[$(\alpha,p)-$DP-tails]{\includegraphics[width=0.33\linewidth]{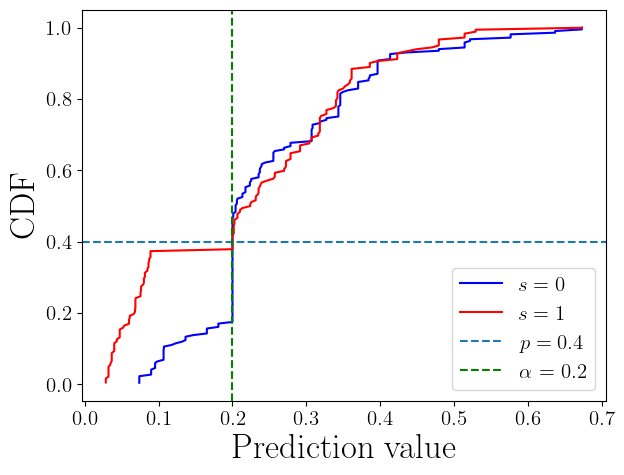}}\hfill
  \subfloat[$\alpha-$DP-tails]{\includegraphics[width=0.33\linewidth]{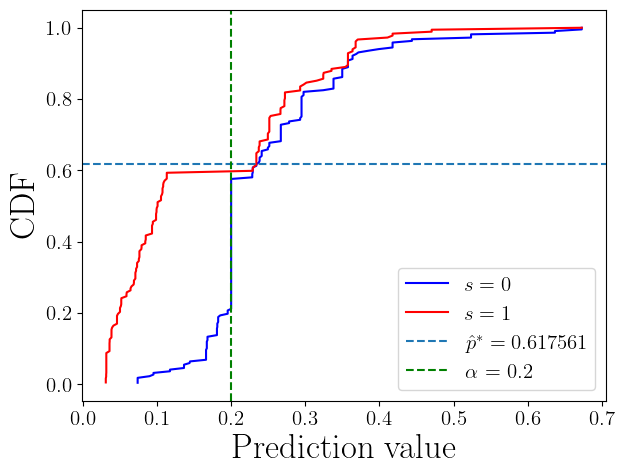}}\hfill
    \caption{Empirical CDF of the predictions on the \texttt{Crime} before and after enforcing DP-tails fairness. Plot (b): the parameters are set (arbitrary) to $\alpha=0.2$, $p=0.4$; Plot (c): $\alpha=0.2$ and $p$ minimizes~\eqref{eq:OptimProblemP}. More runs can be seen in Fig.~\ref{fig:sumcrime} in Appendix~\ref{app:additional numeric}.}
\label{fig:three-in-row}
\end{figure}

Similarly to the simulation study, DP-tails fairness is effectively achieved in all settings. For instance, Figure~\ref{fig:three-in-row} illustrates this performance on the \texttt{CRIME} dataset (see Appendix~\ref{app:additional numeric} for the other datasets). We therefore focus on the analysis of risk (MSE) and global unfairness (KS) of the considered estimation methods. As shown in Figure~\ref{fig:realDataMSEvsKS}, the evolution of risk and global unfairness mirrors what was observed in the simulation study.
Overall, this analysis highlights the benefit of our methods, which provide a continuum of solutions between the \texttt{unfair} predictor $\hat{f}$ and the \texttt{OT} solution that achieves global fairness. In particular, our approach effectively localizes fairness to the tails of the prediction distribution.

\begin{figure}[t]
  \centering
\subfloat[\texttt{Law School}]{\includegraphics[width=0.5\linewidth]{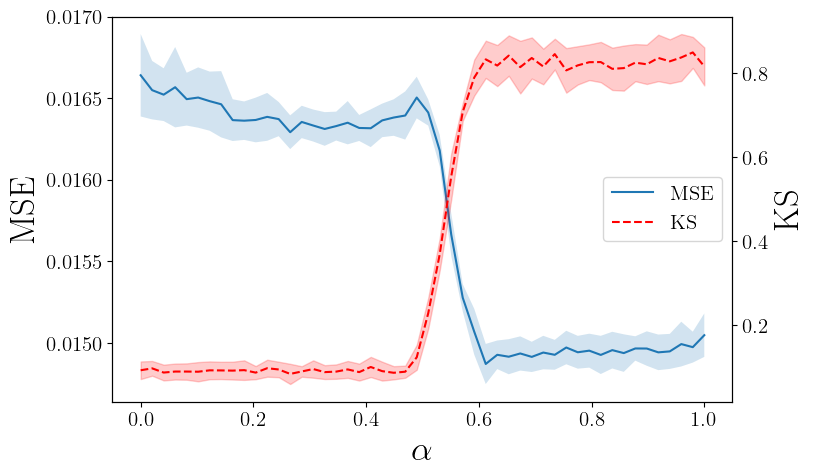}}\hfill
\subfloat[\texttt{CRIME}]{\includegraphics[width=0.5\linewidth]{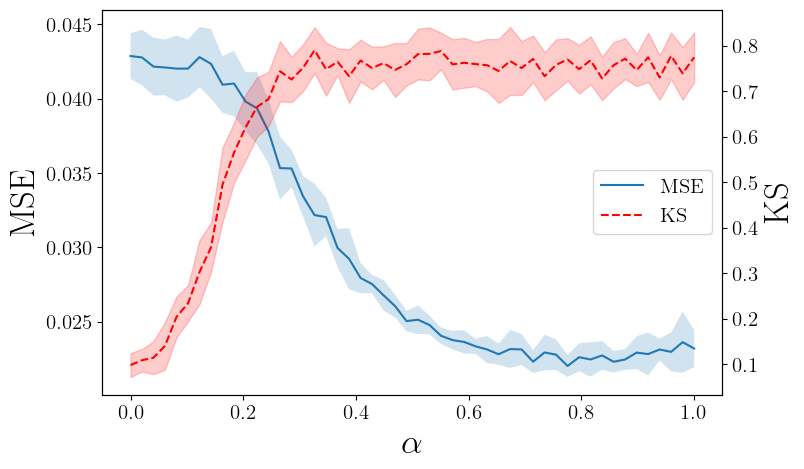}}\\ \centering
\subfloat[\texttt{California Housing}]{\includegraphics[width=0.5\linewidth]{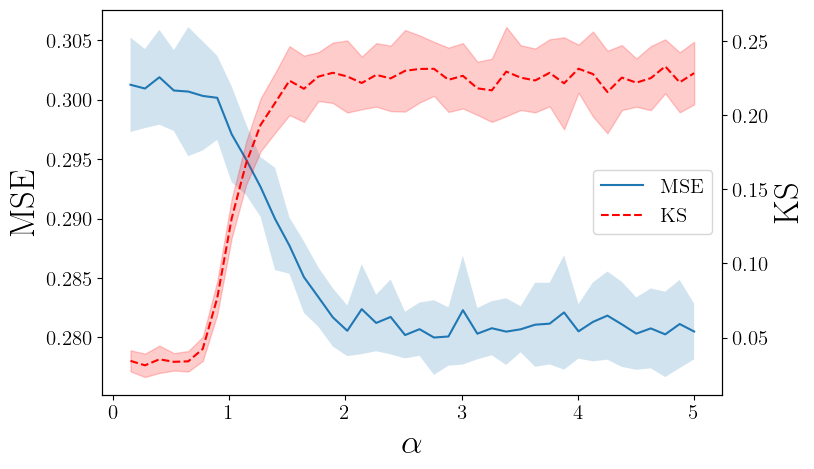}}
\caption{Evolution of the MSE and KS distance of $\hat{g}_{\alpha}^{\xi}$ with respect to $\alpha$ on the real data.}
\label{fig:realDataMSEvsKS}
\end{figure}

\section{Conclusion}
\label{sec:conclusion}
In this work, we formulate the problem of regression under Demographic Parity in a setting where fairness is enforced only for predictions above a given threshold. We consider two specific scenarios: one in which the proportion of the prediction distribution exceeding the threshold is fixed in advance, and another in which this proportion is optimized so as to minimize the risk. We address both problems using optimal transport tools, and demonstrate the effectiveness of our approach numerically, while also providing theoretical guarantees.

From a technical perspective, our framework could be extended to enforce fairness over an interval of prediction values, rather than only on the tails of the distribution. Handling unions of intervals, however, appears more challenging and constitutes an interesting direction for future research. Another complementary line of work would be to investigate DP-tails fairness under approximate distribution matching, \emph{i.e.,} relaxing exact tails-demographic parity to approximate fairness constraints.

\clearpage

\section*{Impact Statement}
This work proposes statistical methods for enforcing tail-based fairness constraints in regression. Such methods may contribute to reducing disparate treatment in high-risk decision-making settings where only tail predictions require/trigger interventions, for instance in clinical risk stratification or allocation of limited public resources. However, our fairness constraints rely on the appropriate choice of sensitive attributes $S$ and a good calibration of the proportion and threshold (that is the parameter $(\alpha,p)$) and mis-specification may lead to unintended allocation outcomes or mask deeper structural inequities. Therefore, any deployment of the proposed methods should be accompanied by domain-specific validation and expert oversight.

\bibliography{biblio}
\bibliographystyle{icml2026}

\newpage
\appendix
\setcounter{equation}{0}
\onecolumn

\begin{center}
\bf{\Large Supplementary Materials}
\end{center}

\vspace*{0.25cm}

\paragraph{Appendix overview.} Appendix~\ref{app:OptimalRules} is dedicated to the proofs related to the optimal solutions and Appendix~\ref{app:proofEmpirical} gathers the proofs for our empirical guarantees. In addition, Appendix~\ref{app:additional numeric} complements our numerical analysis. 

\section{Proofs of optimal rules} 
\label{app:OptimalRules}
This appendix provides the proofs relying on the optimal DP-tails fair predictor. In particular we prove here Theorems~\ref{BaryCenter} and~\ref{thr:optimalPredict}.

\subsection{Proof of Theorem \ref{BaryCenter} }

The idea for proving Theorem~\ref{BaryCenter} is based on the results established in \cite{chzhen2020fairwb,gouic2020projection}.
\begin{lemma}
\label{le:1}
For all $\alpha\in\R$ and $p\in[0,1]$, we have 
\begin{align*}
    \inf_{g\in \mathcal{F}^p_\alpha}\mathbb{E}\left[(f^*(X,S)-g(X,S))^2\right]=\inf_{g\in \mathcal{F}^p_\alpha}\sum_{s\in \mathcal{S}}p_s\mathcal{W}_2^2(\nu_{f^*|s},\nu_{g|s}).
\end{align*}
\end{lemma}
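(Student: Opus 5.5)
We prove the two inequalities separately, following the argument of \cite{chzhen2020fairwb,gouic2020projection}. Write $\mathcal{E}(g)=\sum_{s}p_s\,\mathbb{E}[(f^*(X,s)-g(X,s))^2\mid S=s]$.

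\textbf{Lower bound ($\geq$).} For any $g\in\mathcal{F}^p_\alpha$ and any $s$, the law of the pair $(f^*(X,s),g(X,s))$ conditional on $S=s$ is a coupling of $\nu_{f^*|s}$ and $\nu_{g|s}$. If $\nu_{g|s}$ has finite second moment, the definition of $\mathcal{W}_2$ gives
\[
\mathbb{E}\bigl[(f^*(X,s)-g(X,s))^2\mid S=s\bigr]\geq \mathcal{W}_2^2(\nu_{f^*|s},\nu_{g|s}).
\]
If the second moment is infinite, the left side is infinite, because $\nu_{f^*|s}$ has finite second moment by Assumption~\ref{ass:density}; the inequality then holds trivially. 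Summing with weights $p_s$ and taking the infimum over $g$ gives $\geq$.

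\textbf{Upper bound ($\leq$).} Fix $g\in\mathcal{F}^p_\alpha$ with $\sum_s p_s\mathcal{W}_2^2(\nu_{f^*|s},\nu_{g|s})<\infty$. Define
\[
\tilde g(x,s)=Q_{g|s}\circ F_{f^*|s}\circ f^*(x,s).
\]
By Assumption~\ref{ass:density}, $\nu_{f^*|s}$ has a density, so $F_{f^*|s}$ is continuous. Hence, conditional on $S=s$, the variable $U_s:=F_{f^*|s}(f^*(X,s))$ is uniform on $[0,1]$. It follows that $Q_{g|s}(U_s)\sim\nu_{g|s}$, so $\nu_{\tilde g|s}=\nu_{g|s}$ for every $s$. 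Both defining conditions of $\mathcal{F}^p_\alpha$ (equality of the CDFs on $[\alpha,\infty)$ and $F_{g|s}(\alpha)=p$) depend only on the conditional laws, so $\tilde g\in\mathcal{F}^p_\alpha$.

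Next we compute the risk of $\tilde g$. Since $Q_{f^*|s}(U_s)=f^*(X,s)$ almost surely (the set where $Q_{f^*|s}\circ F_{f^*|s}$ differs from the identity is $\nu_{f^*|s}$-null), the pair $(f^*(X,s),\tilde g(X,s))$ has the same law as $(Q_{f^*|s}(U),Q_{g|s}(U))$ with $U$ uniform on $[0,1]$. This is the comonotone coupling, which is optimal in dimension one. Therefore
\[
\mathbb{E}\bigl[(f^*(X,s)-\tilde g(X,s))^2\mid S=s\bigr]=\int_0^1\bigl(Q_{f^*|s}-Q_{g|s}\bigr)^2=\mathcal{W}_2^2(\nu_{f^*|s},\nu_{g|s}).
\]
Consequently
\[
\inf_{\mathcal{F}^p_\alpha}\mathcal{E}\leq\mathcal{E}(\tilde g)=\sum_s p_s\mathcal{W}_2^2(\nu_{f^*|s},\nu_{g|s}),
\]
and taking the infimum over $g$ gives $\leq$. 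When every $g\in\mathcal{F}^p_\alpha$ has infinite right-hand side, both sides are $+\infty$ by the lower bound.

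\textbf{Main obstacle.} The delicate step is the a.s.\ identity $Q_{f^*|s}(F_{f^*|s}(f^*))=f^*$ together with the uniformity of $U_s$; this is exactly where the density assumption is used. The remaining point to check is measurability of $\tilde g$, which holds because it is a composition of monotone functions with $f^*$.
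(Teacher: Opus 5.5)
Your proof is correct and follows the paper's argument: the lower bound comes from viewing the conditional law of $(f^*(X,s),g(X,s))$ as a coupling, and the upper bound from replacing $g$ by $T_s\circ f^*$, which has the same conditional laws and therefore stays in $\mathcal{F}^p_\alpha$. The only differences are that you write the optimal map explicitly as the monotone rearrangement $T_s=Q_{g|s}\circ F_{f^*|s}$ and treat the infinite-second-moment case, whereas the paper just invokes the existence of an optimal transport map from the density assumption.
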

\begin{proof}
     Firstly, we prove that for all $g\in \mathcal{F}^p_\alpha$ $$\mathbb{E}\left[(f^*(X,S)-g(X,S))^2\right]\geq \sum_{s\in \mathcal{S}}p_s\mathcal{W}_2^2(\nu_{f^*|s},\nu_{g|s}) .$$ Indeed, for each $s\in \mathcal{S}$, we have $$\mathbb{E}[(f^*(X,S)-g(X,S))^2|S=s]\geq \mathcal{W}_2^2(\nu_{f^*|s},\nu_{g|s}),$$ this implies that $$\mathbb{E}\left[(f^*(X,S)-g(X,S))^2\right]=\sum_{s\in\mathcal{S}}p_s\mathbb{E}[(f^*(X,S)-g(X,S))^2|S=s]\geq \sum_{s\in\mathcal{S}}p_s\mathcal{W}_2^2(\nu_{f^*|s},\nu_{g|s}).$$ Therefore, taking the minimum over $\mathcal{F}^p_\alpha$ proves $$ \inf_{g\in \mathcal{F}^p_\alpha}\mathbb{E}\left[(f^*(X,S)-g(X,S))^2\right]\geq \inf_{g\in \mathcal{F}^p_\alpha}\sum_{s\in \mathcal{S}}p_s\mathcal{W}_2^2(\nu_{f^*|s},\nu_{g|s}). $$ In addition, since for each $s\in \mathcal{S}$ the distribution $\nu_{f^*|s}$ admits a density, there exists a optimal transport map $T_s: \mathbb{R}\to \mathbb{R}$ such that $T_s(f^*(X,S))|S=s\sim \nu_{g|s}$ and 
   $$\mathcal{W}_2^2(\nu_{f^*|s},\nu_{g|s})=\mathbb{E}[(f^*(X,S)-T_s(f^*(X,S)))^2|S=s].$$ Denote $h(x,s)=T_s(f^*(x,s))$, we have  $\left(\nu_{h|s}\right)_{s\in \mathcal{S}}=\left(\nu_{g|s}\right)_{s\in \mathcal{S}}$, which implies that $h\in\mathcal{F}^p_\alpha$. Thus,
   \begin{align*}
\sum_{s\in \mathcal{S}}p_s\mathcal{W}_2^2(\nu_{f^*|s},\nu_{g|s})&=\sum_{s\in \mathcal{S}}p_s\mathbb{E}[(f^*(X,S)-h(X,S))^2|S=s]\\&=\mathbb{E}[(f^*(X,S)-h(X,S))^2]\\&\geq \inf_{g\in \mathcal{F}^p_\alpha}\mathbb{E}\left[(f^*(X,S)-g(X,S))^2\right],
   \end{align*}
   which implies $$\inf_{g\in\mathcal{F}^p_\alpha}\sum_{s\in \mathcal{S}}p_s\mathcal{W}_2^2(\nu_{f^*|s},\nu_{g|s})\geq \inf_{g\in \mathcal{F}^p_\alpha}\mathbb{E}\left[(f^*(X,S)-g(X,S))^2\right].  $$
\end{proof}
The following lemma aims to make the set of measures $\{\left(\nu_{g|s}\right)_{s\in\mathcal{S}}|g\in\mathcal{F}^p_\alpha\}$ more tractable.
\begin{lemma}
\label{le:2} For all $\alpha\in \mathbb{R}$ and $p\in[0,1]$,
    $$ \mathcal{M}^p_\alpha=\{\left(\nu_{g|s}\right)_{s\in\mathcal{S}}|g\in\mathcal{F}^p_\alpha\}.$$
\end{lemma}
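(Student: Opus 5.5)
We prove the set equality by establishing the two inclusions separately. The inclusion ``$\supseteq$'' decomposes the conditional laws of a fair $g$ at the threshold $\alpha$. The inclusion ``$\subseteq$'' realizes any admissible family as the conditional laws of a predictor, using the quantile transform. Assumption~\ref{ass:density} is only needed for the second inclusion.

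\emph{Inclusion $\{(\nu_{g|s})_{s}: g\in\mathcal{F}^p_\alpha\}\subseteq \mathcal{M}^p_\alpha$.} Fix $g\in\mathcal{F}^p_\alpha$ and define, for each $s\in\mathcal{S}$,
\[
\nu^-_s:=\nu_{g|s}\bigl(\,\cdot\,\cap(-\infty,\alpha]\bigr), \qquad \nu^+_s:=\nu_{g|s}\bigl(\,\cdot\,\cap(\alpha,+\infty)\bigr),
\]
so that $\nu_{g|s}=\nu^+_s+\nu^-_s$.
\begin{itemize}
\item The mass condition (iii) holds because $\nu^-_s(\mathbb{R})=F_{g|s}(\alpha)=p$ and hence $\nu^+_s(\mathbb{R})=1-p$.
\item The key step is that $\nu^+_s$ does not depend on $s$. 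For every $t\geq\alpha$ we have $\nu^+_s((\alpha,t])=F_{g|s}(t)-F_{g|s}(\alpha)=F_{g|s}(t)-p$, and by the DP-tails condition this quantity is the same for all $s$.
\item The half-open intervals $(\alpha,t]$ with $t\ge \alpha$ form a $\pi$-system generating the Borel $\sigma$-field of $(\alpha,+\infty)$, and all $\nu^+_s$ have the same finite total mass $1-p$. Dynkin's $\pi$--$\lambda$ theorem therefore gives $\nu^+_s=\nu^+_{s'}=:\nu^+$ for all $s,s'$.
\item The measures $\nu^-_s$ and $\nu^+$ are concentrated on $(-\infty,\alpha]$ and $(\alpha,+\infty)$ respectively, which gives (i)--(ii).
\end{itemize}

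\emph{Inclusion $\mathcal{M}^p_\alpha\subseteq\{(\nu_{g|s})_{s}: g\in\mathcal{F}^p_\alpha\}$.} Fix a family $(\mu_s)_s:=(\nu^++\nu^-_s)_s\in\mathcal{M}^p_\alpha$; by (iii) each $\mu_s$ is a probability measure. We would define
\[
g(x,s):=Q_{\mu_s}\circ F_{f^*|s}\circ f^*(x,s),
\]
which is measurable as a composition of monotone maps with the measurable map $f^*$.
\begin{itemize}
\item By Assumption~\ref{ass:density}, $F_{f^*|s}$ is continuous. Hence, conditionally on $S=s$, the variable $U:=F_{f^*|s}(f^*(X,S))$ is uniform on $[0,1]$.
\item The standard quantile-transform property then gives $Q_{\mu_s}(U)\sim\mu_s$, that is, $\nu_{g|s}=\mu_s$.
\item It remains to check $g\in\mathcal{F}^p_\alpha$. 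By (i)--(iii), $F_{g|s}(\alpha)=\mu_s((-\infty,\alpha])=\nu^-_s(\mathbb{R})=p$. Moreover, for every $t\ge\alpha$, $F_{g|s}(t)=p+\nu^+((\alpha,t])$, which does not depend on $s$.
\end{itemize}

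\emph{Main obstacle.} The delicate point is the notion of support in (i). If $\nu^+$ is required to have its closed support inside the open set $(\alpha,+\infty)$, the first inclusion can fail: $\nu_{g|s}$ may accumulate mass just above $\alpha$, so that the support of its restriction contains $\alpha$. I would handle this by reading $\textbf{supp}(\nu^+)\subset(\alpha,+\infty)$ as ``$\nu^+$ is concentrated on $(\alpha,+\infty)$'', i.e.\ $\nu^+((-\infty,\alpha])=0$. This reading is exactly what the construction uses, and it is consistent with the non-attainment phenomenon described in Theorem~\ref{thr:optimalPredict}. The other steps are routine once the $\pi$-system uniqueness argument and the quantile transform are invoked.
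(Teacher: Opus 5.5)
Your proof is correct and follows the same route as the paper. One inclusion splits $\nu_{g|s}$ at $\alpha$; the other defines $g=Q_{\nu^++\nu^-_s}\circ F_{f^*|s}\circ f^*$ and uses that $F_{f^*|s}(f^*(X,s))$ is uniform under Assumption~\ref{ass:density}. Your $\pi$--$\lambda$ argument that $\nu^+_s$ does not depend on $s$, and your reading of $\textbf{supp}(\nu^+)\subset(\alpha,+\infty)$ as $\nu^+((-\infty,\alpha])=0$, fill in points the paper only asserts or leaves implicit (under the closed-support reading the first inclusion would indeed fail).
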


\begin{proof}
  Let $g\in \mathcal{F}^p_\alpha$ and $s\in\mathcal{S}$, for all $A\in \mathcal{B}(\mathbb{R})$
  \begin{align*}
      \nu_{g|s}(A)=\underbrace{\nu_{g|s}(A\cap(-\infty,\alpha])}_{\nu^{-}_{g|s}(A)}+\underbrace{\nu_{g|s}(A\cap(\alpha,+\infty))}_{\nu^{+}_{g|s}(A)}.
  \end{align*}
  Noticing that $\nu_{g|s}^{+}$ would be the same for all $s$, thus $\nu_{g|s}^{+}=:\nu_{g}^{+}$. Since $\nu^-_{g|s}(\R)=\nu^-_{g|s}((-\infty,\alpha])=F_{g|s}(\alpha)=p$, it is easy to see that $$\{\left(\nu_{g|s}\right)_{s\in\mathcal{S}}|g\in\mathcal{F}^p_\alpha\}\subset \mathcal{M}^p_\alpha.$$ Conversely, let $\left(\nu^++\nu^-_s\right)_{s\in\mathcal{S}} \in \mathcal{M}^p_\alpha$. For all $(x,s)\in\mathcal{X}\times \mathcal{S}$, denote $$g(x,s)=Q_{\nu^++\nu_s^-}\circ F_{\nu_{f^*|s}}\circ f^*(x,s).$$ Since for each $s \in \mathcal{S}$, $F_{f^*|s}$
is continuous and we have that $F_{f^*|s}(f^*(X,s))$
is distributed according to a Uniform distribution on $[0,1]$. 
 And then the cumulative distribution of $g(X,S)|S=s$ can be expressed as
$$\begin{aligned}
\mathbb{P}_{X|S=s}(g(X, S) \leq t) & =\mathbb{P}_{X \mid S=s}\left(Q_{\nu^++\nu^-_s} \circ F_{f^* \mid s} \circ f^*(X, s) \leq t\right)\\&= \mathbb{P}_{X \mid S=s}\left(F_{f^*\mid s}(f^*(X, s)) \leq F_{\nu^++\nu^-_s}(t)\right)\\
&=F_{\nu^++\nu^-_s}(t).
\end{aligned}
$$   Additionally, for all $t\geq \alpha$ and $s\in\mathcal{S}$,
\begin{align*}
     \mathbb{P}( g(X,S)\leq t|S=s) &= (\nu^++\nu_s^-)((-\infty,t])\\
     &=p+\nu^+((\alpha,t])\enspace ,
\end{align*}
and this implies that 
$$
\mathbb{P}(\alpha\leq g(X,S)\leq t|S=s)=\mathbb{P}(\alpha\leq g(X,S)\leq t|S=s')\quad \text{for all }s,s'\in \mathcal{S} \enspace.
$$ 
Clearly we have that $F_{g|s}(\alpha)=p$ for all $s\in\mathcal{S}$. Hence $g\in\mathcal{F}_\alpha^p$ and thus 
$$
\mathcal{M}^p_\alpha\subset\{\left(\nu_{g|s}\right)_{s\in\mathcal{S}}|g\in\mathcal{F}^p_\alpha\}\enspace.
$$
\end{proof} 
\begin{proof}[Proof of Theorem \ref{BaryCenter}]
This result is directly deduced from Lemma \ref{le:1} and Lemma \ref{le:2}.   
\end{proof}

\subsection{Proof of Theorem \ref{thr:optimalPredict}}
Theorem~\ref{thr:optimalPredict} will be proven by using the quantile form of 2-Wasserstein distance, given by
\begin{eqnarray*}
 \mathcal{W}_2^2(\mu, \nu) = \int_0^1\left|Q_\mu(t)-Q_\nu(t)\right|^2 \mathrm{~d} t  \enspace .
\end{eqnarray*}
%
Thanks to Theorem~\ref{BaryCenter} and the quantile form of Wasserstein distance, we have 
\begin{eqnarray}
\label{eq:wqu}
   \inf_{g\in \mathcal{F}^p_\alpha}\mathbb{E}\left[(f^*(X,S)-g(X,S))^2\right]&=&\inf_{\left(\nu^++\nu^-_s\right)_{s\in\mathcal{S}}\in \mathcal{M}_\alpha}\sum_{s\in \mathcal{S}}p_s\mathcal{W}_2^2(\nu_{f^*|s},\nu^++\nu^-_s)
 \nonumber
   \\
&= &\inf_{\left(\nu^++\nu^-_s\right)_{s\in\mathcal{S}}\in\mathcal{M}^p_{\alpha}}\sum_{s\in\mathcal{S}}p_s\int_0^1\left( Q_{f^*|s}(x)-Q_{\nu_++\nu_s^-}(x) \right)^2 dx.
\end{eqnarray}

Denote by $\mathcal{Q}^p_{\alpha}$ the set of all families $\left(Q_s\right)_{s\in\mathcal{S}}=\left(Q^{\mathbf{-}}_s\mathds{1}_{[0,p]}+Q^+\mathds{1}_{(p,1]}\right)_{s\in \mathcal{S}}$ such that
\begin{enumerate}
    \item[i.] For all $s\in\mathcal{S}$, $Q^{\mathbf{-}}_s\mathds{1}_{[0,p]}+Q^\mathbf{+}\mathds{1}_{(p,1]}:[0,1]\to\mathbb{R}\cup\{-\infty,+\infty\}$ is non-decreasing and left continuous on $(0,1)$;
\item[ii.] for all $s\in\mathcal{S}$, $Q^{\mathbf{-}}_s(x)\leq \alpha$ for all $x\in[0,p]$;
\item[iii.] $Q^\mathbf{+}(x)>\alpha$ for all $x\in(p,1]$.
\end{enumerate}
\begin{lemma}\label{le:quantile set}
     For all $\alpha\in\mathbb{R}$ and $p\in[0,1]$, we have  $$\mathcal{Q}_\alpha^p=\{\left(Q_{\nu_s}\right)_{s\in\mathcal{S}}:\left(\nu_s\right)_{s\in\mathcal{S}}\in \mathcal{M}_\alpha^p\}.$$ 
\end{lemma}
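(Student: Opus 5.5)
We prove the two inclusions separately, using the standard correspondence between probability measures on $\mathbb{R}$ and non-decreasing, left-continuous functions on $(0,1)$. Recall that for any probability measure $\mu$, $Q_\mu$ is non-decreasing and left-continuous, and $Q_\mu(t)\le y \iff t\le F_\mu(y)$ for $t\in(0,1]$. Conversely, any non-decreasing left-continuous $Q:(0,1)\to\mathbb{R}$ is the quantile function of the law of $Q(U)$ with $U\sim\mathcal{U}[0,1]$.

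\textbf{Inclusion $\supseteq$.} Take $(\nu_s)_{s}=(\nu^++\nu^-_s)_s\in\mathcal{M}^p_\alpha$ and set $Q_s:=Q_{\nu_s}$; condition i.\ holds automatically. The key point is $F_{\nu_s}(\alpha)=\nu^-_s(\mathbb{R})=p$, since $\nu^+$ has no mass on $(-\infty,\alpha]$.

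\emph{Condition ii.} For $t\in(0,p]$ we have $t\le F_{\nu_s}(\alpha)$, hence $Q_s(t)\le\alpha$. The value at $t=0$ follows by the convention $Q(0)=Q(0+)$.

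\emph{Condition iii.} For $t\in(p,1]$ we have $t>F_{\nu_s}(\alpha)$, so $Q_s(t)>\alpha$.

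\emph{Common upper part.} For $t>p$ and $y>\alpha$ we have $F_{\nu_s}(y)=p+\nu^+((\alpha,y])$, which is independent of $s$. Since $Q_s(t)>\alpha$, the infimum defining $Q_s(t)=\inf\{y:F_{\nu_s}(y)\ge t\}$ can be restricted to $y>\alpha$. Hence $Q_s(t)$ does not depend on $s$ on $(p,1]$, and we call it $Q^+$. Setting $Q^-_s:=Q_s|_{[0,p]}$ gives the required decomposition.

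\textbf{Inclusion $\subseteq$.} Take $(Q_s)_s=(Q^-_s\one_{[0,p]}+Q^+\one_{(p,1]})_s\in\mathcal{Q}^p_\alpha$. Let $U\sim\mathcal{U}[0,1]$ and define $\nu_s:=\mathrm{Law}(Q_s(U))$, so that $Q_{\nu_s}=Q_s$ on $(0,1)$. Then set
\begin{align*}
\nu^-_s(A)=\mathbb{P}\bigl(Q^-_s(U)\in A,\ U\le p\bigr),\qquad \nu^+(A)=\mathbb{P}\bigl(Q^+(U)\in A,\ U>p\bigr).
\end{align*}
By ii.\ and iii., these measures are supported in $(-\infty,\alpha]$ and $(\alpha,\infty)$ respectively. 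Moreover $\nu^-_s(\mathbb{R})=p$, $\nu^+(\mathbb{R})=1-p$, and $\nu^+$ does not depend on $s$. Hence $\nu_s=\nu^++\nu^-_s$ belongs to $\mathcal{M}^p_\alpha$.

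\textbf{Main obstacle.} The delicate part is the boundary and degenerate behaviour. Condition i.\ allows the values $\pm\infty$, which can only occur at the endpoints $t=0,1$ for a genuine quantile function; we handle this by noting that a $U$-null set does not affect the law. One must also check that the support conditions hold as closed-support statements. In particular, $\mathbf{supp}(\nu^+)\subset(\alpha,\infty)$ may fail if $Q^+(p+)=\alpha$, because the support is closed. We would therefore interpret the support conditions in $\mathcal{M}^p_\alpha$ as ``the measure gives no mass outside the set'', which is what the proof of Lemma~\ref{le:2} actually uses. Finally, for the identification $Q_{\nu_s}=Q_s$, left-continuity is needed at every point of $(0,1)$, including at $t=p$, where it is guaranteed by condition i.
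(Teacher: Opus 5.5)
Your proof is correct and follows the paper's two-inclusion route. For $\supseteq$ you derive ii, iii and the $s$-independence on $(p,1]$ from $F_{\nu_s}(\alpha)=p$ via $Q(t)\le y\iff t\le F(y)$, which packages the right-continuity the paper uses in its continuity-from-above contradiction; for $\subseteq$ you take $\nu_s=\mathrm{Law}(Q_s(U))$ as the paper does, and your split along $\{U\le p\}$ versus $\{U>p\}$ is just a more direct way to exhibit $\nu^+,\nu^-_s$ than the paper's check that $F_s(\alpha)=p$ and that $\nu_{Y_s}((\alpha,t])$ does not depend on $s$.

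Your caveats are well founded and are glossed over by the paper. With closed supports the lemma genuinely fails: for example $\alpha=0$, $p=1/2$, $Q_s(t)=t-1/2$ forces $0\in\mathbf{supp}(\nu^+)$, so the ``no mass outside'' reading is necessary. Likewise, the identity holds only for $Q_s$ finite on $(0,1)$ and up to endpoint values. In particular, for $p=0$ the convention $Q(0)=Q(0+)$ can violate ii at $t=0$, so your claim that the value at $t=0$ ``follows by the convention'' needs $p>0$.
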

\begin{proof}
    Let $\left(\nu^++\nu^-_s\right)_{s\in\mathcal{S}}\in\mathcal{M}^p_{\alpha}$. For each $s\in\mathcal{S},$ we have 
    \begin{align*}
        Q_{\nu^++\nu_s^-}(x)&=\begin{cases}
            \inf\{t|\nu_s^-((-\infty,t])\geq x \}=:Q^{\mathbf{-}}_s(x),\quad x\in [0,p]\\
             \inf\{t|p+\nu^+((\alpha,t])\geq x \}=:Q^\mathbf{+}(x), \quad x\in (p,1]
        \end{cases} \\&=Q^{\mathbf{-}}_s(x)\mathds{1}_{[0,p]}(x)+Q^\mathbf{+}(x)\mathds{1}_{(p,1]}(x).
    \end{align*} 
    Since $\nu_s^-((-\infty,\alpha])=p$, we have $Q^{\mathbf{-}}_s(x)\leq \alpha$ for all $x\in[0,p]$. In addition,   $Q^\mathbf{+}(x)\geq\alpha$ for all $x\in(p,1]$. Assume that there exists $x_0\in(p,1]$ such that $Q^+(x_0)=\alpha$. Then there exists, $\{t_n\}_{n\in\mathbb{N}}\subset (\alpha,+\infty)$ and $t_n \downarrow \alpha$ such that $\nu^+((\alpha,t_n])\geq x_0-p$ for all $n$. Since $\nu^+$ is finite measure, we can get that $$0=\nu^+(\varnothing)=\lim_{n\to+\infty}\nu^+((\alpha,t_n])\geq x_0-p>0,$$ which is a contradiction. Thus, we obtain that $Q^+(x)>\alpha$ for all $x\in(p,1]$ and hence $\left(Q_{\nu^++\nu_s^-}\right)_{s\in\mathcal{S}}\in\mathcal{Q}_{\alpha}^p$.\\

    Conversely, let $\left(Q^{\mathbf{-}}_s\mathds{1}_{[0,p]}+Q^\mathbf{+}\mathds{1}_{(p,1]}\right)_{s\in\mathcal{S}}\in\mathcal{Q}_\alpha^p$. Let $X\sim\mathcal{U}([0,1])$ and $$Y_s=Q^{\mathbf{-}}_s(X)\mathds{1}_{[0,p]}(X)+Q^\mathbf{+}(X)\mathds{1}_{(p,1]}(X).$$ Denote $Q_s=Q^{\mathbf{-}}_s\mathds{1}_{[0,p]}+Q^\mathbf{+}\mathds{1}_{(p,1]}$ and $F_s$ is the CDF of $Y_s$, then for all $x\in[0,1]$ we have \begin{align}\label{eq:uniform}
       Q_s(x)=\inf\{y\in\R: F_s(y)\geq x\},
    \end{align} which means that  $Q_s$ is quantile function of distribution of $Y_s$. Indeed, let us define
    $$Q_s^{-1}(x)=\sup\{t\in(0,1):Q_s(t)\leq x\},\quad x\in\R.$$
  Thanks to  \citep[Lemma 1]{wacker2023pleasetextitanothernotegeneralized}, for all $x\in [0,1]$ and $y\in\R$ $$ Q_s(x)\leq y\Leftrightarrow x\leq Q^{-1}_s(y).$$ Then for all $t\in\mathbb{R}$,
    $$F_s(t)=\mathbb{P}(Y_s\leq t)=\mathbb{P}(Q_s(X)\leq t)=\mathbb{P}(X\leq Q_s^{-1}(t))=Q_s^{-1}(t).$$
Thus, for all $x\in[0,1]$ we obtain that \begin{align*}
   \inf\{y\in\R:F_s(y)\geq x\}=\inf\{y\in\R:Q_s^{-1}(y)\geq x\}=\inf\{y\in\R:y\geq Q_s(x)\}=Q_s(x).
 \end{align*}

   Denote by $\nu_{Y_s}$ the distribution of $Y_s$. It can be seen that $Q_{s}(p)\leq \alpha$ and hence $$\nu_{Y_s}((-\infty,\alpha])=F_s(\alpha)\geq p.$$ In addition, since $Q_s(p+\varepsilon)>\alpha$ for all $\varepsilon>0$,
    $$ F_s(\alpha)<p+\varepsilon,$$ and let $\varepsilon\to 0$, to get that $F_s(\alpha)\leq p.$ Thus $F_s(\alpha)=p$ for all $s\in\mathcal{S}$. 

    For all $t>\alpha$ and $s\in\mathcal{S}$, since $Q_s^-(x)\leq \alpha$ for all $x\leq p$, we have
    \begin{align*}
       \nu_{Y_s}((\alpha,t])= \mathbb{P}(\alpha< Y_s\leq t)&=\mathbb{P}(\alpha<Q^{\mathbf{-}}_s(X)\mathds{1}_{(0,p]}(X)+Q^\mathbf{+}(X)\mathds{1}_{(p,1]}(X)\leq t)\\
        &=\mathbb{P}(X\in(p,1],\alpha<Q^\mathbf{+}(X)\mathds{1}_{(p,1]}(X)\leq t).
    \end{align*}
    This holds independently from the value of  $s$, thus $(\nu_{Y_s})_{s\in\mathcal{S}}\in\mathcal{M}_\alpha^p$. Therefore,
    $$\mathcal{Q}_\alpha^p=\{\left(Q_{\nu_s}\right)_{s\in\mathcal{S}}:(\nu_s)_{s\in\mathcal{S}}\in \mathcal{M}_s^p\}.$$
\end{proof} 
From this Lemma and eq.(\ref{eq:wqu}), we can get that
\begin{equation}
\label{eq:main}
    \inf_{g\in \mathcal{F}^p_\alpha}\mathbb{E}\left[(f^*(X,S)-g(X,S))^2\right]=\inf_{\left(Q_s\right)_{s\in\mathcal{S}}\in\mathcal{Q}_\alpha^p}  \sum_{s\in\mathcal{S}}p_s\int_0^1\left( Q_{f^*|s}(x)-Q_s(x) \right)^2 dx.
\end{equation} 
We will denote by $(\mathcal{P'})$ the problem in the right hend side of eq.(\ref{eq:main}) and by $\mathcal{F}[0,1]$ the set of all functions in $[0,1]$. Define $\mathcal{L}: (\mathcal{F}[0,1])^{|\mathcal{S}|}\to \mathbb{R}$, such that $$\mathcal{L}(Q)=\sum_{s\in\mathcal{S}}p_s\int_0^1\left( Q_{f^*|s}(x)-Q_s(x) \right)^2 dx,$$ for all $Q=(Q_s)_{s\in\mathcal{S}}\in (\mathcal{F}[0,1])^{|\mathcal{S}|}$.
The next Lemma give us an approximation of optimal solution of $(\mathcal{P}')$.

\begin{lemma}
\label{le:sol}
Define $Q^*=(Q^*_s)_{s\in\mathcal{S}}\in(\mathcal{F}[0,1])^{|\mathcal{S}|}$ such that for each $s\in\mathcal{S}$
    $$ 
    Q^*_s(x)=
        \min\{\alpha, Q_{f^*|s}(x)\}\mathds{1}_{[0,p]} (x) +
        \max\{\alpha,\sum_{s\in\mathcal{S}}p_sQ_{f^*|s}(x)\}\mathds{1}_{(p,1]}  (x) , \quad\text{for all $s\in\mathcal{S}$.}
      $$ 
   Then 
        \begin{align*}
\inf_{Q=\left(Q_s\right)_{s\in\mathcal{S}}\in\mathcal{Q}_\alpha^p}\mathcal{L}(Q)=\mathcal{L}(Q^*). 
        \end{align*} 
  Moreover, if $\alpha\leq \sum\limits_{s\in\mathcal{S}}p_sQ_{f^*|s}(p)$ then $Q^*\in\mathcal{Q}^p_\alpha$ which means $Q^*$ is optimal solution of $(\mathcal{P}')$. 
  Otherwise, $(\mathcal{P}')$ does not admit a minimum. In addition  
$$|\mathcal{L}\left(Q^{\xi}\right)-\mathcal{L}\left(Q^{*})\right|=O(\xi)\quad \text{(as $\xi\to0$)},$$ where for $\xi>0$, $Q^{\xi}=(Q^{\xi}_s)_{s\in\mathcal{S}}$ is such that for each $s\in\mathcal{S}$
        \begin{align*}
            Q^\xi_s(x)=
        \min\{\alpha, Q_{f^*|s}(x)\}\mathds{1}_{(0,p]}(x)+
        \max\{\alpha+\xi,\sum_{s\in\mathcal{S}}p_sQ_{f^*|s}(x)\}\mathds{1}_{(p,1]}(x).
        \end{align*}
\end{lemma}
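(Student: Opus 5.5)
The plan is to decouple $\mathcal{L}$ into a free part on $[0,p]$ and a shared part on $(p,1]$, and to minimize each pointwise in $x$ under a relaxed constraint. Write $\bar Q(x):=\sum_{s}p_sQ_{f^*|s}(x)$. For any $Q=(Q^-_s\mathds{1}_{[0,p]}+Q^+\mathds{1}_{(p,1]})_s\in\mathcal{Q}^p_\alpha$ we have
\[
\mathcal{L}(Q)=\sum_s p_s\int_0^p (Q_{f^*|s}-Q^-_s)^2\,dx+\int_p^1 \Big[(Q^+-\bar Q)^2+\sum_s p_s(Q_{f^*|s}-\bar Q)^2\Big]dx .
\]
This is the bias–variance identity, and it uses $\sum_s p_s=1$.

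\textbf{Lower bound.} On $[0,p]$ the constraint is $Q^-_s\le\alpha$, so pointwise $(Q_{f^*|s}-Q^-_s)^2\ge (Q_{f^*|s}-\min\{\alpha,Q_{f^*|s}\})^2$. On $(p,1]$ the constraint is $Q^+>\alpha$, so pointwise $(Q^+-\bar Q)^2\ge(\max\{\alpha,\bar Q\}-\bar Q)^2$. Integrating gives $\mathcal{L}(Q)\ge\mathcal{L}(Q^*)$ for every $Q\in\mathcal{Q}^p_\alpha$, and hence $\inf\ge\mathcal{L}(Q^*)$. This needs $\mathcal{L}(Q^*)<\infty$, which follows from the finite second moments in Assumption~\ref{ass:density}: $|Q^*_s|\le|\alpha|+|Q_{f^*|s}|+|\bar Q|$.

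\textbf{Upper bound via $Q^\xi$.} First check that $Q^\xi\in\mathcal{Q}^p_\alpha$:
\begin{itemize}
\item Each piece is nondecreasing and left continuous, since min and max with constants preserve both properties of quantile functions, and $\bar Q$ is itself nondecreasing and left continuous.
\item On $[0,p]$ the function is $\le\alpha$, and on $(p,1]$ it is $\ge\alpha+\xi>\alpha$. So monotonicity across the jump at $p$ holds.
\end{itemize}
Next compare with $Q^*$. We have $0\le Q^\xi_s-Q^*_s\le\xi$ pointwise, and the difference is supported on $(p,1]$. Therefore
\[
|\mathcal{L}(Q^\xi)-\mathcal{L}(Q^*)|\le \sum_s p_s\int\big(2|Q_{f^*|s}-Q^*_s|\,\xi+\xi^2\big)\le 2\xi\sqrt{\mathcal{L}(Q^*)}+\xi^2=O(\xi),
\]
using Cauchy–Schwarz. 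Letting $\xi\to0$ gives $\inf\le\mathcal{L}(Q^*)$, so equality holds.

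\textbf{Case analysis, which is the main obstacle.} Suppose $\alpha\le\bar Q(p)$. Then, by left continuity and monotonicity, $\bar Q(x)>\alpha$ for $x>p$ except possibly where $\bar Q(x)=\alpha$ on some interval $(p,p+\delta]$. This is the delicate point. If $\bar Q$ is strictly increasing (densities give strictly increasing quantiles on the support interior), then $\bar Q(x)>\bar Q(p)\ge\alpha$ for $x>p$, so $Q^*$ satisfies iii and $Q^*\in\mathcal{Q}^p_\alpha$; in that case $Q^*$ attains the infimum. Failing that, I would note that equality $Q^*=\alpha$ on a null set is immaterial, and modify it there.

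Now suppose $\alpha>\bar Q(p)$. By left continuity, $\bar Q<\alpha$ on some $(p,p+\delta)$. Any minimizer $Q$ must achieve pointwise equality almost everywhere in the lower bound, which forces $Q^+=\alpha$ a.e. on $(p,p+\delta)$. This contradicts $Q^+>\alpha$, and so no minimum exists.

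The step I expect to be hardest is this strictness and equality-case bookkeeping. It requires justifying that equality in the integrated pointwise bound forces a.e. pointwise equality, which holds because the integrands differ by a nonnegative function. It also requires handling possible flat pieces of $\bar Q$ at level $\alpha$ just after $p$.
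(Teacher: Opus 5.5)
Everything except the non-existence step is correct. Your lower bound, which minimizes the two pieces of the bias--variance split pointwise, is the paper's argument. Your treatment of $Q^\xi$ is also correct, but it takes a shorter route than the paper. The paper locates $t_\alpha=\sup\{t:\bar Q(t)\le\alpha\}$ and $t^\xi_\alpha=\sup\{t:\bar Q(t)\le\alpha+\xi\}$ relative to $p$ and gets $|\mathcal{L}(Q^\xi)-\mathcal{L}(Q^*)|\le 2\xi|\alpha-\bar Q(p)|+2\xi^2$. That bound shows the gap is only $O(\xi^2)$ when $\alpha\le\bar Q(p)$. Your bound $2\xi\sqrt{\mathcal{L}(Q^*)}+\xi^2$ comes from $0\le Q^\xi_s-Q^*_s\le\xi$ and Cauchy--Schwarz, needs no case analysis, and only uses $\mathcal{L}(Q^*)<\infty$. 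The feasibility case is correct too, and your hedge there is unnecessary: Assumption~\ref{ass:density} rules out atoms, so each $Q_{f^*|s}$, hence $\bar Q$, is strictly increasing on $(0,1)$.

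The genuine gap is the non-existence step. The sentence ``By left continuity, $\bar Q<\alpha$ on some $(p,p+\delta)$'' looks at the wrong side of $p$. Left continuity controls $\bar Q$ on $(p-\delta,p]$, whereas you need $\bar Q(p+)<\alpha$. $\bar Q$ jumps at $p$ as soon as some $F_{f^*|s}$ is flat at level $p$, i.e.\ $\nu_{f^*|s}$ has a gap in its support just after its $p$-quantile, and Assumption~\ref{ass:density} allows this. In that situation the claim itself is false, so no argument can close this step. For instance, give every group the law with density $\tfrac12$ on $[0,1]\cup[2,3]$, so that $Q_{f^*|s}(t)=2t$ on $[0,\tfrac12]$ and $Q_{f^*|s}(t)=2t+1$ on $(\tfrac12,1]$, and take $p=\tfrac12$, $\alpha=\tfrac32$. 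Then $\bar Q(p)=1<\alpha$, yet $\bar Q>2>\alpha$ on $(p,1]$. Hence $Q^*=(Q_{f^*|s})_{s}\in\mathcal{Q}^p_\alpha$ with $\mathcal{L}(Q^*)=0$, and the minimum is attained; indeed $f^*$ itself is $(\alpha,p)$-DP-tails fair. The paper's proof never argues the non-existence claim, so there is nothing there to fill the hole.

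What is true, for $p<1$, is a dichotomy at $\bar Q(p+)$. (At $p=1$, condition iii is vacuous and $Q^*$ is always admissible.)
\begin{itemize}
\item If $\alpha\le\bar Q(p+)$, strict monotonicity gives $\bar Q>\alpha$ on $(p,1]$, so $Q^*$ is a minimizer.
\item If $\alpha>\bar Q(p+)$, then $\bar Q<\alpha$ on some $(p,p+\delta)$. Every admissible $Q^+>\alpha$ makes $(Q^+-\bar Q)^2-(\alpha-\bar Q)^2>0$ there, and your equality-case argument goes through verbatim.
\end{itemize}
Equivalently, you can add a hypothesis making $\bar Q$ continuous at $p$ (e.g.\ each $\nu_{f^*|s}$ supported on an interval). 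Then $\bar Q(p+)=\bar Q(p)$ and the lemma holds as stated.
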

\begin{proof}
For each $\left(Q_s\right)_{s\in\mathcal{S}}\in\mathcal{Q}_{\alpha}^p$, we have
\begin{align}\label{epress:mse}
    \sum_{s\in\mathcal{S}}p_s\int_0^1\left( Q_{f^*|s}(x)-Q_s(x) \right)^2 dx&=\sum_{s\in\mathcal{S}}p_s\left[\int_0^p(Q_{f^*|s}(x)-Q_s(x))^2dx+\int_p^1(Q_{f^*|s}(x)-Q_s(x))^2dx \right]\nonumber\\
&=\sum_{s\in\mathcal{S}}p_s\left[\int_0^p(Q_{f^*|s}(x)-Q^\mathbf{-}_s(x))^2dx+\int_p^1(Q_{f^*|s}(x)-Q^\mathbf{+}(x))^2dx \right]\nonumber\\
&=\sum_{s\in\mathcal{S}}p_s\int_0^p(Q_{f^*|s}(x)-Q^\mathbf{-}_s(x))^2dx+\sum_{s\in\mathcal{S}}p_s\int_p^1(Q_{f^*|s}(x)-Q^\mathbf{+}(x))^2dx\nonumber\\
&=\sum_{s\in\mathcal{S}}p_s\int_0^p(Q_{f^*|s}(x)-Q^\mathbf{-}_s(x))^2dx+\int_p^1\left(\sum_{s\in\mathcal{S}}p_sQ_{f^*|s}(x)-Q^\mathbf{+}(x)\right)^2dx+\nonumber\\
&\qquad +\int_p^1\sum_{s\in\mathcal{S}}p_sQ_{f^*|s}^2(x)dx-\int_p^1\left(\sum_{s\in\mathcal{S}}p_sQ_{f^*|s}(x)\right)^2dx.
\end{align} Hence, solving 
 $(\mathcal{P}')$ is equivalent to solving $$\sum_{s\in\mathcal{S}}p_s\inf_{Q_s^\mathbf{-}\in\mathcal{Q}^-_{p,\alpha}}\int_0^p(Q_{f^*|s}(x)-Q^\mathbf{-}_s(x))^2dx+\inf_{Q^\mathbf{+}\in\mathcal{Q}^+_{p,\alpha}}\int_p^1\left(\sum_{s\in\mathcal{S}}p_sQ_{f^*|s}(x)-Q^\mathbf{+}(x)\right)^2dx,$$ where \begin{align*}
     \mathcal{Q}^-_{p,\alpha}&=\{f:[0,p]\to(-\infty,\alpha]: \text{$f$ is left continuous and non-decreasing} \};\\
      \mathcal{Q}^+_{p,\alpha}&=\{g:(p,1]\to(\alpha,+\infty): \text{$g$ is left continuous and non-decreasing} \}.
 \end{align*}
 For each $s\in\mathcal{S}$, if $Q_{f^*|s}(p)\leq \alpha$ then due to the strictly monotonicity of $Q_{f^*|s}$ we can get that
 $$Q_{f^*|s}\mathds{1}_{(0,p]}=\arg\inf_{Q_s^\mathbf{-}\in\mathcal{Q}^-_{p,\alpha}}\int_0^p(Q_{f^*|s}(x)-Q^\mathbf{-}_s(x))^2dx.$$  
    \begin{figure}[ht]
        \centering
        \includegraphics[width=0.45\linewidth]{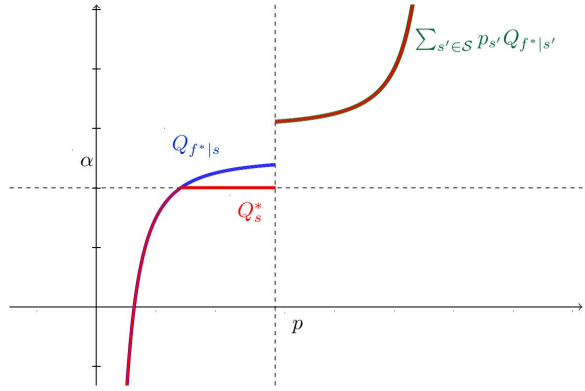}
        \caption{Optimal solution of $\mathcal{(P')}$}
        \label{fig:enter-label}
    \end{figure}
 Otherwise, in the case where $Q_{f^*|s}(p)>\alpha $, we need to introduce the following notation 
 $$
 x_s=\inf\{x|Q_{f^*|s}(x)>\alpha\}<p.
 $$ 
 Since $Q_{f^*|s}$ is strictly increasing, $Q_{f^*|s}(x)>\alpha$ for all $x\in(x_s,p]$. 
 Then for all $Q_s^-\in\mathcal{Q}^-_{\alpha,p}$ we have 
 \begin{align*}
        \int_0^p(Q_{f^*|s}(x)-Q^\mathbf{-}_s(x))^2dx&=\int_0^{x_s}(Q_{f^*|s}(x)-Q^\mathbf{-}_s(x))^2dx+\int_{x_s}^p(Q_{f^*|s}(x)-Q^\mathbf{-}_s(x))^2dx\\
        &\geq \int_{x_s}^p(Q_{f^*|s}(x)-\alpha)^2dx,
    \end{align*} 
    and the equality holds if (see Figure~\ref{fig:enter-label})
    $$
    Q_s^-=Q_{f^*|s}\mathds{1}_{[0,x_s]}+\alpha\mathds{1}_{(x_s,p]}.
    $$ 
To sum up, 
$$
\min\{\alpha,Q_{f^*|s}\}\mathds{1}_{(0,p]}=\underset{Q_s^\mathbf{-}\in\mathcal{Q}^-_{p,\alpha}}{\arg\inf}\int_0^p(Q_{f^*|s}(x)-Q^\mathbf{-}_s(x))^2dx.
$$ 
On the other hand, for all $Q\in\mathcal{Q}^+_{p,\alpha}$, we have 
$$
\int_p^1\left(\sum_{s\in\mathcal{S}}p_sQ_{f^*|s}(x)-Q^\mathbf{+}(x)\right)^2dx\geq\int_p^1\left(\sum_{s\in\mathcal{S}}p_sQ_{f^*|s}(x)-Q^*_s(x)\mathds{1}_{(p,1]}\right)^2dx ,
$$
where $Q_s^* = \min\{\alpha, Q_{f^*|s}\}\mathds{1}_{[0,p]}  +
        \max\{\alpha,\sum_{s\in\mathcal{S}}p_sQ_{f^*|s}\}\mathds{1}_{(p,1]}  , ~\text{for all $s\in\mathcal{S}$.}$ Thus, $ \mathcal{L}\left(\left(Q_s\right)_{s\in\mathcal{S}} \right)\geq\mathcal{L}\left(\left(Q^*_s\right)_{s\in\mathcal{S}} \right)$ for all $\left(Q_s\right)_{s\in\mathcal{S}}\in\mathcal{Q}^p_\alpha$. 
        Let us define (See Figure~\ref{fig:e})
\begin{equation}
\label{e-optimal}
Q^\xi_s(x)=
        \min\{\alpha, Q_{f^*|s}(x)\}\mathds{1}_{(0,p]}+
        \max\{\alpha+\xi,\sum_{s\in\mathcal{S}}p_sQ_{f^*|s}(x)\}\mathds{1}_{(p,1]},\quad\text{for all $s\in\mathcal{S}$.}     
\end{equation} 
It is clear that $\left(Q^\xi_s\right)_{s\in\mathcal{S}}\in \mathcal{Q}^p_\alpha$ for all $\xi>0$.
        \begin{figure}[ht]
            \centering
            \includegraphics[width=0.4\linewidth]{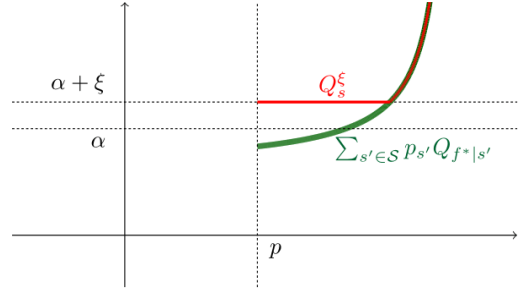}
            \caption{$\xi-$optimal solution}
            \label{fig:e}
        \end{figure} 
        Thanks to the Eq~\eqref{epress:mse}, we can deduce that
   \begin{align*}
       &\quad \left|\mathcal{L}\left(\left(Q^\xi_s\right)_{s\in\mathcal{S}} \right)-\mathcal{L}\left(\left(Q^*_s\right)_{s\in\mathcal{S}} \right)\right|\\\leq&\quad  \int_p^1\left| \left(\sum_{s\in\mathcal{S}}p_sQ_{f^*|s}(x)-\max\left\{ \alpha+\xi,\sum_{s\in\mathcal{S}}p_sQ_{f^*|s}(x)\right\}\right)^2-\left(\sum_{s\in\mathcal{S}}p_sQ_{f^*|s}(x)-\max\left\{ \alpha,\sum_{s\in\mathcal{S}}p_sQ_{f^*|s}(x)\right\}\right)^2\right| dx.
   \end{align*} For $\xi>0,$ denote  \begin{align*}
      t_\alpha &=\sup\left\{ t\in (0,1): \sum_{s\in\mathcal{S}}p_sQ_{f^*|s} (t)\leq \alpha\right\},\quad t^\xi_\alpha =\sup\left\{ t\in (0,1): \sum_{s\in\mathcal{S}}p_sQ_{f^*|s} (t)\leq \alpha+\xi\right\}.
  \end{align*} Let us consider the three following cases.\\
If $t_\alpha\leq t_\alpha^\xi\leq p$ then it is easy to see that
   \begin{align*}
      \left|\mathcal{L}\left(\left(Q^\xi_s\right)_{s\in\mathcal{S}} \right)-\mathcal{L}\left(\left(Q^*_s\right)_{s\in\mathcal{S}} \right)\right|=0.
   \end{align*} If $t_\alpha\leq p\leq t_\alpha^\xi$ then \begin{align*}
        \left|\mathcal{L}\left(\left(Q^\xi_s\right)_{s\in\mathcal{S}} \right)-\mathcal{L}\left(\left(Q^*_s\right)_{s\in\mathcal{S}} \right)\right|\leq \int_p^{t^\xi_\alpha}\left| \sum_{s\in\mathcal{S}}p_sQ_{f^*|s}(x)-(\alpha+\xi) \right|^2 dx\leq \xi^2(t^\xi_\alpha-p)\leq \xi^2
   \end{align*} The second equality holds since $\alpha< \sum_{s\in\mathcal{S}}p_sQ_{f^*|s}(x)\leq \alpha+\xi$ for all $x\in(p,t^\xi_\alpha)$.\\ Otherwise, we have \begin{align*}
       \left|\mathcal{L}\left(\left(Q^\xi_s\right)_{s\in\mathcal{S}} \right)-\mathcal{L}\left(\left(Q^*_s\right)_{s\in\mathcal{S}} \right)\right|&\leq \int_p^{t_\alpha}\left| \left(\alpha+\xi-\sum_{s\in\mathcal{S}}p_sQ_{f^*|s}(x)\right)^2-\left(\alpha-\sum_{s\in\mathcal{S}}p_sQ_{f^*|s}(x)\right)^2\right|dx+\\
        &\quad + \int_{t_\alpha}^{t^\xi_\alpha}\left| \alpha+\xi-\sum_{s\in\mathcal{S}}p_sQ_{f^*|s}(x)\right|^2 dx\\
        &\leq \int_p^{t_\alpha}\left|2\xi\left(\alpha-\sum_{s\in\mathcal{S}}p_sQ_{f^*|s}(x) \right)+\xi^2 \right|dx+\xi^2(t^\xi_\alpha -t_\alpha)\\
        &\leq \int_p^{t_\alpha}\left( 2\xi\left| \alpha-\sum_{s\in\mathcal{S}}p_sQ_{f^*|s}(p) \right|+\xi^2\right)dx+\xi^2\\
        &\leq \left( 2\xi\left| \alpha-\sum_{s\in\mathcal{S}}p_sQ_{f^*|s}(p) \right|+\xi^2\right) (t_\alpha-p)+\xi^2\\&\leq 2\xi\left| \alpha-\sum_{s\in\mathcal{S}}p_sQ_{f^*|s}(p) \right|+2\xi^2. 
   \end{align*} Thus we get that  $ \mathcal{L}\left(\left(Q^\xi_s\right)_{s\in\mathcal{S}} \right)\to\mathcal{L}\left(\left(Q^*_s\right)_{s\in\mathcal{S}} \right)$ as $\xi\to 0$.  Therefore, $$\inf_{Q=\left(Q_s\right)_{s\in\mathcal{S}}\in\mathcal{Q}_\alpha^p}\mathcal{L}(Q)=\mathcal{L}(Q^*).$$
\end{proof}
From Lemma \ref{le:sol}, it can be seen that $(\mathcal{P}')$ does not always admit a minimum, but an infimum. 
The sequence  $(Q^\xi)_{\xi}$ given by (\ref{e-optimal}) is called sequence of $\xi-$optimal solution of $(\mathcal{P}')$, and its $L_2-$limit will be the optimal solution of $(\mathcal{P}')$ in the case $\alpha\leq \sum\limits_{s\in\mathcal{S}}p_sQ_{f^*|s}(p)$.

\begin{proof}[Proof of Theorem \ref{thr:optimalPredict}] 
 Let $U\sim\mathcal{U}[0,1]$, and denote by $\mu_s$ the distribution  of $Q^{*}_s(U)$ for each $s\in\mathcal{S}$. Then, similar to the proof of Eq:(\ref{eq:uniform})
, we can prove that $Q^{*}_s$ is the quantile function of $\mu_s$.  For all $s\in\mathcal{S}$ and $t\in\mathbb{R}$, we have that $$\begin{aligned}
\mathbb{P}(g^{*}_{\alpha,p}(X, S) \leq t|S=s) & =\mathbb{P}\left(Q^*_s \circ F_{f^* \mid s} \circ f^*(X, s) \leq t|S=s\right) \\
& =\mathbb{P}\left(f^*(X, s) \leq Q_{f^*\mid s} \circ F_{\nu_s}(t)|S=s\right)\\
&=F_{f^*|s}(Q_{f^*|s}(F_{\nu_s}(t)))\\
&=F_{\nu_s}(t),
\end{aligned}
$$which shows that $\left(\nu_{g^{*}_{\alpha,p}}\right)_{s\in\mathcal{S}}=(\mu_s)_{s\in\mathcal{S}}$. From Lemma \ref{le:sol}, \begin{align*}
  \mathbb{E}\left[(f^*(X,S)-g^*_{\alpha,p}(X,S))^2\right] &=\sum_{s\in\mathcal{S}}p_s \mathbb{E}\left[(f^*(X,s)- Q^*_s \circ F_{f^* \mid s} \circ f^*(X, s) )^2|S=s\right]\\&=\sum_{s\in\mathcal{S}}p_s\mathcal{W}\left(\nu_{f^*|s},\nu_{g^{*}_{\alpha,p}|s}\right)\\&=\sum_{s\in\mathcal{S}}p_s\int_0^1 \left(Q_{f^*|s}(x)-Q^{*}_s(x) \right)^2 dx\\
    &=\inf_{\left(Q_s\right)_{s\in\mathcal{S}}\in\mathcal{Q}_\alpha^p}  \sum_{s\in\mathcal{S}}p_s\int_0^1\left( Q_{f^*|s}(x)-Q_s(x) \right)^2 dx\\
    &= \inf_{g\in \mathcal{F}^p_\alpha}\mathbb{E}\left[(f^*(X,S)-g(X,S))^2\right].
\end{align*}
Similarly, we get that
$$\mathbb{E}\left[(f^*(X,S)-g^{*,\xi}_{\alpha,p}(X,S))^2\right]=\sum_{s\in\mathcal{S}}p_s\mathcal{W}\left(\nu_{f^*|s},\nu_{g^{*,\xi}_{\alpha,p}|s}\right)=\sum_{s\in\mathcal{S}}p_s\int_0^1\left( Q_{f^*|s}(x)-Q^{*,\xi}_s(x) \right)^2 dx.$$ Thus, from Lemma \ref{le:sol} we can deduce Theorem~\ref{thr:optimalPredict}.
\end{proof}

\begin{proof}[Proof of Remark \ref{erisk bound}]
   \begin{align*}
\mathbb{E}\left|g^*_{\alpha,p}(X,S)-g^{*,\xi}_{\alpha,p}(X,S)\right|^2
&=\sum_{s\in\mathcal{S}}p_s\mathbb{E}\left|g^*_{\alpha,p}(X,s)-g^{*,\xi}_{\alpha,p}(X,s)|S=s\right|^2\\
&=\sum_{s\in\mathcal{S}}p_s\mathbb{E}\left|Q_s^*\circ F_{f^*|s}\circ f^*(X,s)-Q_s^\xi\circ F_{f^*|s}\circ f^*(X,s)|S=s\right|^2.
   \end{align*}  For $s\in \mathcal{S}$ and $x\in\mathbb{R}^d$, we have \begin{align*}
       &\quad \left|Q_s^*\circ F_{f^*|s}\circ f^*(x,s)-Q_s^\xi\circ F_{f^*|s}\circ f^*(x,s)\right|^2\\
       =&\quad \left|\min\left\{ \alpha, Q_{f^*|s}(F_{f^*|s}(f^*(x,s))\right\}\mathds{1}_{[0,p]}(F_{f^*|s}(f^*(x,s))\right.\\&+ \left.     \max\left\{\alpha,\sum\limits_{s\in\mathcal{S}}p_sQ_{f^*|s}(F_{f^*|s}(f^*(x,s))\right\}\mathds{1}_{(p,1]}(F_{f^*|s}(f^*(x,s))\right.\\&-\left.\min\left\{ \alpha, Q_{f^*|s}(F_{f^*|s}(f^*(x,s))\right\}\mathds{1}_{[0,p]}(F_{f^*|s}(f^*(x,s))\right.\\&-     \left.\max\left\{\alpha+\xi,\sum\limits_{s\in\mathcal{S}}p_sQ_{f^*|s}(F_{f^*|s}(f^*(x,s))\right\}\mathds{1}_{(p,1]}(F_{f^*|s}(f^*(x,s))\right|^2\\
      = &\quad \left|\left(\max\left\{\alpha,\sum\limits_{s\in\mathcal{S}}p_sQ_{f^*|s}(F_{f^*|s}(f^*(x,s))\right\}-\max\left\{\alpha+\xi,\sum\limits_{s\in\mathcal{S}}p_sQ_{f^*|s}(F_{f^*|s}(f^*(x,s))\right\}\right)\right|^2\mathds{1}_{(p,1]}(F_{f^*|s}(f^*(x,s))\\
      &\leq \xi^2 \mathds{1}_{(p,1]}(F_{f^*|s}(f^*(x,s)),
   \end{align*} the last inequality holds since $|\max\{u,a\}-\max\{v,a\}|\leq |u-v|.$ Thus,
   \begin{align*}
       \mathcal{E} (g^{*,\xi}_{\alpha,p})\leq \sum_{s\in\mathcal{S}}p_s\xi^2\mathbb{P}(F_{f^*|s}(f^*(X,s))>p|S=s)=\xi^2\sum_{s\in\mathcal{S}}p_s\mathbb{P}(F_{f^*|s}(f^*(X,s))>p|S=s)\leq \xi^2.
   \end{align*}
\end{proof}

\section{Proofs of unfairness and risk controls} 
\label{app:proofEmpirical}
This part of the appendix is devoted to the proofs of the theoretical guarantees of $\hat{g}^{\xi}_{\alpha,p}$ -- for short we will always write $\hat{g}, \hat{Q}_s, g^*, g^{*\xi}$ instead of $\hat{g}^{\xi}_{\alpha,p}, \hat{Q}^{\xi}_{s}, g^*_{\alpha,p} , g^{*\xi}_{\alpha,p} $, respectively. To that end, we first start by providing some technical tools.
\subsection{Technical tools}
Firstly, we will introduce the definition of dual generalized inverse of a nondecreasing left-continuous function \cite{wacker2023pleasetextitanothernotegeneralized}.
\begin{definition}[Dual Generalized inverse of a nondecreasing left-continuous function]
 Let $f$ be a real-valued, nondecreasing, left continuous function defined on the open interval $(a, b)$ where $-\infty \leq a<b \leq \infty$. Then the dual generalized inverse of $f$ is defined by
$$
f^{-1}(y)=\inf\{ x \in(a, b): f(x)> y\}=\sup \{x \in(a, b): f(x) \leq y\}.
$$
for $-\infty<y<\infty$ (with the convention $\sup (\emptyset)=a$).
\end{definition}
Now we state a central tool to get our convergence rates, the Dvoretzky–Kiefer–Wolfowitz inequality~\citep[Corollary 1]{massart1990tight}.
\begin{lemma}[Dvoretzky–Kiefer–Wolfowitz (DKW) inequality]
    \label{thm:DKW}
    Let $Z_1, \ldots, Z_n$ be \emph{i.i.d.} real valued random variables with cumulative distribution $F$.
    Let $\hat F$ be the empirical cumulative distribution of $Z_1, \ldots, Z_n$, then
    \begin{align*}
        \mathbb{E}  \left\|\hat{F} - {F} \right\|_{\infty}  \leq \sqrt{\frac{\pi}{2n}}\enspace,
    \end{align*}
    where $\left\|\hat{F} - {F} \right\|_{\infty}  = \sup_{t \in \mathbb{R}} \left| \hat{F}(t) - {F}(t) \right|$.
\end{lemma}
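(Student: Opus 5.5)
The plan is to obtain the expectation bound by integrating the tail bound of the DKW inequality with Massart's tight constant. The tail bound is the form actually proved in \citep[Corollary 1]{massart1990tight}:
\begin{align*}
    \mathbb{P}\left( \left\|\hat{F} - F\right\|_{\infty} > \varepsilon \right) \leq 2 e^{-2n\varepsilon^2}, \qquad \varepsilon > 0 .
\end{align*}

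First I will check that this tail bound holds for every $\varepsilon>0$ and for an arbitrary cumulative distribution $F$. Massart states the bound under the restriction $2e^{-2n\varepsilon^2}\le 1$, equivalently $\varepsilon \geq \sqrt{\ln 2/(2n)}$. For smaller $\varepsilon$ the right-hand side exceeds $1$, so the bound holds trivially.

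For a general (possibly discontinuous) $F$, I will reduce to the uniform case. Write $Z_i = Q_F(U_i)$ with $U_i$ i.i.d.\ $\mathcal{U}([0,1])$. Then $\hat F(t) - F(t) = \hat G(F(t)) - F(t)$, where $\hat G$ is the empirical CDF of the $U_i$, because $Q_F(u)\le t \Leftrightarrow u\le F(t)$. Taking the supremum over $t$ ranges only over the points $F(t)\in[0,1]$. Hence $\|\hat F - F\|_\infty \le \|\hat G - \mathrm{Id}\|_\infty$ almost surely, and the continuous case covers every $F$. This reduction is the only step requiring care; everything after it is calculus.

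Next I will apply the layer-cake formula to the nonnegative random variable $\|\hat F - F\|_\infty$:
\begin{align*}
    \mathbb{E}\left\|\hat F - F\right\|_\infty = \int_0^{\infty} \mathbb{P}\left(\left\|\hat F - F\right\|_\infty > \varepsilon\right) d\varepsilon \leq \int_0^\infty 2 e^{-2n\varepsilon^2}\, d\varepsilon .
\end{align*}
The Gaussian integral $\int_0^\infty e^{-a\varepsilon^2}d\varepsilon = \tfrac12\sqrt{\pi/a}$ with $a=2n$ gives $2\cdot\tfrac12\sqrt{\pi/(2n)} = \sqrt{\pi/(2n)}$, which is exactly the claimed bound.

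I expect no real obstacle, since the result is essentially a corollary of Massart's theorem. The main point to get right is stating the tail inequality in a form valid for all $\varepsilon>0$ and all $F$, so that the integral can be taken over the whole half-line. Bounding the integrand by $\min\{1, 2e^{-2n\varepsilon^2}\}$ would give a slightly sharper constant, but it is not needed.
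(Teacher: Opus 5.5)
Your proposal is correct and matches the paper, which gives no proof and simply cites Massart's Corollary~1. Integrating the two-sided tail bound $2e^{-2n\varepsilon^2}$ over $(0,\infty)$ via the layer-cake formula is exactly the step that turns that citation into $\sqrt{\pi/(2n)}$, and your quantile-transform reduction correctly covers a general, possibly discontinuous, $F$.
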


\subsection{Proof of Theorem~\ref{thm:unfairnessControl} -- Unfairness control}
\label{app:proofUnfair}
\begin{proof}
    Let $X^s \sim \mathbb{P}_{X \mid S=s}$ and $X^s$ be independent from $\mathcal{D}= \mathcal{D}_n \cup \mathcal{D}_N \cup \mathcal{E}$ --- that is the labeled, unlabeled data, and the noise variables $\varepsilon_{i s}, \varepsilon$ --- then it holds that
$$
\mathbb{P}(\hat{g}(X, S) \leq t \mid S=s)=\mathbb{P}\left(\hat{g}\left(X^s, s\right) \leq t\right) \enspace .
$$   
\paragraph{Proof of Eq~\eqref{eq:f-pf}:} For all $s\in\mathcal{S}$, we have that \begin{align*}
    \mathbb{P}(\hat{g}(X^s,s)\leq \alpha )= \mathbb{P}\left(\hat{F}_{\hat{f}|s}(\hat{f}(X^s,s)+\varepsilon) \leq \left(\hat{Q}_s\right)^{-1}(\alpha) \right) \enspace,
\end{align*} and it is clear that
\begin{align*}
   \left(\hat{Q}_s\right)^{-1}(\alpha)=\inf\left\{ y\in (p,1)|~\max\left\{\alpha+\xi,\sum\limits_{s\in\mathcal{S}}\hat{p}_s\hat{Q}_{\hat{f}|s}(y)\right\}> \alpha \right\} = \inf\left\{ y\in (p,1)\right\}=p.
\end{align*} 
Hence,
$$\mathbb{P}(\hat{g}(X^s,s)\leq \alpha )= \mathbb{P}\left(\hat{F}_{\hat{f}|s}(\hat{f}(X^s,s)+\varepsilon) \leq p \right).$$ 
Moreover, we also have $ \mathbb{P}\left(F_{\hat{f}|s}(\hat{f}(X^s,s)+\varepsilon) \leq p \right) = p$ since $F_{\hat{f}|s}(\hat{f}(X^s,s)+\varepsilon)$ is, conditional on all datasets, uniformly distributed. Then 
\begin{eqnarray*}
\left| \mathbb{P}(\hat{g}(X^s,s)\leq \alpha )- p \right|  & = & 
\left|\mathbb{P}\left(\hat{F}_{\hat{f}|s}(\hat{f}(X^s,s)+\varepsilon) \leq p \right) - \mathbb{P}\left(F_{\hat{f}|s}(\hat{f}(X^s,s)+\varepsilon) \leq p \right)\right|
\\
& \leq &
\sup_{t\in (0,1)} 
\left|\mathbb{P}\left(\hat{F}_{\hat{f}|s}(\hat{f}(X^s,s)+\varepsilon) \leq t \right) - \mathbb{P}\left(F_{\hat{f}|s}(\hat{f}(X^s,s)+\varepsilon) \leq t \right)\right|
\end{eqnarray*}
For all $s\in \mathcal{S}$, denote by $U_{n,s}(t)$ the empirical process
\begin{equation}
    \label{eq:EmpProc}
    U_{n,s}(t) = \left|\mathbb{P}\left(\hat{F}_{\hat{f}|s}(\hat{f}(X^s,s)+\varepsilon) \leq t \right) - \mathbb{P}\left(F_{\hat{f}|s}(\hat{f}(X^s,s)+\varepsilon) \leq t \right)\right|\enspace,
\end{equation}
so that we need to bound $\sup_{t\in (0,1)} U_{n,s}(t)$. For all $t\in(0,1)$ and conditionally to all datasets
\begin{eqnarray*}
    U_{n,s}(t) & \leq & \mathbb{P}\left(  \left|    F_{\hat{f}|s}(\hat{f}(X^s,s)+\varepsilon) - t\right|  \leq  \left| \hat{F}_{\hat{f}|s}(\hat{f}(X^s,s)+\varepsilon)  - F_{\hat{f}|s}(\hat{f}(X^s,s)+\varepsilon)  \right|\right) \\
    & \leq & 
    \mathbb{P}\left(  \left|    F_{\hat{f}|s}(\hat{f}(X^s,s)+\varepsilon) - t\right|  \leq  \left\| \hat{F}_{\hat{f}|s} - F_{\hat{f}|s}   \right\|_{\infty} \right)
    \leq 
     2 \left\| \hat{F}_{\hat{f}|s} - F_{\hat{f}|s}   \right\|_{\infty} \enspace,
\end{eqnarray*}
where we again used the fact that $F_{\hat{f}|s}(\hat{f}(X^s,s)+\varepsilon)$ is uniformly distributed. Take the supremum and then the expectation with respect to the data in $\mathcal{D}$ and apply the DKW inequality, recalled in Lemma~\ref{thm:DKW}, to get
\begin{equation}
    \label{eq:BoundProcess}
    \mathbf{E} \left[ \sup_{t\in (0,1)}  U_{n,s}(t) \right] \leq \frac{C}{\sqrt{p_sN}}\enspace,
\end{equation}
for some absolute constant $C >0$. Finally, applying Lemma 4.1 in \citep{GyorfyBook2002} yields the desired result.

\paragraph{Proof of Eq:(\ref{eq:e-pf}):} Due to the fact that for $t\geq \alpha$ 
\begin{align*}
  \hat{g}\left(X^s, s\right) \leq t~\Longleftrightarrow \hat{F}_{\hat{f}|s}(\hat{f}(x,s)+\varepsilon)\leq \left(\hat{Q}_s\right)^{-1}(t) \enspace,
\end{align*} 
we can get that 
\begin{align}\label{pf:1}
   &\quad  \sup_{t\geq  \alpha} \left|\mathbb{P}(\hat{g}(X^s, s) \leq t)-\mathbb{P}\left(\hat{g}(X^{s'}, s') \leq t \right)\right|\nonumber
   \\
    \leq 
    &
    \quad  \sup_{t\geq \alpha} \left|\mathbb{P}\left(\hat{F}_{\hat{f}|s}(\hat{f}(X^s,s)+\varepsilon) \leq \left(\hat{Q}_s\right)^{-1}(t) \right)-\mathbb{P}\left(\hat{F}_{\hat{f}|s'}(\hat{f}(X^{s'},s')+\varepsilon)\leq \left(\hat{Q}_{s'}\right)^{-1}(t)\right)\right|\enspace.
\end{align} 
It can be seen that, 
\begin{align*}
    \left( \hat{Q}_s \right)^{-1}(t)=\inf\left\{ y\in (0,1)|~ \hat{Q}_s (y)> t \right\} \enspace.
\end{align*} 
Due to the fact that $ \hat{Q}_s=\min\left\{\alpha,\hat{Q}_{\hat{f}|s} \right\}\mathds{1}_{[0,p]}+\max\left\{\alpha+\xi,\sum\limits_{s\in\mathcal{S}}\hat{p}_s\hat{Q}_{\hat{f}|s} \right\}\mathds{1}_{(p,1]}$ for each $s\in\mathcal{S}$, we can deduce that for all $t\geq\alpha$ \begin{align*}
    \left( \hat{Q}_s \right)^{-1}(t)=\inf\left\{ y\in (0,1)|~ \hat{Q}_s (y)> t \right\}=\inf\left\{ y\in (p,1)|~\max\left\{\alpha+\xi,\sum\limits_{s\in\mathcal{S}}\hat{p}_s\hat{Q}_{\hat{f}|s}(y)\right\}> t \right\} \enspace,
\end{align*} 
which does not depend on the value of $s$. Hence, we can deduce that for $t\geq\alpha$,
$\left( \hat{Q}_s\right)^{-1}(t)=\left( \hat{Q}_{s'} \right)^{-1}(t)$  for all $s,s'\in\mathcal{S}.$ Thus, from \eqref{pf:1} we get that 
\begin{align*}
   &\quad  \sup_{t\geq  \alpha} \left|\mathbb{P}(\hat{g}(X^s, s) \leq t)-\mathbb{P}\left(\hat{g}(X^{s'}, s') \leq t \right)\right|\nonumber
   \\
    \leq&
    \quad \sup_{t\in[p,1]} \left|\mathbb{P}\left(\hat{F}_{\hat{f}|s}(\hat{f}(X^s,s)+\varepsilon)\leq t\right)-\mathbb{P}\left(\hat{F}_{\hat{f}|s'}(\hat{f}(X^{s'},s')+\varepsilon)\leq t\right)\right|\\  \leq&
    \quad \sup_{t\in[0,1]} \left|\mathbb{P}\left(\hat{F}_{\hat{f}|s}(\hat{f}(X^s,s)+\varepsilon)\leq t\right)-\mathbb{P}\left(\hat{F}_{\hat{f}|s'}(\hat{f}(X^{s'},s')+\varepsilon)\leq t\right)\right|
    \enspace.
\end{align*} 
Here again, we use the fact that ${F}_{\hat{f}|s}(\hat{f}(X^s,s)+\varepsilon$ is uniformly distributed for all $s\in \mathcal{S}$ to write that for all $t\in (0,1)$, we have $\mathbb{P}\left({F}_{\hat{f}|s}(\hat{f}(X^s,s)+\varepsilon)\leq t\right)=t=\mathbb{P}\left({F}_{\hat{f}|s}(\hat{f}(X^{s},s)+\varepsilon)\leq t\right)$ so that using the triangle inequality, we can write
\begin{equation*}
     \mathbf{E} \sup_{t\geq  \alpha} \left|\mathbb{P}(\hat{g}(X^s, s) \leq t)-\mathbb{P}\left(\hat{g}(X^{s'}, s') \leq t \right)\right| \leq 2 \mathbf{E}  \sup_{t\in (0,1)} U_{n,s}(t) \leq \frac{2C}{\sqrt{N_s}}\enspace,
\end{equation*}
where $U_{n,s}(t) $ is defined in Equation~\eqref{eq:EmpProc} and the last equality comes from~\eqref{eq:BoundProcess}.

\end{proof}

\subsection{Proof of Theorem~\ref{thm:riskBound} -- Risk control}
\label{app:proofRisk}
Before providing the Proof of Theorem~\ref{thm:riskBound}, we give a technical lemma.
\begin{lemma}
\label{lem:lemRiskCDF}
Let $t \in \mathbb{R}$, for each $s \in \mathcal{S}$, the following holds
\begin{equation*}
\left[\left|F_{\hat{f}|s}(t) -F_{f^*|s}(t) \right|\right]  \leq C\left( \left\|\hat{f}-f^*\right\|_{\infty} + \varepsilon \right) \qquad {\it a.s.} .  
\end{equation*}
\end{lemma}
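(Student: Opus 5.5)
The argument is a sandwich of events, using the bounded density of $\nu_{f^*|s}$ from Assumption~\ref{as1}. Here $F_{\hat f|s}$ is the CDF of the jittered $\hat f(X,s)+\varepsilon_s$ given $S=s$, conditionally on the data. The $\varepsilon$ on the right-hand side is read as the jitter amplitude, bounded by $\sigma$. The constant $C$ will depend only on $\bar\lambda_s$.

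Set $\delta := \|\hat f - f^*\|_\infty + \sigma$. For every $x$ we have $|\hat f(x,s)+\varepsilon_s - f^*(x,s)| \le \delta$ almost surely, since $|\varepsilon_s|\le\sigma$. Hence, conditionally on $\mathcal{D}_n$, we get the two-sided event inclusions
\[
\{f^*(X,s) \le t-\delta\} \subset \{\hat f(X,s)+\varepsilon_s \le t\} \subset \{f^*(X,s)\le t+\delta\}.
\]
Taking probabilities under $\mathbb{P}_{X|S=s}$, with $\varepsilon_s$ independent, gives
\[
F_{f^*|s}(t-\delta)\le F_{\hat f|s}(t)\le F_{f^*|s}(t+\delta).
\]

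Subtract $F_{f^*|s}(t)$ throughout. This yields
\[
|F_{\hat f|s}(t)-F_{f^*|s}(t)| \le \max\{F_{f^*|s}(t+\delta)-F_{f^*|s}(t),\, F_{f^*|s}(t)-F_{f^*|s}(t-\delta)\}.
\]
By Assumption~\ref{as1}, $\nu_{f^*|s}$ has density $q_s\le\bar\lambda_s$. So $F_{f^*|s}$ is $\bar\lambda_s$-Lipschitz, and each increment is at most $\bar\lambda_s\delta$. This gives the claim with $C=\max_s\bar\lambda_s$, almost surely with respect to the data and the jitter. If one instead wants the bound with the realized $|\varepsilon_s|$ rather than $\sigma$, the same argument works pointwise in $\varepsilon_s$ before integrating.

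The main subtlety is bookkeeping rather than a hard estimate. One must make precise what is random in $F_{\hat f|s}$. It is a conditional CDF given $\mathcal{D}_n$, with $\varepsilon_s$ integrated out. For the inclusion to hold uniformly in $x$, the sup-norm must be the essential sup over the support of $\mathbb{P}_{X|S=s}$. Assumption~\ref{as3} (compact support) makes this finite; that finiteness is needed for the bound to be meaningful, but the inequality itself does not need it.
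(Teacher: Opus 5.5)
Your proof is correct and is essentially the paper's argument. The paper bounds the indicator difference by the band event $\{|f^*(X,s)-t|\le \|\hat f-f^*\|_\infty+\varepsilon\}$ and then applies the density bound of Assumption~\ref{as1}; you phrase the same comparison as the sandwich $F_{f^*|s}(t-\delta)\le F_{\hat f|s}(t)\le F_{f^*|s}(t+\delta)$ plus $\bar\lambda_s$-Lipschitz continuity of $F_{f^*|s}$, which gives $\bar\lambda_s$ in place of $2\bar\lambda_s$ and handles the jitter (via $\sigma$, or pointwise in $\varepsilon_s$ then integrated) more carefully than the paper, which leaves the role of $\varepsilon$ ambiguous.
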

\begin{proof}
For each $t \in \mathbb{R}$, and $s \in \mathcal{S}$, we have that
\begin{equation*}
\left|F_{\hat{f}|s}(t) -F_{f^*|s}(t)\right|  = \left|\mathbb{E}\left[\one_{\left\{\hat{f}(X,s)+ \varepsilon \leq t \right\}} -\one_{\left\{f^*(X,s) \leq t\right\}} \right]\right|
\leq \mathbb{E}\left[\one_{\left\{\left|\hat{f}(X,s)+ \varepsilon-f^*(X,s)\right| \geq \left|f^*(X,s)-t  \right|   \right\}}\right].
\end{equation*}
From the above inequality, we deduce that
\begin{equation*}
 \left|F_{\hat{f}|s}(t) -F_{f^*|s}(t)\right| \leq    
 \mathbb{E}\left[\one_{\left\{\left\|f^*-\hat{f} \right\|_{\infty}+ \varepsilon \geq \left|f^*(X,s)-t  \right|   \right\}}\right].
\end{equation*}
Finally, under Assumpion~\ref{as1}, we deduce that, conditional on $\mathcal{D}_n$, and $\mathcal{D}_N$ that
\begin{equation*}
\mathbb{P}\left(\left|f^*(X,s)-t  \right|  \leq \left\|f^*-\hat{f} \right\|_{\infty}+ \varepsilon \right) \leq C \left(\left\|f^*-\hat{f} \right\|_{\infty}+ \varepsilon \right),  
\end{equation*}
which gives the result.
\end{proof}

\begin{proof}[Proof of Theorem \ref{thm:riskBound}]
First, we remind the defintion of the predictor $\hat{g}^{\xi}$
  \begin{align}
     \hat{g}(x,s)=\hat{Q}_s\circ \hat{F}_{\hat{f}|s}\circ (\hat{f}(x,s)+\varepsilon)
\end{align} where $\varepsilon\sim \mathcal{U}[-\sigma,\sigma]$ and
\begin{align}
\hat{Q}_s(x)=\left\{\begin{array}{ll}
        \min\{\alpha,\hat{Q}_{\hat{f}|s}(x)\} &\text{if $x\in[0,p]$} , \\
\max\left\{\alpha+\xi, \sum\limits_{s\in\mathcal{S}}\hat{p}_s\hat{Q}_{\hat{f}|s}(x)\right\} &\text{if $x\in(p,1]$}.
    \end{array}\right.
\end{align}
We start with the following decomposition 
\begin{align*}
    \mathbf{E}\left[\left|g^*(X,S)-\hat{g}(X,S)\right|\right]&\leq  \mathbf{E}\left[\left|g^*(X,S)-g^{*,\xi}(X,S))\right|\right]+\mathbf{E}\left[\left|g^{*,\xi}(X,S)-\hat{g}(X,S)\right|\right]\\
    &
\end{align*}
From Equation~\eqref{eq:eqExcessRiskXi}, we then have
\begin{equation*}
 \mathbf{E}\left[\left|g^*(X,S)-\hat{g}(X,S)\right|\right] \leq \xi+\mathbf{E}\left[\left|g^{*,\xi}(X,S)-\hat{g}(X,S)\right|\right].
\end{equation*}
Now we consider the second term in the {\it r.h.s.} of the above equation.
We have 
\begin{align*}
    \E[|g^{*,\xi}(X,S)-\hat{g}(X,S)|]&=\sum_{s\in\mathcal{S}}p_s\E[|g^{*,\xi}(X,s)-\hat{g}(X,s)||S=s],
\end{align*}
and let us define
\begin{align*}
    \Bar{Q}_s(x)=\left\{\begin{array}{ll}
        \min\{\alpha,Q_{f^*|s}(x)\} &\text{if $x\in[0,p]$} , \\
\max\left\{\alpha+\xi, \sum\limits_{s\in\mathcal{S}}\hat{p}_sQ_{f^*|s}(x)\right\} &\text{if $x\in(p,1]$}.
    \end{array}\right.
\end{align*}
From the above definition, we have

\begin{align*}
    \left|g^{*,\xi}(x,s)-\Bar{Q}_s\circ F_{f^*|s}\circ f^*(x,s) \right|&\leq\left|g^{*,\xi}(x,s)-\Bar{Q}_s\circ F_{f^*|s}\circ f^*(x,s) \right|\mathds{1}_{[0,p]}(F_{f^*|s}(f^*(x,s))\\
    &\qquad + \left|g^{*,\xi}(x,s)-\Bar{Q}_s\circ F_{f^*|s}\circ f^*(x,s) \right|\mathds{1}_{(p,1]}(F_{f^*|s}(f^*(x,s))\\
    & \leq \left| \max\left\{ \alpha+\xi,\sum_{s^{\prime}\in\mathcal{S}}p_{s^{\prime}}Q_{f^*|s^{\prime}}(F_{f^*|s^{\prime}}(f^*(x,s^{\prime}))\right\}\right.\\&\left.\qquad-\max\left\{ \alpha+\xi,\sum_{s\in\mathcal{S}}\hat{p}_{s^{\prime}}Q_{f^*|s^{\prime}}(F_{f^*|s^{\prime}}(f^*(x,s^{\prime}))\right\} \right|\\
    &\leq \sum_{s^{\prime}\in \mathcal{S}}|p_{s^{\prime}}-\hat{p_{s^{\prime}}}|\times \left|Q_{f^*|s^{\prime}}\circ F_{f^*|s^{\prime}}\circ f^*(x,s^{\prime}) \right|\\
    &\leq \frac{1}{2} \sum_{s^{\prime} \in \mathcal{S}} \underline{\lambda}_{s^{\prime}}^{-1}\left|p_{s^{\prime}}-\hat{p}_{s^{\prime}}\right| ,
\end{align*}    where the last inequality holds due tho the fact that $$ \mathbb{P}\left(\left|f^*(X, S)\right| \leq \frac{\underline{\lambda}_{s^{\prime}}^{-1}}{2} \mid S=s^{\prime}\right)=\int_{\left|f^*\left(x, s^{\prime}\right)\right| \leq \frac{\underline{\lambda}_{s^{\prime}}^{-1}}{2}} \mathbb{P}_{X \mid S=s^{\prime}}(d x)=\int_{|t| \leq \frac{\underline{\lambda}_{s^{\prime}}^{-1}}{2}} q_{s^{\prime}}(t) d t \geq \underline{\lambda}_{s^{\prime}} \int_{|t| \leq \frac{\underline{\lambda}_{s^{\prime}}^{-1}}{2}} d t=1.$$
Therefore, we deduce
\begin{align*}
    \E[|g^{*,\xi}(X,S)-\hat{g}(X,S)|]&=\sum_{s\in\mathcal{S}}p_s\E[|g^{*,\xi}(X,s)-\hat{g}(X,s)||S=s]\\
    &\leq \sum_{s\in\mathcal{S}}p_s\E[|\hat{Q}_s\circ \hat{F}_{\hat{f}|s}\circ (\hat{f}(X,s)+\varepsilon)-\hat{g}(X,s)-\Bar{Q}_s\circ F^*_{f^*|s}\circ f^*(X,s)||S=s]\\
    &\quad+\frac{1}{2} \sum_{s^{\prime} \in \mathcal{S}} \underline{\lambda}_{s^{\prime}}^{-1}\left|p_{s^{\prime}}-\hat{p}_{s^{\prime}}\right|.
\end{align*}
The above inequality yields, 
\begin{align*}
    \mathbf{E}[|g^{*,\xi}(X,S)-\hat{g}(X,S)|]
    &\leq \sum_{s\in\mathcal{S}}p_s  \mathbf{E}[|\hat{Q}_s\circ \hat{F}_{\hat{f}|s}\circ (\hat{f}(X,s)+\varepsilon)-\Bar{Q}_s\circ F^*_{f^*|s}\circ f^*(X,s)||S=s] \\ 
    &\quad+\frac{1}{2} \sum_{s^{\prime} \in \mathcal{S}} \underline{\lambda}_{s^{\prime}}^{-1}  \mathbf{E}\left[\left|p_{s^{\prime}}-\hat{p}_{s^{\prime}}\right|\right].
\end{align*}
It can be seen  that the term $\mathbf{E}\left|p_s-\hat{p}_s\right|=N^{-1} \mathbf{E}\left|N p_s-V\right|$, where $V$ is the binomial random variable with parameters $\left(N, p_s\right)$, thus using the Cauchy-Schwarz inequality we can write $\mathbf{E}\left|p_s-\hat{p}_s\right| \leq N^{-1} \sqrt{\operatorname{Var}(V)}=\sqrt{p_s\left(1-p_s\right) / N}$ and the above bound reads as
\begin{align}\label{eq:eqRisk0}
          \mathbf{E}[|g^{*,\xi}(X,S)-\hat{g}(X,S)|]
    &\leq \sum_{s\in\mathcal{S}}p_s  \mathbf{E}[|\hat{Q}_s\circ \hat{F}_{\hat{f}|s}\circ (\hat{f}(X,s)+\varepsilon)-\Bar{Q}_s\circ F^*_{f^*|s}\circ f^*(X,s)||S=s]\nonumber \\
    &\quad +\sqrt{|\mathcal{S}|} N^{-1 / 2}
\end{align}

Now, we consider the first term in {\it r.h.s.} of the above equation. 
The following decomposition holds
\begin{multline*}
\E[|\hat{Q}_s\circ \hat{F}_{\hat{f}|s}\circ (\hat{f}(X,s)+\varepsilon)-\Bar{Q}_s\circ F^*_{f^*|s}\circ f^*(X,s)||S=s] \\
   =\E\left[\left| \min\left\{\alpha,\hat{Q}_{\hat{f}|s}(\hat{F}_{\hat{f}|s}(\hat{f}(X,s)+\varepsilon)\right\}\mathds{1}_{[0,p]}(\hat{F}_{\hat{f}|s}(\hat{f}(X,s)+\varepsilon)\right.\right.\\
   \left.+ \max\left\{\alpha+\xi,\sum_{s'\in\mathcal{S}}\hat{p}_s\hat{Q}_{\hat{f}|s'}(\hat{F}_{\hat{f}|s'}(\hat{f}(X,s')+\varepsilon)\right\}\mathds{1}_{(p,1]}(\hat{F}_{\hat{f}|s}(\hat{f}(X,s)+\varepsilon)\right.\\
    -\left. \min\left\{\alpha,Q_{f^*|s'}({F}_{f^*|s}(f^*(X,s))\right\}\mathds{1}_{[0,p]}(F_{f^*|s}(f^*(X,s))\right.\\
    \left.\left.- \max\left\{\alpha+\xi,\sum_{s'\in\mathcal{S}}\hat{p}_{s'}Q_{f^*|s'}(F_{f^*|s'}({f}^*(X,s))\right\}\mathds{1}_{(p,1]}(F_{f^*|s'}(f^*(X,s))\right||S=s\right].\\  
\end{multline*}
Therefore, we deduce
\begin{multline*}
\E[|\hat{Q}_s\circ \hat{F}_{\hat{f}|s}\circ (\hat{f}(X,s)+\varepsilon)-\Bar{Q}_s\circ F^*_{f^*|s}\circ f^*(X,s)||S=s] \leq \\
\E\left[\left|\min\left\{\alpha,\hat{Q}_{\hat{f}|s}(\hat{F}_{\hat{f}|s}(\hat{f}(X,s)+\varepsilon)\right\}-\min\left\{\alpha,Q_{f^*|s'}({F}_{f^*|s}(f^*(X,s))\right\}\right|\mathds{1}_{[0,p]}(\hat{F}_{\hat{f}|s}(\hat{f}(X,s)+\varepsilon)|S=s \right]\\
    +\E\left[\left|\min\left\{\alpha,Q_{f^*|s'}({F}_{f^*|s}(f^*(X,s))\right\}\right|\times \left|\mathds{1}_{[0,p]}(\hat{F}_{\hat{f}|s}(\hat{f}(X,s)+\varepsilon)-\mathds{1}_{[0,p]}(F_{f^*|s}(f^*(X,s)) \right||S=s\right]
    + \\\E \left[\left| \max\left\{\alpha+\xi,\sum_{s'\in\mathcal{S}}p_s\hat{Q}_{\hat{f}|s'}(\hat{F}_{\hat{f}|s'}(\hat{f}(X,s)+\varepsilon)\right\}-\max\left\{\alpha+\xi,\sum_{s'\in\mathcal{S}}\hat{p}_{s'}Q_{f^*|s'}(F_{f^*|s'}({f}^*(X,s))\right\}\right| \mathds{1}_{(p,1]}(\hat{F}_{\hat{f}|s}(\hat{f}(X,s)+\varepsilon)|S=s\right]\\
 +\E\left[\left|\max\left\{\alpha+\xi,\sum_{s'\in\mathcal{S}}\hat{p}_{s'}Q_{f^*|s'}(F_{f^*|s'}(\hat{f}(X,s'))\right\}\right|\times \left|\mathds{1}_{(p,1])}(\hat{F}_{\hat{f}|s}(\hat{f}(X,s)+\varepsilon)-\mathds{1}_{(p,1]}(F_{f^*|s}(f^*(X,s)) \right||S=s\right].
\end{multline*}
From the above inequality, we get
\begin{multline*}
\E[|\hat{Q}_s\circ \hat{F}_{\hat{f}|s}\circ (\hat{f}(X,s)+\varepsilon)-\Bar{Q}_s\circ F^*_{f^*|s}\circ f^*(X,s)||S=s] \leq   
 \mathbf{E}\left[\left|\hat{Q}_{\hat{f}|s}(\hat{F}_{\hat{f}|s}(\hat{f}(X,s)+\varepsilon)-Q_{f^*|s}({F}_{f^*|s}(f^*(X,s))\right||S=s \right] \\+\left|\min\left\{\alpha,\underline{\lambda}_{s}^{-1}\right\}\right|\mathbf{E}\left[ \left|\mathds{1}_{[0,p]}(\hat{F}_{\hat{f}|s}(\hat{f}(X,s)+\varepsilon)-\mathds{1}_{[0,p]}(F_{f^*|s}(f^*(X,s)) \right||S=s\right]\\
   +\sum_{s'\in\mathcal{S}}p_{s'}\mathbf{E} \left[ \left| \hat{Q}_{\hat{f}|s'}(\hat{F}_{\hat{f}|s'}(\hat{f}(X,s)+\varepsilon)-\sum_{s'\in\mathcal{S}}Q_{f^*|s'}(F_{f^*|s'}({f}^*(X,s))\right| |S=s\right]\\
 +\max\left\{\alpha+\xi,\max_{s\in\mathcal{S}}\underline{\lambda}_{s^{\prime}}^{-1}\right\}\mathbf{E}\left[ \left|\mathds{1}_{(p,1])}(\hat{F}_{\hat{f}|s}(\hat{f}(X,s)+\varepsilon)-\mathds{1}_{(p,1]}(F_{f^*|s}(f^*(X,s)) \right||S=s\right].
\end{multline*}

Thus 
\begin{align*}
&\mathbf{E}[|\hat{Q}_s\circ \hat{F}_{\hat{f}|s}\circ (\hat{f}(X,s)+\varepsilon)-\Bar{Q}_s\circ F^*_{f^*|s}\circ f^*(X,s)||S=s]\\ & \leq \mathbf{E}\left[\left|\hat{Q}_{\hat{f}|s}(\hat{F}_{\hat{f}|s}(\hat{f}(X,s)+\varepsilon)-Q_{f^*|s}({F}_{f^*|s}(f^*(X,s))\right||S=s \right] \\&+\left|\min\left\{\alpha,\underline{\lambda}_{s}^{-1}\right\}\right|\mathbf{E}\left[ \left|\mathds{1}_{[0,p]}(\hat{F}_{\hat{f}|s}(\hat{f}(X,s)+\varepsilon)-\mathds{1}_{[0,p]}(F_{f^*|s}(f^*(X,s)) \right||S=s\right]\\
  & +\sum_{s'\in\mathcal{S}}p_{s'}\mathbf{E} \left[ \left| \hat{Q}_{\hat{f}|s'}(\hat{F}_{\hat{f}|s'}(\hat{f}(X,s)+\varepsilon)-\sum_{s'\in\mathcal{S}}Q_{f^*|s'}(F_{f^*|s'}({f}^*(X,s))\right| |S=s\right]\\
 &+\mathbf{E}\left[\left|\max\left\{\alpha+\xi,\sum_{s'\in\mathcal{S}}\hat{p}_{s'}\underline{\lambda}_{s^{\prime}}^{-1}\right\}\right|\right]\mathbf{E}\left[ \left|\mathds{1}_{(p,1])}(\hat{F}_{\hat{f}|s}(\hat{f}(X,s)+\varepsilon)-\mathds{1}_{(p,1]}(F_{f^*|s}(f^*(X,s)) \right||S=s\right],\\ 
\end{align*}
which yields
\begin{align*}
 &\mathbf{E}[|\hat{Q}_s\circ \hat{F}_{\hat{f}|s}\circ (\hat{f}(X,s)+\varepsilon)-\Bar{Q}_s\circ F^*_{f^*|s}\circ f^*(X,s)||S=s]\\ &  
 \leq \mathbf{E}\left[\left|\hat{Q}_{\hat{f}|s}(\hat{F}_{\hat{f}|s}(\hat{f}(X,s)+\varepsilon)-Q_{f^*|s}({F}_{f^*|s}(f^*(X,s))\right||S=s \right] \\&+\left|\min\left\{\alpha,\underline{\lambda}_{s}^{-1}\right\}\right|\mathbf{E}\left[ \left|\mathds{1}_{[0,p]}(\hat{F}_{\hat{f}|s}(\hat{f}(X,s)+\varepsilon)-\mathds{1}_{[0,p]}(F_{f^*|s}(f^*(X,s)) \right||S=s\right]\\
  & +\sum_{s'\in\mathcal{S}}p_{s'}\mathbf{E} \left[ \left| \hat{Q}_{\hat{f}|s'}(\hat{F}_{\hat{f}|s'}(\hat{f}(X,s)+\varepsilon)-\sum_{s'\in\mathcal{S}}Q_{f^*|s'}(F_{f^*|s'}({f}^*(X,s))\right| |S=s\right]\\
 &+\max\left\{\alpha+\xi,\max_{s\in\mathcal{S}}\underline{\lambda}_{s^{\prime}}^{-1}\right\}\mathbf{E}\left[ \left|\mathds{1}_{(p,1])}(\hat{F}_{\hat{f}|s}(\hat{f}(X,s)+\varepsilon)-\mathds{1}_{(p,1]}(F_{f^*|s}(f^*(X,s)) \right||S=s\right]
\end{align*}

Now, we bound the first term in the {\it r.h.s.} of the above inequlity.
Using same arguments as in the proof of Theorem~4.4 in~\citep{chzhen2020fairwb},
we obtain
\begin{equation}
\label{eq:eqRisk1}
   \mathbf{E}\left[\left|\hat{Q}_{\hat{f}|s'}(\hat{F}_{\hat{f}|s}(\hat{f}(X,s)+\varepsilon)-Q_{f^*|s'}({F}_{f^*|s}(f^*(X,s))\right||S=s \right] \leq C \left(\sqrt{\dfrac{1}{N_s}}+ b_n^{-1/2}\right).
\end{equation}


To finish the proof, it remains to control
\begin{equation*}
{\mathbf{E}\left[ \left|\mathds{1}_{(p,1])}(\hat{F}_{\hat{f}|s}(\hat{f}(X,s)+\varepsilon)-\mathds{1}_{(p,1]}(F_{f^*|s}(f^*(X,s)) \right||S=s\right]}. 
\end{equation*}
First, we focus on the control conditional on the first sample $\mathcal{D}_n$ of
\begin{equation*}
\left| \hat{F}_{\hat{f}|s}(\hat{f}(X,s)+\varepsilon )-F_{f^*|s}(f^*(X,s)) \right|.    
\end{equation*}
The following decomposition holds
\begin{multline*}
 \hat{F}_{\hat{f}|s}(\hat{f}(X,s)+\varepsilon )-F_{f^*|s}(f^*(X,s)) = 
 \hat{F}_{\hat{f}|s}(\hat{f}(X,s)+\varepsilon)- {F}_{\hat{f}|s}(\hat{f}(X,s)+\varepsilon ) \\ + {F}_{\hat{f}|s}(\hat{f}(X,s)+\varepsilon )- F_{f^*|s}(\hat{f}(X,S)+\varepsilon) + F_{f^*|s}(\hat{f}(X,S)+\varepsilon)-F_{f^*|s}(f^*(X,s)).
\end{multline*}
We note that Assumption~\ref{as1} ensures that $F_{f^*|s}$ is $\bar{\lambda}_s$ Lipschitz.
Therefore, from the above decomposition, we  deduce that
\begin{multline*}
\left| \hat{F}_{\hat{f}|s}(\hat{f}(X,s)+\varepsilon )-F_{f^*|s}(f^*(X,s)) \right|
\leq \sup_{t \in \mathbb{R}} \left|\hat{F}_{\hat{f}|s}(t)- {F}_{\hat{f}|s}(t)\right| +
\bar{\lambda}_s \left| \hat{f}(X,S)+\varepsilon-f^*(X,S)\right| +\\ \left|{F}_{\hat{f}|s}(\hat{f}(X,s)+\varepsilon )- F_{f^*|s}(\hat{f}(X,S)+\varepsilon)\right|.  
\end{multline*}
Besides, from Lemma~\ref{lem:lemRiskCDF} we have that
\begin{equation*}
 \left|{F}_{\hat{f}|s}(\hat{f}(X,s)+\varepsilon )- F_{f^*|s}(\hat{f}(X,S)+\varepsilon)\right| \leq \left\|\hat{f}-f^*\right\|_{\infty} + \varepsilon.  
\end{equation*}
Hence, we deduce
\begin{equation*}
\left| \hat{F}_{\hat{f}|s}(\hat{f}(X,s)+\varepsilon )-F_{f^*|s}(f^*(X,s)) \right|
\leq C\left( \sup_{t \in \mathbb{R}} \left|\hat{F}_{\hat{f}|s}(t)- {F}_{\hat{f}|s}(t)\right| +  
\left\|\hat{f}-f^*\right\|_{\infty} + \varepsilon \right):=\Gamma_{N,n}.
\end{equation*}
Therefore, from the above inequality, we get
\begin{multline*}
\mathbf{E}\left[ \left|\mathds{1}_{(p,1])}(\hat{F}_{\hat{f}|s}(\hat{f}(X,s)+\varepsilon)-\mathds{1}_{(p,1]}(F_{f^*|s}(f^*(X,s)) \right||S=s\right]
\leq \\ \mathbf{E}\left[ \mathds{1}_{\left\{ \left| \hat{F}_{\hat{f}|s}(\hat{f}(X,s)+\varepsilon )-F_{f^*|s}(f^*(X,s)) \right|\geq |F_{f^*|s}(f^*(X,s))-p| \right\}}|S=s \right]  \leq \mathbf{E}\left[ \mathds{1}_{\left\{\Gamma_{n,N}\geq |F_{f^*|s}(f^*(X,s))-p| \right\}}|S=s \right] \\
\end{multline*}
Since for each $s \in \mathcal{S}$, $F_{f^*|s}$
is continuous and we have that $F_{f^*|s}(f^*(X,s))$
is distributed according to a Uniform distribution on $[0,1]$. Hence, conditional on $\mathcal{D}_n$, and $\mathcal{D}_N$, we have that
\begin{equation*}
\mathbb{P}_{X|S=s}\left(|F_{f^*|s}(f^*(X,s))-p| \leq \Gamma_{n,N} \right)   \leq C \Gamma_{n,N}.
\end{equation*}
Therefore, we deduce
\begin{equation*}
\mathbf{E}\left[ \left|\mathds{1}_{(p,1]}(\hat{F}_{\hat{f}|s}(\hat{f}(X,s)+\varepsilon)-\mathds{1}_{(p,1]}(F_{f^*|s}(f^*(X,s)) \right||S=s\right]
\leq  C\mathbb{E}\left[\sup_{t \in \mathbb{R}} \left|\hat{F}_{\hat{f}|s}(t)- {F}_{\hat{f}|s}(t)\right| +  
\left\|\hat{f}-f^*\right\|_{\infty} +\varepsilon \right].     
\end{equation*}
From DKW inequality, and Assumption~\ref{as1}, we get
\begin{equation}
\label{eq:eqRisk2}
\mathbf{E}\left[ \left|\mathds{1}_{(p,1])}(\hat{F}_{\hat{f}|s}(\hat{f}(X,s)+\varepsilon)-\mathds{1}_{(p,1]}(F_{f^*|s}(f^*(X,s)) \right||S=s\right]
\leq C\sum_{s \in \mathcal{S}}\left(\sqrt{\dfrac{1}{N_s}}+ \dfrac{\log(n)}{b_n^{1/2}}\right).   
\end{equation}


Similarly,
\begin{align*}
     \mathbf{E}\left[ \left|\mathds{1}_{[0,p]}(\hat{F}_{\hat{f}|s}(\hat{f}(X,s)+\varepsilon)-\mathds{1}_{[0,p]}(F_{f^*|s}(f^*(X,s)) \right||S=s\right]  & \leq 
C\left(\sqrt{\dfrac{1}{N_s}}+ \dfrac{\log(n)}{b_n^{1/2}}\right). 
\end{align*}
Thus, From the above inequality, Equation~\eqref{eq:eqRisk1}, ~\eqref{eq:eqRisk2}

\begin{equation*}
    \mathbf{E}[|\hat{Q}_s\circ \hat{F}_{\hat{f}|s}\circ (\hat{f}(X,s)+\varepsilon)-\Bar{Q}_s\circ F^*_{f^*|s}\circ f^*(X,s)||S=s] \leq C\sum_{s \in \mathcal{S}}\left(\sqrt{\dfrac{1}{N_s}}+ \dfrac{\log(n)}{b_n^{1/2}}\right).
\end{equation*}
Finally the above inequality, and  Equation~\eqref{eq:eqRisk0}, together with Lemma~4.1 in~\citep{GyorfyBook2002}
yields the desired result.
   
\end{proof}

\section{Additional numerical experiments}
\label{app:additional numeric}

This appendix presents additional illustrations on \texttt{CRIME},  \texttt{California Housing}, and  \texttt{Law school} datasets focusing on CDF adjustment obtained thanks to $\hat{g}_{\alpha,p}^{\xi}$.

\begin{figure}[t]
  \centering
\includegraphics[width=\linewidth]{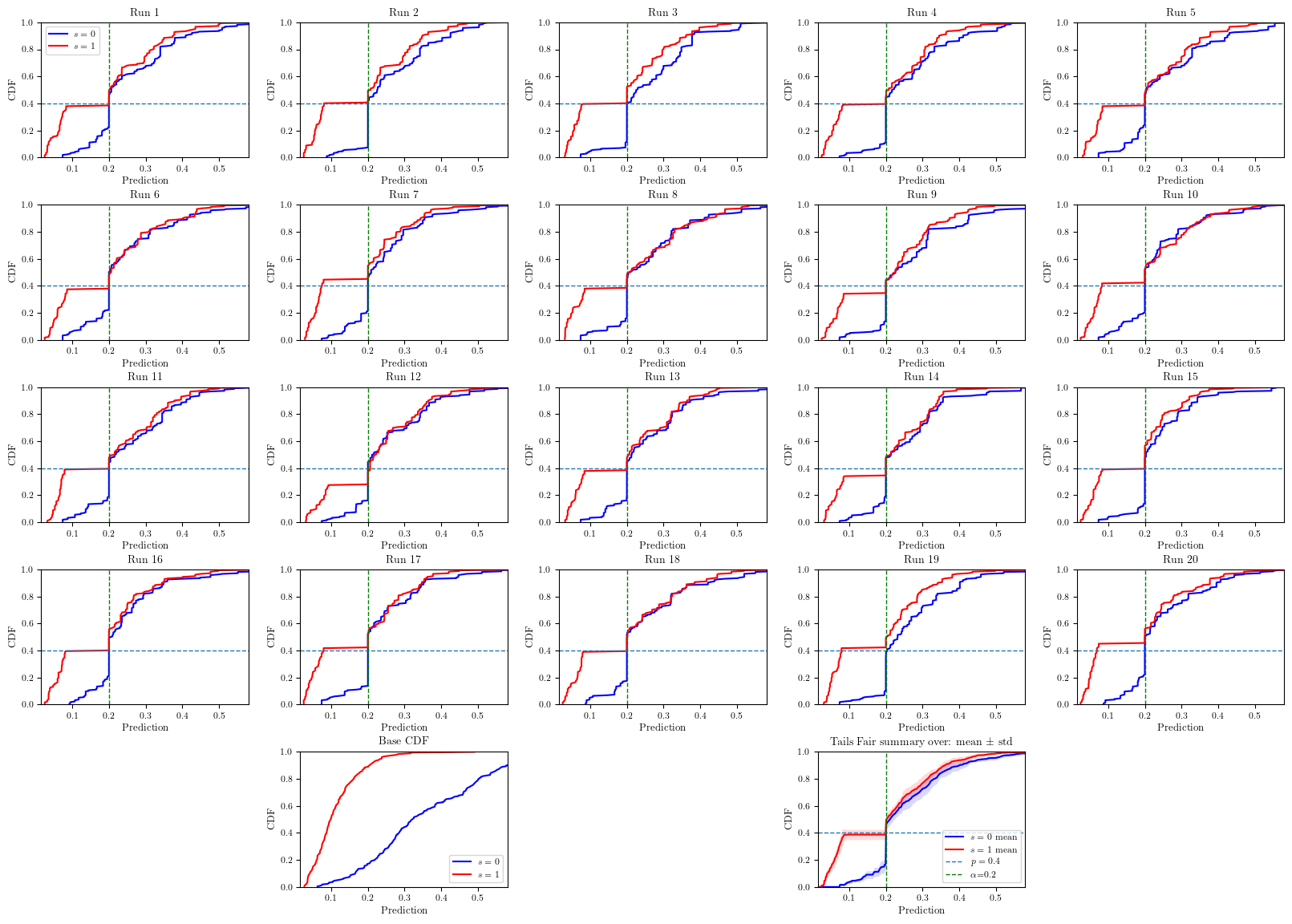}\hfill
    \caption{Empirical CDF of the predictions on the \texttt{Crime} before and after enforcing DP-tails fairness over 20 runs. The parameters are set (arbitrary) to $\alpha=0.2$, $p=0.4$. }
\label{fig:sumcrime}
\end{figure}

\begin{figure}[t]
  \centering
\includegraphics[width=\linewidth]{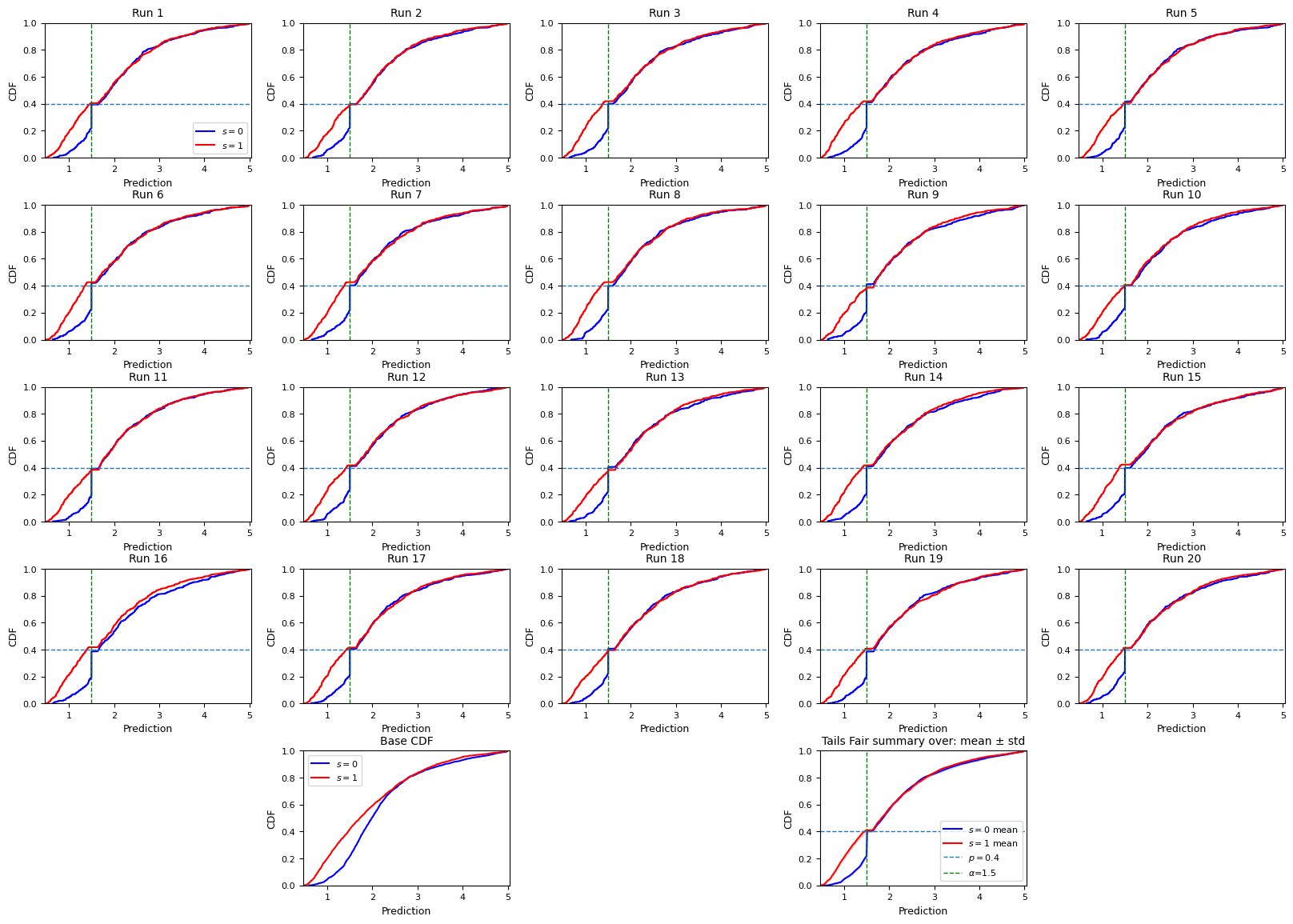}\hfill
    \caption{Empirical CDF of the predictions on the \texttt{California Housing} before and after enforcing DP-tails fairness over 20 runs. The parameters are set (arbitrary) to $\alpha=1.5$, $p=0.4$. }
\label{fig:sumcali}
\end{figure}

\begin{figure}[t]
  \centering
\includegraphics[width=\linewidth]{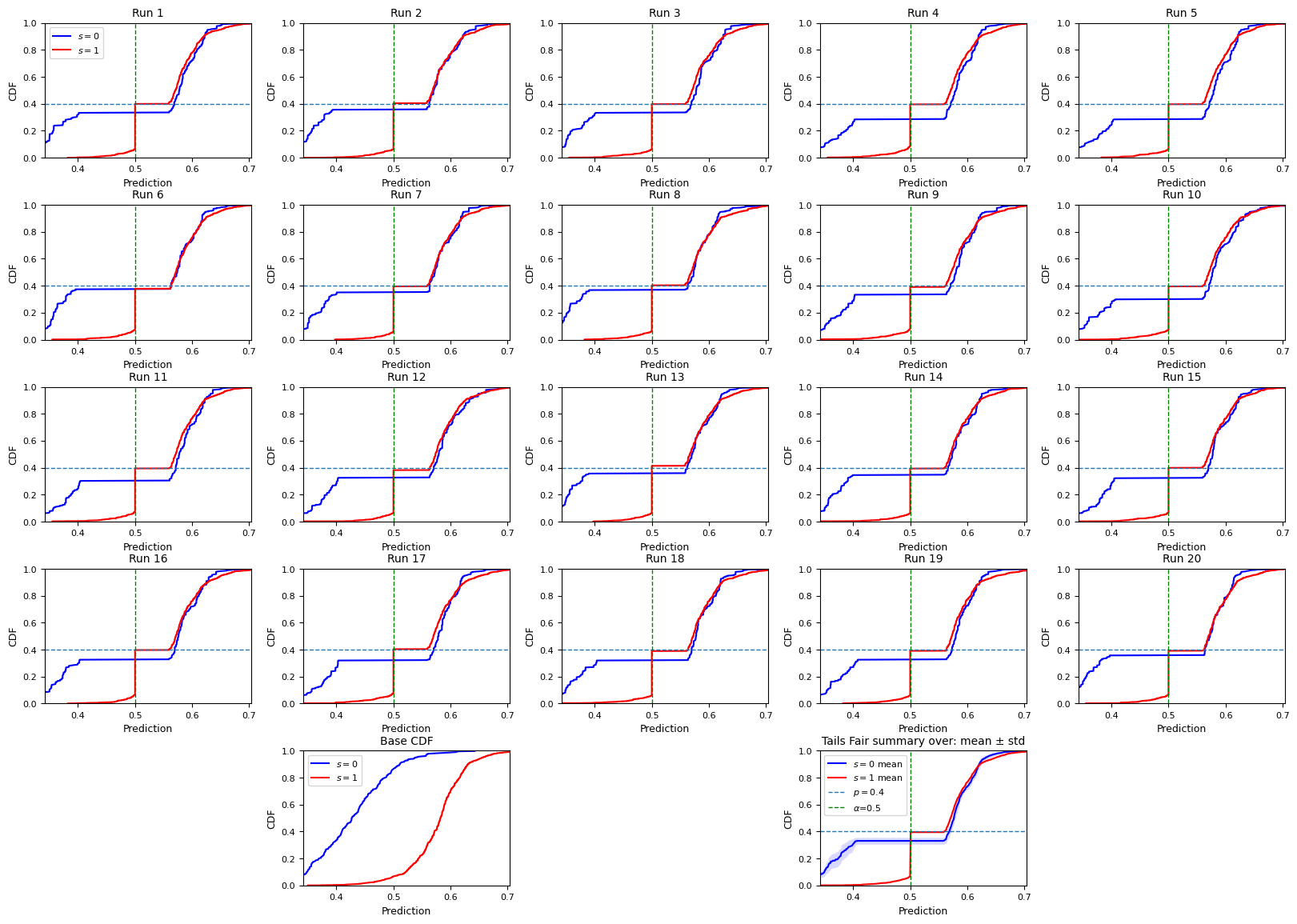}\hfill
    \caption{Empirical CDF of the predictions on the \texttt{Law school} before and after enforcing DP-tails fairness over 20 runs. The parameters are set (arbitrary) to $\alpha=0.5$, $p=0.4$. }
\label{fig:sumlaw}
\end{figure}
\end{document}